\newenvironment{skproof}{\noindent\textit{Sketch of  Proof:}}{\hfill$\blacksquare$}
\newtheorem{theorem}{Theorem}
\newtheorem{lemma}{Lemma}
\newtheorem{fact}{Fact}
\newtheorem{definition}{Definition}
\newtheorem{proposition}{Proposition}
\newtheorem{corollary}{Corollary}
\newtheorem{remark}{Remark}
\newtheorem{assumption}{Assumption}
\newtheorem{condition}{Condition}
\newcommand{\addFL}[1]{\textcolor{blue}{#1}}
\DeclarePairedDelimiter\floor{\lfloor}{\rfloor}
\newcommand{\new}[1]{{\color{blue}#1}}
\newcommand{\nm}[1]{{\color{ForestGreen} {\bf [NM: #1]}}}
\newcommand{\add}[1]{\textcolor{red}{#1}}
\DeclareMathOperator*{\argmin}{arg\,min}
\def\BibTeX{{\rm B\kern-.05em{\sc i\kern-.025em b}\kern-.08em
    T\kern-.1667em\lower.7ex\hbox{E}\kern-.125emX}}
\begin{document}
\setulcolor{red}
\setul{red}{2pt}
\setstcolor{red}

\title{Semi-Decentralized Federated Learning with Cooperative D2D Local Model Aggregations}





\author{Frank Po-Chen Lin,~\IEEEmembership{Student Member,~IEEE},  Seyyedali~Hosseinalipour,~\IEEEmembership{Member,~IEEE}, Sheikh Shams Azam, \\ Christopher G. Brinton,~\IEEEmembership{Senior~Member,~IEEE}, and Nicol\`o Michelusi, \IEEEmembership{Senior~Member,~IEEE}
\thanks{F. Lin, S. Hosseinalipour, S. Azam, and C. Brinton are with the School of Electrical and Computer Engineering, Purdue University, IN, USA. e-mail: \{lin1183,hosseina,azam1,cgb\}@purdue.edu. Part of Brinton's research has been funded by ONR under grant N00014-21-1-2472 and NSC under grant W15QKN-15-9-1004.}
\thanks{N. Michelusi is with the School of Electrical, Computer and Energy Engineering, Arizona State University, AZ, USA. e-mail: nicolo.michelusi@asu.edu. 
{Michelusi's work was supported in part by the National Science Foundation under grants CNS-1642982 and CNS-2129015.}}
\thanks{An abridged version of this paper has been published in IEEE Globecom 2021~\cite{fedStar}.}
}
\maketitle



\begin{abstract}
Federated learning has emerged as a popular technique for distributing machine learning (ML) model training across the wireless edge. In this paper, we propose
\textit{two timescale hybrid federated learning} ({\tt TT-HF}), a semi-decentralized learning architecture that combines the conventional device-to-server communication paradigm for federated learning with device-to-device (D2D) communications for model training. In {\tt TT-HF}, during each global aggregation interval, devices (i) perform multiple stochastic gradient descent iterations on their individual datasets, and (ii) aperiodically engage in consensus procedure
of their model parameters through cooperative, distributed D2D communications within local clusters. With a new general definition of gradient diversity, we formally study the convergence behavior of {\tt TT-HF}, resulting in new convergence bounds for distributed ML. We leverage our convergence bounds to develop an adaptive control algorithm that tunes the step size, D2D communication rounds, and global aggregation period of {\tt TT-HF} over time to target a sublinear convergence rate of $\mathcal{O}(1/t)$ while minimizing network resource utilization. Our subsequent experiments
demonstrate that {\tt TT-HF} significantly outperforms the current art in federated learning in terms of model accuracy and/or network energy consumption in different scenarios where local device datasets exhibit statistical heterogeneity. Finally, our numerical evaluations demonstrate robustness against outages caused by fading channels, as well favorable performance with non-convex loss functions.
\end{abstract}
\begin{IEEEkeywords} Device-to-device (D2D) communications, {peer-to-peer (P2P) learning}, fog learning, cooperative consensus  {formation}, semi-decentralized federated learning. 
\end{IEEEkeywords}

\section{Introduction}
\noindent Machine learning (ML) techniques have exhibited widespread successes in applications ranging from computer vision to natural language processing~\cite{ghosh2019understanding,kang2016object,goldberg2017neural}. Traditionally, ML model training has been conducted in a centralized manner, e.g., at datacenters, where the computational infrastructure and dataset required for training coexist. In many applications, however, the data required for model training is generated at devices which are distributed across the edge of communications networks. As the amount of data on each device increases, uplink {transmission of local datasets} to a main server becomes bandwidth-intensive and time consuming, which is prohibitive in latency-sensitive applications~\cite{Chiang}. Common examples include object detection for autonomous vehicles~\cite{wu2017squeezedet} and keyboard next-word prediction on smartphones~\cite{hard2018federated}, each requiring rapid analysis of data generated from embedded sensors. Also, in many applications, end users may not be willing to share their datasets with a server due to privacy concerns~\cite{azam2020towards}.
 

\subsection{Federated Learning at the Wireless Edge}
Federated learning has emerged as a popular distributed ML technique for addressing these bandwidth and privacy challenges~\cite{mcmahan2017communication,konevcny2016federated,yang2019federated}. A schematic of its conventional architecture is given in Fig.~\ref{fig:simpleFL}: in each iteration, each device trains a local model based on its own dataset, often using (stochastic) gradient descent. The devices then upload their local models to the server, which aggregates them into a global model, often using a weighted average, and synchronizes the devices with this new model to initiate the next round of local training. 
 
Although widespread deployment of federated learning is desired~\cite{lim2020federated,bennis2020edge}, its conventional architecture in Fig.~\ref{fig:simpleFL} poses challenges for the wireless edge: the devices comprising the Internet of Things (IoT) may exhibit significant heterogeneity in their computational resources (e.g., a high-powered drone compared to a low-powered smartphone)
\cite{tu2020network}; additionally, the devices may exhibit varying proximity to the server (e.g., varying distances from smartphones to the base station in a cell), which may cause significant energy consumption
for upstream data transmission \cite{yang2019energy}.

{To mitigate the cost of uplink and downlink transmissions, 
local model training coupled with periodic but infrequent global aggregations has been proposed~\cite{wang2019adaptive,frank2020delay,tu2020network}.
}
Yet, the local datasets may exhibit significant heterogeneity in their statistical distributions~\cite{hosseinalipour2020federated}, resulting in learned models that may be
biased towards local datasets, hence degrading the global model accuracy
\cite{wang2019adaptive}.


In this setting, motivated by the need to mitigate divergence across the local models, we study the problem of \textit{resource-efficient federated learning across heterogeneous local datasets at the wireless edge}. A key technology that we incorporate into our approach is device-to-device (D2D) communications among edge devices{, which is a localized version of peer-to-peer (P2P) among direct physical connections}. D2D communications is being envisioned in fog computing and IoT systems through 5G wireless~\cite{Chiang,hosseinalipour2020federated,chen2020wireless}; indeed, it is expected that 50\% of all network connections will be machine-to-machine by 2023~\cite{hosseinalipour2020federated}. Through D2D, we
design a consensus mechanism to mitigate model divergence via low-power communications among nearby devices. We call our approach \textit{two timescale hybrid federated learning} ({\tt TT-HF}), since it (i) involves a hybrid between device-to-device and device-to-server communications, and (ii) incorporates two timescales for model training: iterations of stochastic gradient descent at individual devices, and rounds of cooperative D2D communications within clusters.
By inducing consensus in the local models within a cluster of devices, {\tt TT-HF} promises resource efficiency, as we will show both theoretically and by simulation,
since only one device from the cluster needs to upload the \emph{cluster model} to the server during global aggregation, as opposed to the conventional federated learning architecture, where most of the devices are required to upload their local models~\cite{mcmahan2017communication}.
Specifically, during the local update interval in federated learning, devices can systematically
share their model parameters with others in their neighborhood to form a distributed consensus among each cluster of edge devices. Then, at the end of each local training interval, assuming that each device's model now reflects the consensus of its cluster, the main server can randomly sample just one device from each cluster for the global aggregation. We call our approach \textit{two timescale hybrid federated learning} ({\tt TT-HF}), since it (i) involves a hybrid between device-to-device and device-to-server communications, and (ii) incorporates two timescales for model training: iterations of gradient descent at individual devices, and rounds of cooperative D2D communications within clusters.

\begin{figure}[t]
\includegraphics[width=\linewidth]{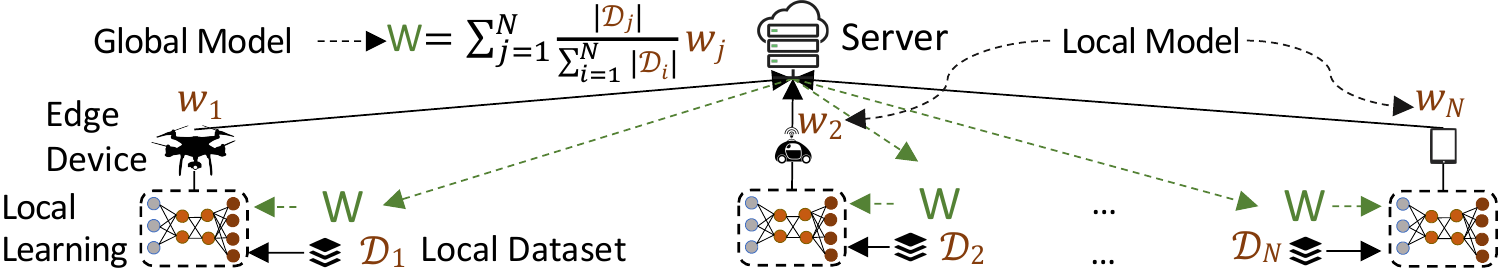}
\centering
\caption{
Conventional federated learning. In each training round, devices perform local model updates based on local datasets, followed by an aggregation at the main server to compute the global model, which is broadcast to the devices for the next round of local updates.}
\label{fig:simpleFL}
\vspace{-5mm}
\end{figure}
{\tt TT-HF} migrates from the ``star'' topology of conventional federated learning in Fig.~\ref{fig:simpleFL} to a semi-decentralized learning architecture, {shown in Fig.~\ref{fig2},}
that includes local topologies between edge devices, as advocated in the new ``fog learning'' paradigm~\cite{hosseinalipour2020federated}.  
In doing so, we must carefully consider the relationships between {device-level stochastic gradient updates, cluster-level consensus procedure, and network-level global aggregations.}
We quantify these relationships in this work, and use them to tune the lengths of each local update and consensus period. As we will see, the result is a version of federated learning which optimizes the global model convergence characteristics while minimizing the uplink communication requirement in the system.

\subsection{Related Work} \label{sec:related}
A multitude of works on federated learning have emerged in the past few years,
{addressing various aspects, such as} communication and computation constraints of wireless devices~\cite{tran2019federated,chen2019joint,yang2019energy,yang2020federated}, multi-task learning~\cite{smith2017federated,corinzia2019variational,9006060}, and personalized model training~\cite{jiang2019improving,fallah2020personalized}. We refer the reader to e.g.,~\cite{rahman2020survey,li2020federated} for comprehensive surveys of the federated learning literature; in this section, we will focus on the works addressing resource efficiency, statistical data heterogeneity, and cooperative learning, which is the main focus of this paper.


In terms of wireless communication efficiency, several works have investigated the impact of performing multiple rounds of local gradient updates in-between consecutive global aggregations~\cite{haddadpour2019convergence,wang2019adaptive}, including optimizing the aggregation period according to a total resource budget~\cite{wang2019adaptive}. To further reduce the demand for global aggregations,~\cite{liu2020client} proposed a hierarchical system model for federated learning where edge servers are utilized for partial global aggregations. 
Model quantization~\cite{amiri2020federated} and sparsification~\cite{sattler2019robust} techniques have also been proposed. 
{As compared to above works, we propose a semi-decentralized architecture, where D2D communications are used to exchange model parameters among the nodes in conjunction with global aggregations. We show that our framework can reduce the frequency of global aggregations and result in network resource savings.}

\begin{figure}[t]
  \centering
    \includegraphics[height=25mm,width=88mm]{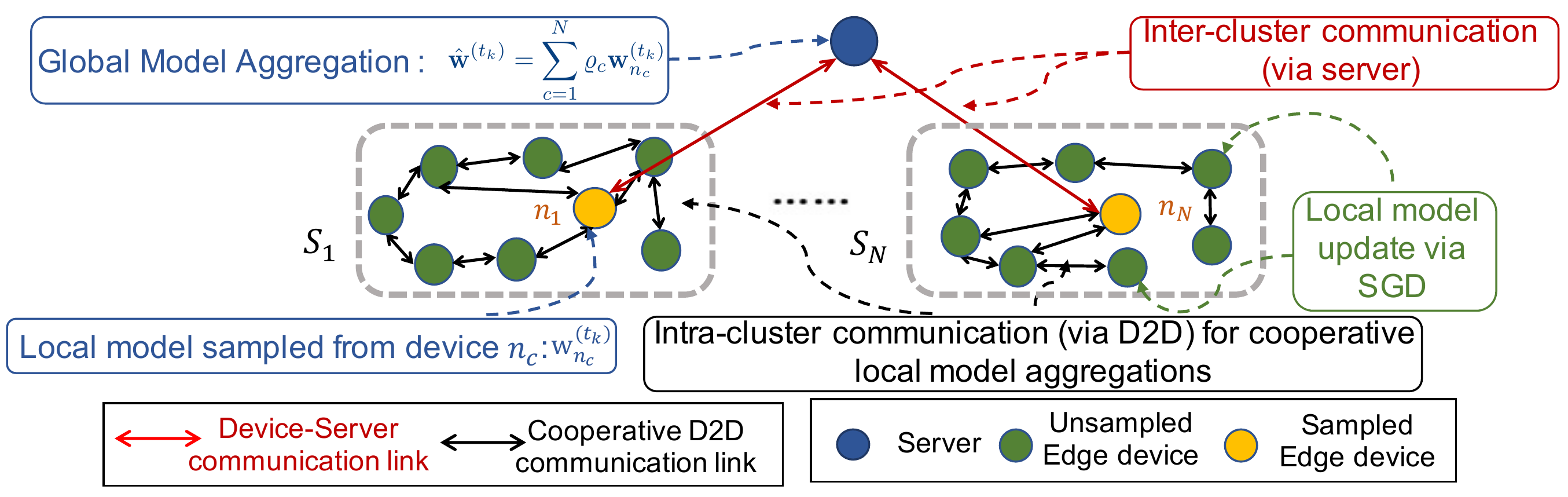}
     \caption{
     Network architecture of semi-decentralized federated learning. Edge devices form local cluster topologies based on their D2D communication capability. Cooperative local model aggregations among these clusters occur using D2D communications in between global aggregations conducted by the server.} 
     \label{fig2}
     \vspace{-5mm}
\end{figure}

Other works have considered improving model training in the presence of heterogeneous data among the devices via raw data sharing~\cite{9149323,wang2021device,tu2020network,zhao2018federated}. In~\cite{9149323}, the authors propose uploading portions of the local datasets to the server, which is then used to augment global model training. 
The works~\cite{zhao2018federated,wang2021device,tu2020network} mitigate local data heterogeneity by enabling the server to share a portion of its aggregated data among the devices \cite{zhao2018federated},
or by optimizing D2D data offloading~\cite{wang2021device,tu2020network}. However, raw data sharing may suffer from privacy concerns or bandwidth limitations. {In our framework, we exploit D2D communications to exchange model parameters among the devices, which alleviates such concerns.}
  
Different from the above works, we propose a methodology that addresses the communication efficiency and data heterogeneity challenges simultaneously. To do this, we introduce distributed cooperative learning among devices into the local update process -- {as advocated} recently~\cite{hosseinalipour2020federated} -- resulting in a novel system architecture with D2D-augmented learning. In this regard, the most relevant work is~\cite{hosseinalipour2020multi}, which also studies cluster-based consensus procedure between global aggregations in federated learning. Different from~\cite{hosseinalipour2020multi}, we consider the case where (i) devices may conduct multiple (stochastic) gradient iterations between global aggregations, (ii) the global aggregations are aperiodic, and (iii) consensus procedure among the devices may occur aperiodically during each global aggregation. We further introduce a new metric of gradient diversity that extends the previous existing definition in literature. Doing so leads to a more complex system model, which we analyze to provide improvements to resource efficiency and model convergence. Consequently, the techniques used in the convergence analysis and the bound obtained differ significantly from~\cite{hosseinalipour2020multi}. There is also an emerging set of works on fully decentralized (server-less) federated learning~\cite{9154332,8950073,hu2019decentralized,lalitha2019peer}.
{However, these architectures require a well-connected communication graph among all the devices in the network, which may not be scalable to large-scale systems where devices from various regions/countries are involved in ML model training.}
Our work can be seen as intermediate between the star topology {assumed in conventional federated learning} and fully decentralized {architectures}, and constitutes a {novel} semi-decentralized learning architecture
{that mitigates the cost of resource intensive uplink communications of conventional server-based methods over star topologies, achieved via local low-power D2D communications, while improving scalability over fully decentralized server-less architectures.}

Finally, note that there is a well-developed literature on consensus-based optimization, e.g.,~\cite{yuan2011distributed,shi2014linear,consensus2009quantize,xiao2004fast}. Our work employs the distributed average consensus technique and exploits that in a new semi-decentralized machine learning architecture
and contributes new results on distributed ML to this literature.

\subsection{Outline and Summary of Contributions}

\begin{itemize}[leftmargin=3mm]
\item We propose \textit{two timescale hybrid federated learning} ({\tt TT-HF}), which  augments the conventional federated learning architecture with aperiodic consensus procedure of models within local device clusters and aperiodic global aggregations by the server  (Sec. \ref{sec:tthf}).


\item 
We propose a new model of gradient diversity, and theoretically investigate the convergence behavior of {\tt TT-HF} through techniques including coupled dynamic systems (Sec. \ref{sec:convAnalysis}). Our bounds quantify how properties of the ML model, device datasets, consensus process, and global aggregations impact the convergence speed of {\tt TT-HF}. In doing so, we obtain a set of conditions under which {\tt TT-HF} converges at a rate of $\mathcal{O}(1/t)$, similar to centralized {stochastic} gradient descent.


\item We develop an adaptive control algorithm for {\tt TT-HF} that tunes the global aggregation intervals, the rounds of D2D performed by each cluster, and the learning rate over time to minimize
{a trade-off between}
energy consumption, delay, {and model accuracy} (Sec. \ref{Sec:controlAlg}). This control algorithm obtains the $\mathcal{O}(1/t)$ convergence rate by including our derived conditions as constraints in the optimization.



\item Our subsequent experiments on popular learning tasks (Sec. \ref{sec:experiments}) verify that {\tt TT-HF} outperforms {federated learning} with infrequent global aggregations, which is commonly used in literature, substantially in terms of resource consumption and/or training time over D2D-enabled wireless devices. They also confirm that the control algorithm is able to address resource limitations and data heterogeneity across devices by adapting the local and global aggregation periods.
\end{itemize}

{We conclude in Sec.~\ref{sec:conclude} with some concluding remarks.}

\section{System Model and Learning Methodology}
\label{sec:tthf}
\noindent In this section, we first describe our edge network system model of D2D-enabled clusters (Sec.~\ref{subsec:syst1}) and formalize the ML task for the system (Sec.~\ref{subsec:syst2}). Then, we develop our two timescale hybrid federated learning algorithm, {\tt TT-HF} (Sec.~\ref{subsec:syst3}).

\subsection{Edge Network System Model}
\label{subsec:syst1}
We consider model learning over the network architecture depicted in Fig.~\ref{fig2}. The network consists of an edge server (e.g., at a base station) and $I$ edge devices gathered by the set $\mathcal{I}=\{1,\cdots,I\}$. 
We consider a \textit{cluster-based representation} of the edge, where the devices are partitioned into $N$ sets $\mathcal{S}_1,\cdots,\mathcal{S}_N$. Cluster $\mathcal{S}_c$ contains $s_c = |\mathcal{S}_c|$ edge devices, where $\sum_{c=1}^{N}s_c=I$.
We assume that the clusters are formed based on the ability of devices to conduct low-energy D2D communications, e.g., geographic proximity. Thus, one cluster may be a fleet of drones while another is a collection of local IoT sensors. In general, we do not place any restrictions on the composition of devices within a cluster, as long as they possess a common D2D protocol~\cite{hosseinalipour2020federated} and communicate with a common server.

For edge device $i\in \mathcal S_c$, we let $\mathcal{N}_i\subseteq S_c$ denote the set of its D2D neighbors, determined based on the transmit power of the nodes, the channel conditions between them, and their physical distances (cluster topology is evaluated numerically in Sec.~\ref{sec:experiments} based on a wireless communications model). We assume that D2D communications are bidirectional, i.e., $i \in \mathcal{N}_{i'}$ if and only if ${i'}\in{\mathcal N}_i$, $\forall i,i'\in\mathcal{S}_c$. Based on this, we associate a network graph $G_c=(\mathcal{S}_c, \mathcal E_c)$ to each cluster,
{where $\mathcal E_c$ denotes}
 the set of edges: $(i,{i'})\in \mathcal E_c$ if and only if $i,i'\in\mathcal{S}_c$ and $i \in \mathcal {N}_{i'}$.

The model training is carried out through a sequence of global aggregations indexed by $k=1,2,\cdots$, as will be explained in Sec.~\ref{subsec:syst3}. Between global aggregations, the edge devices $i \in \mathcal{S}_c$ will participate in cooperative consensus procedure with their neighbors $i' \in \mathcal{N}_i$. Due to the mobility of the devices, the topology of each cluster (i.e., the number of nodes and their positions inside the cluster) can change over time, although we will assume this evolution is slow compared to a the time in between two global aggregations.


The model learning topology in this paper (Fig.~\ref{fig2}) is a distinguishing feature compared to the conventional federated learning star topology (Fig.~\ref{fig:simpleFL}). Most existing literature is based on Fig.~\ref{fig:simpleFL}, e.g.,~\cite{tran2019federated,chen2019joint,yang2019energy,yang2020federated,smith2017federated,corinzia2019variational,9006060,jiang2019improving}, where devices only communicate with the edge server, while the rest consider fully decentralized (server-less) architectures~\cite{9154332,8950073,hu2019decentralized,lalitha2019peer}.

\subsection{Machine Learning Task Model} \label{subsec:syst2}
Each edge device $i$ owns a dataset $\mathcal{D}_i$ with $D_i=|\mathcal{D}_i|$ data points. Each data point $(\mathbf x,y)\in\mathcal D_i$ consists of an $m$-dimensional feature vector $\mathbf x\in\mathbb R^m$ and a label $y\in \mathbb{R}$.
We let $\hat{f}(\mathbf x,y;\mathbf w)$ denote the \textit{loss} associated with the 
data point $(\mathbf x,y)$ based on \textit{learning model parameter vector} $\mathbf w \in \mathbb{R}^M$, where $M$ denotes the dimension of the model. For example, in linear regression, $\hat{f}(\mathbf x,y;\mathbf w) = \frac{1}{2}(y-\mathbf w^\top\mathbf x)^2$. The \textit{local loss function} at device $i$ is defined as
\begin{align}\label{eq:1}
    F_i(\mathbf w)=\frac{1}{D_i}\sum\limits_{(\mathbf x,y)\in\mathcal D_i}
    \hat{f}(\mathbf x,y;\mathbf w).
\end{align}

We define the \textit{cluster loss function} for $\mathcal{S}_c$ as the average local loss across the cluster,
\begin{align}\label{eq:c}
    \hat F_c(\mathbf w)=
    \sum\limits_{i\in\mathcal{S}_c} \rho_{i,c} F_i(\mathbf w),
\end{align}
where $\rho_{i,c}{=1/s_c}$
is the weight associated with edge device $i\in \mathcal{S}_c$ 
{within its cluster.}
The \textit{global loss function} is then defined as the average loss across the clusters,
\begin{align} \label{eq:2}
    F(\mathbf w)=\sum\limits_{c=1}^{N} \varrho_c \hat F_c(\mathbf w),
\end{align}
{weighted by the relative cluster size $\varrho_c= s_c \big(\sum_{c'=1}^N s_{c'} \big)^{-1}$.}
The weight of each edge node $i \in\mathcal{S}_c$ relative to the network can thus be obtained as $\rho_i=\varrho_c\cdot\rho_{i,c}= 1 / I$, meaning each node
{contributes equally to the global loss function.}
The goal of the ML model training is to find the optimal model parameters $\mathbf w^*\in \mathbb{R}^M$ for $F$: 
\begin{align}
    \mathbf w^* = \mathop{\argmin_{\mathbf w \in \mathbb{R}^M} }F(\mathbf w).
\end{align}

\begin{remark}
An alternative way of defining~\eqref{eq:2} is as an average performance over the datapoints, i.e., $F(\mathbf{w})=\sum_{i=1}^{I} \frac{D_i F_i(\mathbf w)}{\sum_j D_j}$~\cite{tu2020network,wang2019adaptive}. Both approaches can be justified: our formulation promotes equal performance across the devices, at the expense of giving devices with lower numbers of datapoints the same priority in the global model. Our analysis can be readily extended to this other formulation too, in which case the distributed consensus algorithms introduced in Sec.~\ref{subsec:syst3} would take a weighted form instead.
\end{remark}


In the following, we make some standard assumptions~\cite{haddadpour2019convergence,wang2019adaptive,chen2019joint,friedlander2012hybrid,tran2019federated,8851249,9261995,9337227,yang2019energy,9148815} on the ML loss function that also imply the existence and uniqueness of $\mathbf w^*$. Then, we define a new generic metric to measure the statistical heterogeneity/degree of non-i.i.d. across the local datasets:
\begin{assumption}\label{Assump:SmoothStrong}
\label{beta}
The following assumptions are made throughout the paper:
\begin{itemize}[leftmargin=3mm]
\item  \textbf{Strong convexity}:
 $F$ is $\mu$-strongly convex, i.e.,\footnote{Convex ML loss functions, e.g., squared SVM and linear regression, are implemented with a regularization term in practice to improve convergence and avoid model overfitting, which makes them strongly convex~\cite{friedlander2012hybrid}.} 
 $\forall { \mathbf w_1,\mathbf w_2}$,
\begin{equation} \label{eq:11} 
\!\!\!F(\mathbf w_1){\geq}F(\mathbf w_2){+}\nabla F(\mathbf w_2)^\top(\mathbf w_1{-}\mathbf w_2)
    {+}\frac{\mu}{2}\Big\Vert\mathbf w_1{-}\mathbf w_2\Big\Vert^2.
\end{equation}
    \item  \textbf{Smoothness:} $F_i$ is $\beta$-smooth $\forall i$, i.e., 
    \begin{align}
\Big\Vert \nabla F_i(\mathbf w_1)-\nabla F_i(\mathbf w_2)\Big\Vert \leq & \beta\Big\Vert \mathbf w_1-\mathbf w_2 \Big\Vert,~\forall i, \mathbf w_1, \mathbf w_2,
 \end{align}
where $\beta>\mu$. This implies $\beta$-smoothness of $F$ {and $\hat F_c$} as well.\footnote{Throughout, $\Vert \cdot \Vert$ is always used to denote $\ell_2$ norm, unless otherwise stated.}
\end{itemize}

\end{assumption}
While we leverage these assumptions in our theoretical development, our experiments in Appendix~\ref{app:experiments} demonstrate that our resulting methodology is still effective in the case of non-convex loss functions (in particular, for neural networks). We  also remark that strong-convexity of the \emph{global} loss function entailed by Assumption \ref{Assump:SmoothStrong} is a much looser requirement than strong-convexity enforced on each device's local function, which we do not assume in this paper.



\begin{definition}[Gradient Diversity]\label{gradDiv}
{There exist $\delta\geq 0$ and $\zeta\in[0,2\beta]$ such that the cluster and global gradients satisfy}
\begin{align} \label{eq:11}
    \left\Vert\nabla\hat F_c(\mathbf w)-\nabla F(\mathbf w)\right\Vert
    \leq \delta+ \zeta \Vert\mathbf w-\mathbf w^*\Vert,~\forall c, \mathbf w.
\end{align}
\end{definition} 
The conventional definition of gradient diversity used in literature, e.g., as in \cite{wang2019adaptive}, is a special case of~\eqref{eq:11} with $\zeta=0$. However, we observe that solely using $\delta$ on the right hand side of~\eqref{eq:11} may be troublesome since it can be shown to be not applicable to quadratic functions (such as linear regression problems),
and since $\delta$ may be prohibitively large,\footnote{This is especially true {at initialization, where the initial model may be far off the optimal model $\mathbf w^*$.}
}
{leading to overly pessimistic convergence bounds.}
Indeed, for all functions satisfying Assumption \ref{Assump:SmoothStrong}, Definition \ref{gradDiv} holds. To see this, note that we can upper bound the gradient diversity using the triangle inequality as
\begin{align}\label{eq:motivNewDiv}
    &\Vert\nabla\hat F_c(\mathbf w)-\nabla F(\mathbf w)\Vert
    \nonumber \\&
    = \Vert\nabla\hat F_c(\mathbf w)-\nabla\hat F_c(\mathbf w^*)+\nabla\hat F_c(\mathbf w^*)-\underbrace{\nabla F(\mathbf w^*)}_{=0}-\nabla F(\mathbf w)\Vert
    \nonumber \\&
    \leq \Vert\nabla\hat F_c(\mathbf w)-\nabla\hat F_c(\mathbf w^*)\Vert+\Vert\nabla\hat F_c(\mathbf w^*)\Vert 
    \nonumber \\&
    ~~~~+ \Vert\nabla F(\mathbf w)-\nabla F(\mathbf w^*)\Vert
    \leq \delta+2\beta\Vert\mathbf w-\mathbf w^*\Vert,
\end{align}
where in the last step above we used the smoothness condition and upper bounded the cluster gradients at the optimal model as $\Vert\nabla\hat F_c(\mathbf w^*)\Vert \leq \delta$. 
We then introduce a ratio $\omega=\frac{\zeta}{2\beta}$, where $\omega\leq 1$ according to~\eqref{eq:motivNewDiv}. Considering $\zeta$ in~\eqref{eq:11} changes the dynamics of the convergence analysis and requires new techniques to obtain the convergence bounds, which are part of our contributions in this work.

\begin{figure*}[t]
\includegraphics[width=\textwidth]{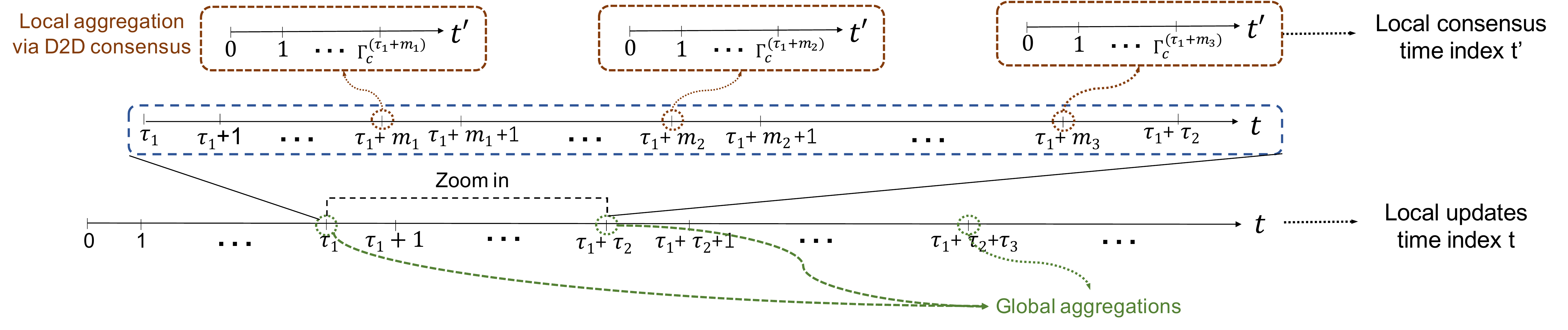}
\centering
\caption{Depiction of two timescales in {\tt TT-HF}. Time index $t$ captures the local descent iterations and global aggregations. In each local training interval, the nodes will aperiodically engage in consensus procedure. Time index $t'$ captures the rounds of these local aggregations.}
\label{fig:twoTimeScale}
\vspace{-5mm}
\end{figure*}

\subsection{{\tt TT-HF}: Two Timescale Hybrid Federated Learning} \label{subsec:syst3}
\subsubsection{Overview and rationale}
{\tt TT-HF} is comprised of a sequence of local model training intervals in-between aperiodic global aggregations. During each interval, the devices conduct local stochastic gradient descent (SGD) iterations and aperiodically synchronize their model parameters through local consensus procedure within their clusters.

There are three main practical reasons for incorporating the local consensus procedure into the learning paradigm. First, local consensus can help further suppress any bias of device models to their local datasets, which is one of the main challenges faced in federated learning in environments where data may be non-i.i.d. across the network~\cite{wang2019adaptive}. Second, 
{local D2D communications during the consensus procedure, typically performed over short ranges~\cite{hmila2019energy,dominic2020joint},
 are} expected to incur much lower device power consumption compared with the global aggregations, which require uplink transmissions to potentially far-away aggregation points (e.g., from smartphone to base station).
 Third, D2D is becoming a prevalent feature of 5G-and-beyond wireless networks~\cite{7994918,9048621}.

\subsubsection{{\tt TT-HF} procedure}
We index time as a set of discrete time indices $\mathcal{T} = \{1, 2, ...\}$. Global aggregation $k$ occurs at time $t_k \in \mathcal{T}$ (with $t_0=0$), so that $\mathcal{T}_k = \{t_{k-1} + 1,...,t_k\}$  denotes
the $k$th \textit{local model training interval}
between aggregations $k-1$ and $k$, of duration $\tau_k =t_k-t_{k-1}$. 
Since global aggregations are aperiodic, in general $\tau_{k} \neq \tau_{k'}$ for $k \neq k'$.


The model computed by the server at the $k$th global aggregation is denoted as 
$\hat{\mathbf w}^{(t_k)} \in \mathbb{R}^M$, which will be defined in \eqref{15}. The model training procedure starts with the server broadcasting $\hat {\mathbf w}^{(0)}$ to initialize the devices' local models.

\textbf{Local SGD iterations}: Each device $i \in \mathcal{I}$ has its own local model, denoted $\mathbf w_i^{(t-1)} \in \mathbb{R}^M$ at time $t-1$. Device $i$  performs successive local SGD iterations on its model over time. Specifically, at time $t \in \mathcal{T}_k$, device $i$ randomly samples a mini-batch $\xi_i^{(t-1)}$ of fixed size from its own local dataset $\mathcal D_i$, and calculates the \textit{local gradient estimate}
\begin{align}\label{eq:SGD}
    \widehat{\mathbf g}_{i}^{(t-1)}=\frac{1}{\vert\xi_i^{(t-1)}\vert}\sum\limits_{(\mathbf x,y)\in\xi_i^{(t-1)}}
    \nabla\hat{f}(\mathbf x,y;\mathbf w_i^{(t-1)}).
\end{align}
It then computes its \textit{intermediate updated local model} as
\begin{align} \label{8}
    {\widetilde{\mathbf{w}}}_i^{(t)} = 
           \mathbf w_i^{(t-1)}-\eta_{t-1} \widehat{\mathbf g}_{i}^{(t-1)},~t\in\mathcal T_k,
\end{align}
where $\eta_{t-1} > 0$ denotes the step size.
{The local model $\mathbf{w}_i^{(t)}$ is then updated according to the following consensus-based procedure.}

\textbf{Local model update}: 
At each time $t \in \mathcal{T}_k$, each cluster may engage in local consensus procedure for model updating. The decision of whether to engage in this consensus process at time $t$ -- and if so, how many iterations of this process to run -- will be developed in Sec.~\ref{Sec:controlAlg} based on a performance-efficiency trade-off optimization. If the devices do not execute consensus procedure, we have the conventional model update rule ${{\mathbf{w}}}_i^{(t)} = {\widetilde{\mathbf{w}}}_i^{(t)}$ from \eqref{8}. Otherwise, multiple \textit{rounds} of D2D communication take place, where in each round parameter transfers occur between neighboring devices. In particular, 
assuming $\Gamma^{(t)}_{{c}}>0$ rounds for cluster $c$ at time $t$, and
letting $t'=0,\dots,\Gamma^{(t)}_{{c}}-1$ index the rounds, each node $i \in \mathcal{S}_c$ carries out the following for $t'=0,\dots,\Gamma^{(t)}_{{c}}-1$:
    \begin{equation}\label{eq:ConsCenter}
       \textbf{z}_{i}^{(t'+1)}= v_{i,i} \textbf{z}_{i}^{(t')}+ \sum_{j\in \mathcal{N}_i} v_{i,j}\textbf{z}_{j}^{(t')},
  \end{equation}

\vspace{-0.1in}
\noindent 
where $\textbf{z}_{i}^{(0)}=\widetilde{\mathbf{w}}_i^{(t)}$ is the node's intermediate local model from \eqref{8}, and $v_{i,j}\geq 0$, $\forall i,j$ is the consensus weight that node $i$ applies to the vector received from $j$. At the end of this process, node $i$ takes $\mathbf w_i^{(t)} = \textbf{z}_{i}^{(\Gamma^{(t)}_{{c}})}$ as its updated local model.

The index $t'$ corresponds to the second timescale in {\tt TT-HF}, referring to the consensus process, as opposed to the index $t$ which captures the time elapsed by the local gradient iterations. Fig.~\ref{fig:twoTimeScale} illustrates these two timescales, where at certain local iterations $t$ the consensus process $t'$ is run.

To analyze this update process, we will find it convenient to express the consensus procedure in matrix form. Let $\widetilde{\mathbf{W}}^{(t)}_{{c}} \in \mathbb{R}^{s_c \times M}$ denote the matrix of intermediate updated local models of the $s_c$ nodes in cluster $\mathcal{S}_c$,
 where the $i$-th row of $\widetilde{\mathbf{W}}^{(t)}_{{c}}$ corresponds to device $i$'s intermediate local model $\widetilde{{\mathbf{w}}}_i^{(t)}$. Then, the
matrix of updated device parameters after the consensus stage, ${\mathbf{W}}^{(t)}_{{c}}$, can be written as
  \begin{equation}\label{eq:consensus}
      \mathbf{W}^{(t)}_{{c}}= \left(\mathbf{V}_{{c}}\right)^{\Gamma^{(t)}_{{c}}} \widetilde{\mathbf{W}}^{(t)}_{{c}}, ~t\in\mathcal T_k,
  \end{equation}
where $\Gamma^{(t)}_{{c}}$ denotes the rounds of D2D consensus in the cluster, and $\mathbf{V}_{{c}}=[v_{i,j}]_{1\leq i,j\leq s_c} \in \mathbb{R}^{s_c \times s_c}$ denotes the \textit{consensus matrix}, which we characterize further below. The $i$-th row of ${\mathbf{W}}^{(t)}_{{c}}$ corresponds to device $i$'s local update ${{\mathbf{w}}}_i^{(t)}$, which is then used in~\eqref{eq:SGD} to calculate the gradient estimate for the next local update. For the times $t \in \mathcal T_k$ where consensus is not used, we set $\Gamma^{(t)}_{{c}}=0$, implying $\mathbf{W}^{(t)}_{{c}}= \widetilde{\mathbf{W}}^{(t)}_{{c}}$ so that devices use their individual gradient updates.

\begin{remark} \label{rem:2}
     Note that the graph $G_c$ may
     change {over time $t$}. In this paper, we only require that
     {the set of devices in each cluster} remain fixed during each global aggregation period $k$. We drop the dependency on $t$ for simplicity of presentation, although
     the analysis implicitly accommodates it. We similarly do so in notations for node and cluster weights $\rho_{i,c}$, $\varrho_c$ introduced in Sec.~\ref{subsec:syst2} and consensus parameters $v_{i,j}$, $\mathbf V_c$, $\lambda_c$ in Sec.~\ref{subsec:syst3}. 
    Assuming a fixed vertex set during each global aggregation period is a practical assumption, especially when the devices move slowly and do not leave the cluster during each local training interval. 
    {Moreover, although in the analysis we assume that transmissions are outage- and error-free, in Sec.~\ref{sec:experiments} we will perform a numerical evaluation to evaluate the impact of fast fading and limited channel state information (CSI), resulting in outages and time-varying link configurations.}
\end{remark}

\textbf{Consensus characteristics}: The consensus matrix $\mathbf{V}_{{c}}$ can be constructed in several ways based on the cluster topology $G_c$.
In this paper, we make the following standard assumption~\cite{xiao2004fast}:

\begin{assumption}\label{assump:cons}
The consensus matrix $\mathbf{V}_{{c}}$ satisfies the following conditions: (i) $\left(\mathbf{V}_{{c}}\right)_{m,n}=0~~\textrm{if}~~ \left({m},{n}\right) \notin \mathcal{E}_{{c}}$, i.e., nodes only receive from their neighbors; (ii) $\mathbf{V}_{{c}}\textbf{1} = \textbf{1}$, i.e., row stochasticity; (iii) $\mathbf{V}_{{c}} = {\mathbf{V}_{{c}}}^\top$, i.e., symmetry; and (iv)~$\rho \big(\mathbf{V}_{{c}}-\frac{\textbf{1} \textbf{1}^\top}{s_c} \big) < 1$, i.e., the largest eigenvalue of $\mathbf{V}_{{c}}-\frac{\textbf{1} \textbf{1}^\top}{s_c}$ has magnitude $<1$. 
\end{assumption}
{For example,}
from the distributed consensus literature~\cite{xiao2004fast}, one common choice
{that satisfies this property is $v_{i,j}=d_c,\forall j\in\mathcal N_i$ and $v_{i,i}=1-d_c|\mathcal N_i|$,}
where $0 < d_{{c}} < 1 / D_{{c}}$ and $D_{{c}}$ denotes the maximum degree among the nodes in $G_{{c}}$.

The consensus procedure process can be viewed as an imperfect aggregation of the models in each cluster. Specifically, we can write the local parameter at device $i\in \mathcal{S}_c$ as
\begin{align} \label{eq14}
    \mathbf w_i^{(t)} = \bar{\mathbf w}_c^{(t)} + \mathbf e_i^{(t)},
\end{align}
where $\bar{\mathbf w}_c^{(t)} = \sum_{i\in\mathcal{S}_c} \rho_{i,c} \tilde{\mathbf w}_i^{(t)}$ is the average
of the local models in the cluster and $\mathbf e_i^{(t)} \in \mathbb{R}^M$ denotes the \textit{consensus error} caused by limited D2D rounds (i.e., $\Gamma_c^{(t)} < \infty$) among the devices, {which can be bounded as in the following lemma}.


\begin{algorithm}[t]
\small 
\SetAlgoLined
\caption{Two timescale hybrid federated learning {\tt TT-HF} with set control parameters.} \label{TT-HF}
\KwIn{Length of training $T$, number of global aggregations $K$, D2D rounds $\{\Gamma_c^{(t)}\}_{t=1}^{T},~\forall c$, length of local model training intervals $\tau_k,~ k=1,...,K$} 
\KwOut{Final global model $\hat{\mathbf w}^{(T)}$}
 // Initialization by the server \\
 Initialize $\hat{\mathbf w}^{(0)}$ and broadcast it among the devices along with the indices $n_c$ of the sampled devices for the first global aggregation.\\
 \For{$k=1:K$}{
     \For{$t=t_{k-1}+1:t_k$}{
        \For{$c=1:N$}
        {
         // Procedure at the clusters \\
         Each device $i\in\mathcal{S}_c$ performs local SGD update based on~\eqref{eq:SGD} and~\eqref{8} using $\mathbf w_i^{(t-1)}$ to obtain~$\widetilde{\mathbf{w}}_i^{(t)}$.\\
        Devices inside the cluster conduct $\Gamma^{(t)}_{{c}}$ rounds of consensus procedure based on~\eqref{eq:ConsCenter}, initializing  $\textbf{z}_{i}^{(0)}=\widetilde{\mathbf{w}}_i^{(t)}$ and setting $\mathbf w_i^{(t)} = \textbf{z}_{i}^{(\Gamma^{(t)}_{{c}})}$.
      }
      \If{$t=t_k$}{
      // Procedure at the clusters \\
      Each sampled device $n_c$ sends $\mathbf w_{n_c}^{(t_k)}$ to the server.\\
      // Procedure at the server \\
     Compute $\hat{\mathbf w}(t)$ using \eqref{15}, and
      broadcast it among the devices along with the indices $n_c$ chosen for the next global aggregation.
      }
 }
}
\end{algorithm}


\begin{lemma}\label{lemma:cons}
After performing $\Gamma^{(t)}_{{c}}$ rounds of consensus in cluster $\mathcal{S}_c$ with the consensus matrix $\mathbf{V}_{{c}}$, the consensus error $\mathbf e_i^{(t)}$ satisfies
\begin{equation} \label{eq:cons}
  \hspace{-3mm} \Vert \mathbf e_i^{(t)} \Vert  \hspace{-.5mm}  \leq (\lambda_{{c}})^{\Gamma^{(t)}_{{c}}}
\sqrt{s_c}\underbrace{\max_{j,j'\in\mathcal S_c}\Vert\tilde{\mathbf w}_j^{(t)}-\tilde{\mathbf w}_{j'}^{(t)}\Vert}_{\triangleq \Upsilon^{(t)}_{{c}}},
~ \forall i\in \mathcal{S}_c. \hspace{-3mm} 
\end{equation}
where $\lambda_{{c}}=\rho \big(\mathbf{V}_{{c}}-\frac{\textbf{1} \textbf{1}^\top}{s_c} \big)$. 
\end{lemma}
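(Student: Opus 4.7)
The plan is to show this via standard spectral arguments on the consensus matrix, leveraging the doubly stochastic structure imposed by Assumption~\ref{assump:cons}. Collect the intermediate local models into the matrix $\widetilde{\mathbf{W}}^{(t)}_c$ and the post-consensus models into $\mathbf{W}^{(t)}_c=(\mathbf{V}_c)^{\Gamma^{(t)}_c}\widetilde{\mathbf{W}}^{(t)}_c$. Writing $\mathbf{J}=\mathbf{1}\mathbf{1}^\top/s_c$, the $i$-th row of $\mathbf{J}\widetilde{\mathbf{W}}^{(t)}_c$ is exactly $\bar{\mathbf{w}}_c^{(t)}$, so the consensus error satisfies $\mathbf{e}_i^{(t)}=\big[\mathbf{W}^{(t)}_c-\mathbf{J}\widetilde{\mathbf{W}}^{(t)}_c\big]_i=\big[((\mathbf{V}_c)^{\Gamma^{(t)}_c}-\mathbf{J})\widetilde{\mathbf{W}}^{(t)}_c\big]_i$.

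The first key step is to establish the identity $(\mathbf{V}_c)^{\Gamma^{(t)}_c}-\mathbf{J}=(\mathbf{V}_c-\mathbf{J})^{\Gamma^{(t)}_c}$. This follows by induction from the three commutation relations $\mathbf{V}_c\mathbf{J}=\mathbf{J}\mathbf{V}_c=\mathbf{J}$ (using row-stochasticity and symmetry, i.e., conditions (ii) and (iii) of Assumption~\ref{assump:cons}) and $\mathbf{J}^2=\mathbf{J}$; expanding $(\mathbf{V}_c-\mathbf{J})^2$ collapses to $\mathbf{V}_c^2-\mathbf{J}$, and the inductive step proceeds identically. A second useful observation is that $(\mathbf{V}_c-\mathbf{J})\mathbf{J}=0$, so one can freely replace $\widetilde{\mathbf{W}}^{(t)}_c$ by the centered matrix $\widetilde{\mathbf{W}}^{(t)}_c-\mathbf{J}\widetilde{\mathbf{W}}^{(t)}_c$ inside the product without changing the result.

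Next, I would pass to the Frobenius norm using the trivial bound $\Vert\mathbf{e}_i^{(t)}\Vert\le\big\Vert(\mathbf{V}_c-\mathbf{J})^{\Gamma^{(t)}_c}(\widetilde{\mathbf{W}}^{(t)}_c-\mathbf{J}\widetilde{\mathbf{W}}^{(t)}_c)\big\Vert_F$. Sub-multiplicativity yields
\begin{align}
\Vert\mathbf{e}_i^{(t)}\Vert\le\Vert(\mathbf{V}_c-\mathbf{J})^{\Gamma^{(t)}_c}\Vert_2\,\big\Vert\widetilde{\mathbf{W}}^{(t)}_c-\mathbf{J}\widetilde{\mathbf{W}}^{(t)}_c\big\Vert_F.
\end{align}
Since $\mathbf{V}_c-\mathbf{J}$ is symmetric, its spectral norm equals its spectral radius, so $\Vert(\mathbf{V}_c-\mathbf{J})^{\Gamma^{(t)}_c}\Vert_2=\lambda_c^{\Gamma^{(t)}_c}$, which is well-defined and less than $1^{\Gamma^{(t)}_c}$ by condition (iv) of Assumption~\ref{assump:cons}.

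Finally, I would bound the Frobenius term using pairwise distances. The $j$-th row of the centered matrix is $\tilde{\mathbf{w}}_j^{(t)}-\bar{\mathbf{w}}_c^{(t)}=\frac{1}{s_c}\sum_{j'\in\mathcal S_c}(\tilde{\mathbf{w}}_j^{(t)}-\tilde{\mathbf{w}}_{j'}^{(t)})$, and the triangle inequality gives $\Vert\tilde{\mathbf{w}}_j^{(t)}-\bar{\mathbf{w}}_c^{(t)}\Vert\le\Upsilon^{(t)}_c$ for every $j$, whence $\Vert\widetilde{\mathbf{W}}^{(t)}_c-\mathbf{J}\widetilde{\mathbf{W}}^{(t)}_c\Vert_F\le\sqrt{s_c}\,\Upsilon^{(t)}_c$. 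Chaining the inequalities yields the claim. The main (minor) obstacle is the algebraic identity for $(\mathbf{V}_c-\mathbf{J})^{\Gamma^{(t)}_c}$; everything else is standard norm manipulation, and the bound holds uniformly in $i\in\mathcal S_c$ since we upper-bounded the row norm by the Frobenius norm at the outset.
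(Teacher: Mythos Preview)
Your proof is correct and follows essentially the same route as the paper: bound the individual error norm by the Frobenius norm of the error matrix, exploit the spectral contraction of $\mathbf V_c-\mathbf J$ to extract the $(\lambda_c)^{\Gamma_c^{(t)}}$ factor, and then bound the centered Frobenius norm by $\sqrt{s_c}\,\Upsilon_c^{(t)}$. The only cosmetic differences are that the paper works with the trace form directly (leaving the identity $(\mathbf V_c)^{\Gamma}-\mathbf J=(\mathbf V_c-\mathbf J)^{\Gamma}$ implicit in the eigenvalue bound) and passes through the pairwise-sum identity $\sum_j\Vert\tilde{\mathbf w}_j-\bar{\mathbf w}_c\Vert^2\le\frac{1}{s_c}\sum_{j,j'}\Vert\tilde{\mathbf w}_j-\tilde{\mathbf w}_{j'}\Vert^2$ before taking the max, whereas you bound each row by $\Upsilon_c^{(t)}$ via the triangle inequality; both arrive at the same final estimate.
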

\begin{skproof}
  Let
  {$
\overline{\mathbf{W}}^{(t)}_{{c}}=\frac{1}{s_c}\mathbf 1 \mathbf 1^\top\widetilde{\mathbf{W}}^{(t)}_{{c}}
  $ be the matrix with rows given by the average model parameters across the cluster, and let
   \begin{align*}
      \mathbf E^{(t)}_{{c}}= {\mathbf{W}}^{(t)}_{{c}}-\overline{\mathbf{W}}^{(t)}_{{c}}
      =[\left(\mathbf{V}_{{c}}\right)^{\Gamma^{(t)}_{{c}}}-\mathbf 1\mathbf 1^\top/s_c] [\widetilde{\mathbf{W}}^{(t)}_{{c}}-\overline{\mathbf{W}}^{(t)}_{{c}}]
      ,
  \end{align*}
  so that $[\mathbf E^{(t)}_{{c}}]_{i,:}$ ($i$th row of $\mathbf E^{(t)}_{{c}}$) $=\mathbf e_{i}^{(t)}$, where in the second step we used \eqref{eq:consensus} and the fact that $\mathbf 1^\top \mathbf E^{(t)}_{{c}}=\mathbf 0$ (hence
  $\mathbf E^{(t)}_{{c}}=[\mathbf I-\mathbf 1\mathbf 1^\top/s_c]\mathbf E^{(t)}_{{c}}$). Therefore, using Assumption \ref{assump:cons}, we can bound the consensus error as
  \begin{align} \label{eq:consensus-bound-1}
      &\Vert\mathbf e_{i}^{(t)}\Vert^2\leq
      \mathrm{trace}((\mathbf E^{(t)}_{{c}})^{\top}\mathbf E^{(t)}_{{c}})
      \\&\nonumber
      =
      \mathrm{trace}\Big(
      [\widetilde{\mathbf{W}}^{(t)}_{{c}}{-}\overline{\mathbf{W}}^{(t)}_{{c}}]^\top
      [\left(\mathbf{V}_{{c}}\right)^{\Gamma^{(t)}_{{c}}}{-}\mathbf 1\mathbf 1^\top/s_c]^2 [\widetilde{\mathbf{W}}^{(t)}_{{c}}{-}\overline{\mathbf{W}}^{(t)}_{{c}}]
\Big)
\\&
\leq (\lambda_{{c}})^{2\Gamma^{(t)}_{{c}}}
\sum_{j=1}^{s_c}\Vert\tilde{\mathbf w}_j^{(t)}-\bar{\mathbf w}_c^{(t)}\Vert^2
\nonumber
\\&
\leq (\lambda_{{c}})^{2\Gamma^{(t)}_{{c}}}
\frac{1}{s_c}\sum_{j,j'=1}^{s_c}\Vert\tilde{\mathbf w}_j^{(t)}-\tilde{\mathbf w}_{j'}^{(t)}\Vert^2
\nonumber
\\&
\nonumber
\leq (\lambda_{{c}})^{2\Gamma^{(t)}_{{c}}}
s_c\max_{j,j'\in\mathcal S_c}\Vert\tilde{\mathbf w}_j^{(t)}-\tilde{\mathbf w}_{j'}^{(t)}\Vert^2,
  \end{align}
  so that the result directly follows.  For the complete proof, refer to Appendix \ref{app:consensus-error}.
  }
\end{skproof}

Note that $\Upsilon^{(t)}_{{c}}$ defined in~\eqref{eq:cons} captures the divergence of intermediate updated local model parameters in cluster $\mathcal{S}_c$ at time $t\in\mathcal{T}_k$ (before consensus is performed).
Intuitively, according to~\eqref{eq:cons}, to make the consensus error smaller, more rounds of consensus need to be performed. However, this may be impractical due to energy and delay considerations,
hence a trade-off arises between the consensus error and the energy/delay cost. This trade-off will be optimized by tuning $\Gamma_c^{(t)}$, via our adaptive control algorithm developed in Sec. \ref{Sec:controlAlg}.

\textbf{Global aggregation}: At the end of each local model training interval $\mathcal{T}_k$, the global model $\mathbf{w}$ will be updated based on the trained local model updates. Referring to Fig.~\ref{fig2}, the main server will \textit{sample} one device from each cluster $c$ uniformly at random, and request these devices to upload their local models, so that the new global model is updated as
\begin{align} \label{15}
    \hat{\mathbf w}^{(t)} &= 
          \sum\limits_{c=1}^N \varrho_c \mathbf w_{n_c}^{(t)}, \;\; t=t_k, k=1,2,...
\end{align}
{where $n_c$ is the node sampled from cluster $c$ at time $t$.}
This sampling technique is introduced to reduce the uplink communication cost by a factor of the cluster sizes,
{and is enabled by the consensus procedure, which mimics a local aggregation procedure within a cluster (albeit imperfectly due to consensus errors, see \eqref{eq14})} \cite{hosseinalipour2020federated}. 
The global model is then broadcast by the main server to all of the edge devices, which override their local models at time $t_k$: $\mathbf w_i^{(t_{k})} = \hat{\mathbf{w}}^{(t_{k})}$, $\forall i$.
The process then repeats for $\mathcal{T}_{k+1}$.

A summary of the {\tt TT-HF} algorithm developed in this section (for set control parameters) is given in Algorithm~\ref{TT-HF}.

\begin{remark} \label{rem:3}
 Note that we consider digital transmission (in both D2D and uplink/downlink communications) where using state-of-the-art techniques in encoding/decoding, e.g., low density parity check (LDPC) codes, the bit error rate (BER) is reasonably small and negligible~\cite{8316763}. Moreover, the effect of quantized model transmissions can be readily incorporated using techniques developed in~\cite{consensus2009quantize},
and precoding techniques may be used to mitigate the effect of signal outage due to fading~\cite{precoding2021}.
  Therefore, in this analysis, we assume that the model parameters transmitted by the devices to their neighbors (during consensus) and then to the server (during global aggregation) are received with no errors at the respective receivers. The impact of outages due to fast fading and lack of CSI will be studied numerically in Sec. \ref{sec:experiments}.
\end{remark}

\section{Convergence Analysis of {\tt TT-HF}} \label{sec:convAnalysis}
\noindent In this section, we theoretically analyze the convergence behavior of {\tt TT-HF}. Our main results are presented in Sec.~\ref{ssec:convAvg} and Sec.~\ref{ssec:sublinear}. Before then, in Sec.~\ref{ssec:definitions}, we introduce some additional definitions and a key proposition for the analysis.

\subsection{Definitions and Bounding Model Dispersion}
\label{ssec:definitions}
We first introduce a standard assumption on the noise of gradient estimation, and then define an upper bound on the average of consensus error for the clusters:
\begin{assumption} \label{assump:SGD_noise}
    Let ${\mathbf n}_{i}^{(t)}{=}\widehat{\mathbf g}_{i}^{(t)}{-}\nabla F_i(\mathbf w_{i}^{(t)})$ $\forall i,t$ denote the noise of the estimated gradient through the SGD process for device $i$. We assume that it is unbiased with bounded variance, i.e.
     $\mathbb{E}[{\mathbf n}_{i}^{(t)}|\mathbf w_{i}^{(t)}]{=}0$ and $\exists \sigma{>}0{:}~\mathbb{E}[\Vert{\mathbf n}_{i}^{(t)}\Vert^2|\mathbf w_{i}^{(t)}]{\leq}\sigma^2$, $\forall i,t$.
\end{assumption}

{Moreover, the following condition bounds the consensus error within each cluster.}

\begin{condition} \label{paraDiv}
Let $\epsilon_c^{(t)}$ be an upper bound on the average of the consensus error inside cluster $c$ at time $t$, i.e.,
    \begin{align}
        \frac{1}{s_c}\sum\limits_{i\in \mathcal S_c}\Vert \mathbf{e}_i^{(t)}\Vert^2 \leq (\epsilon_c^{(t)})^2.
    \end{align}
    We further define $(\epsilon^{(t)})^2=\sum\limits_{c=1}^{N}\rho_c(\epsilon_c^{(t)})^2$ as the average of these upper bounds over the network at time $t$.
\end{condition}
{
In fact,
using Lemma~\ref{lemma:cons}, this condition can be satisfied by tuning the number of consensus steps.
In our analysis, we will derive conditions  on $\epsilon_c^{(t)}$ that are sufficient to guarantee convergence of {\tt TT-HF} (see Proposition~\ref{Local_disperseT}).}


We next define the expected variance in models across clusters at a given time, which we refer to as \textit{model dispersion}:

\begin{definition} \label{modDisp}
We define the expected model dispersion across the clusters at time $t$ as
\begin{align}\label{eq:defA}
A^{(t)} = \mathbb E\left[\sum\limits_{c=1}^N\varrho_{c}\big\Vert\bar{\mathbf w}_c^{(t)}-\bar{\mathbf w}^{(t)}\big\Vert^2\right],
\end{align}
where $\bar{\mathbf w}_c^{(t)}$ is defined in~\eqref{eq14} and $\bar{\mathbf w}^{(t)} = \sum\limits_{c=1}^{N}\varrho_c\bar{\mathbf w}_c^{(t)}$ is the global average of the local models at time $t$.
\end{definition}

$A^{(t)}$ measures the degree to which the cluster models deviate from their average throughout the training process. Obtaining an upper bound on this quantity is non-trivial due to the coupling between the gradient diversity and the model parameters imposed by~\eqref{eq:11}. For an appropriate choice of step size in~\eqref{8}, we upper bound this quantity at time $t$ through a set of new techniques that include the mathematics of \textit{coupled dynamic systems}. Specifically, we have the following result:

\begin{proposition}\label{Local_disperseT}
    If $\eta_t=\frac{\gamma}{t+\alpha}$ {for some $\gamma>0$}, $\epsilon^{(t)}$ is non-increasing for $t\in \mathcal T_k$, i.e., $\epsilon^{(t+1)} \leq \epsilon^{(t)}$, and $\alpha\geq
    \gamma\beta\max\{
     \lambda_+-2+\frac{\mu}{2\beta},\frac{\beta}{\mu}\}$, then 
        \begin{align} \label{eq:At}
            A^{(t)}\leq&
             \frac{16\omega^2}{\mu}(\Sigma_{+,t})^2 [F(\bar{\mathbf w}(t_{k-1}))-F(\mathbf w^*)]
             \nonumber \\&
            +25(\Sigma_{+,t})^2
            \left(\frac{\sigma^2+\delta^2}{\beta^2}+(\epsilon^{(0)})^2\right), ~~t\in\mathcal{T}_k,
        \end{align}
        where 
        \begin{equation*}
          \hspace{-3mm}\resizebox{.99\linewidth}{!}{$
            \Sigma_{+,t}
            =\sum\limits_{\ell=t_{k-1}}^{t-1}\left(\prod_{j=t_{k-1}}^{\ell-1}(1+\eta_j\beta\lambda_+)\right)\beta\eta_\ell\left(\prod_{j=\ell+1}^{t-1}(1+\eta_j\beta)\right),
            $} \hspace{-3mm}
        \end{equation*}
    and
    $
        \lambda_+=1-\frac{\mu}{4\beta}+\sqrt{(1+\frac{\mu}{4\beta})^2+2\omega}.
    $
\end{proposition}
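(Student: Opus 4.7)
The plan is to set up a coupled recursion between the expected dispersion $A^{(t)}$ and the expected optimality gap of the network-wide average model, whose worst-case growth rate is exactly the stated $\lambda_+$. First, because Assumption~\ref{assump:cons} makes $\mathbf V_c$ doubly stochastic and symmetric, the consensus step in~\eqref{eq:consensus} preserves the cluster-wise mean, so
\begin{align*}
\bar{\mathbf w}_c^{(t)}=\bar{\mathbf w}_c^{(t-1)}-\eta_{t-1}\sum_{i\in\mathcal S_c}\rho_{i,c}\,\widehat{\mathbf g}_i^{(t-1)},
\end{align*}
and averaging with weights $\varrho_c$ yields the analogous recursion for $\bar{\mathbf w}^{(t)}$. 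Subtracting produces the master identity for $\bar{\mathbf w}_c^{(t)}-\bar{\mathbf w}^{(t)}$ that is the starting point. Note that at $t=t_{k-1}$ the global aggregation has just synchronized all clusters, so $A^{(t_{k-1})}=0$; this is the initial condition used when the recursion is later solved.

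Next, each stochastic gradient is split as $\widehat{\mathbf g}_i^{(t-1)}=\nabla F_i(\mathbf w_i^{(t-1)})+\mathbf n_i^{(t-1)}$ with zero-mean noise (Assumption~\ref{assump:SGD_noise}), and $\beta$-smoothness of each $F_i$ together with Condition~\ref{paraDiv} is used to replace $\nabla F_i(\mathbf w_i^{(t-1)})$ by $\nabla F_i(\bar{\mathbf w}_c^{(t-1)})$ up to terms absorbed into $\epsilon^{(t-1)}$. The inner parenthesis then becomes $\nabla\hat F_c(\bar{\mathbf w}_c^{(t-1)})-\sum_{c'}\varrho_{c'}\nabla\hat F_{c'}(\bar{\mathbf w}_{c'}^{(t-1)})$, to which Definition~\ref{gradDiv} is applied at each cluster (with $\zeta=2\beta\omega$), producing a residual $\delta+2\beta\omega\Vert\bar{\mathbf w}_c^{(t-1)}-\mathbf w^*\Vert$. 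Squaring, taking expectations and using the elementary $\Vert\sum_j\mathbf a_j\Vert^2\le n\sum_j\Vert\mathbf a_j\Vert^2$ yields an inequality of the form
\begin{align*}
A^{(t)}\le (1+\eta_{t-1}\beta)^2 A^{(t-1)}+c_1\eta_{t-1}^2\beta^2\omega^2\,\mathbb E\Vert\bar{\mathbf w}^{(t-1)}-\mathbf w^*\Vert^2+c_2\eta_{t-1}^2\big[\sigma^2+\delta^2+\beta^2(\epsilon^{(t-1)})^2\big],
\end{align*}
and $\mu$-strong convexity converts $\mathbb E\Vert\bar{\mathbf w}^{(t-1)}-\mathbf w^*\Vert^2$ into $(2/\mu)\mathbb E[F(\bar{\mathbf w}^{(t-1)})-F(\mathbf w^*)]$.

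A companion recursion for $\mathbb E[F(\bar{\mathbf w}^{(t)})-F(\mathbf w^*)]$ is derived along classical SGD lines, into which $A^{(t-1)}$ enters through the $\beta$-smoothness link between $\nabla\hat F_c(\bar{\mathbf w}_c^{(t-1)})$ and $\nabla\hat F_c(\bar{\mathbf w}^{(t-1)})$. These two inequalities together form a $2\times 2$ linear difference system in the state $(A^{(t)},\mathbb E[F(\bar{\mathbf w}^{(t)})-F(\mathbf w^*)])$ whose characteristic polynomial is solved by eigenvalues $\beta\lambda_{\pm}$, with direct calculation giving $\lambda_+=1-\mu/(4\beta)+\sqrt{(1+\mu/(4\beta))^2+2\omega}$. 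The product $\prod(1+\eta_j\beta\lambda_+)$ appearing in $\Sigma_{+,t}$ is the discrete analogue of the exponential growth along the dominant eigen-direction, while the tail product $\prod(1+\eta_j\beta)$ originates from the uncoupled $A^{(t-1)}\to A^{(t)}$ branch acting after the optimality-gap contribution has been integrated out. Telescoping from $t_{k-1}$, where $A^{(t_{k-1})}=0$, collapses the solution to exactly the convolution form $\Sigma_{+,t}$, and the hypothesis that $\epsilon^{(t)}$ is non-increasing lets one replace every $\epsilon^{(t-1)}$ inside the sum by $\epsilon^{(0)}$. The condition $\alpha\ge\gamma\beta\max\{\lambda_+-2+\mu/(2\beta),\beta/\mu\}$ is precisely what is needed to force the step-size sequence $\eta_t=\gamma/(t+\alpha)$ small enough that the cross-terms between the two eigen-modes are absorbed cleanly, yielding the clean numerical constants $16\omega^2/\mu$ and $25$ in the statement.

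The main obstacle I anticipate is the bookkeeping around the eigen-decomposition of the two-variable difference inequality: one must verify that the explicit quadratic whose root is $\lambda_+$ emerges from the correct linearization of the coupled system, and that the constraint on $\alpha$ is strong enough to make the cross-terms between the two eigen-modes collapse without introducing additional slack into the final constants. A secondary difficulty is packaging the time-varying $\epsilon^{(t-1)}$ into a single $\epsilon^{(0)}$ using the non-increasing hypothesis inside the convolution sum without disturbing either of the two product structures that define $\Sigma_{+,t}$.
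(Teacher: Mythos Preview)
Your high-level picture (a coupled two-variable recursion, an eigen-decomposition yielding $\lambda_+$, and a convolution/telescoping that produces $\Sigma_{+,t}$) matches the paper's architecture, but the variables you couple and the linearization you use are not the ones that actually deliver the stated $\lambda_+$ and the constants $16\omega^2/\mu$, $25$.

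The paper does \emph{not} set up the system in the squared quantities $\big(A^{(t)},\,\mathbb E[F(\bar{\mathbf w}^{(t)})-F(\mathbf w^*)]\big)$. Instead it works at the level of square-root quantities,
\[
x_1^{(t)}=\sqrt{\beta\,\mathbb E\Big[\Big(\sum_c\varrho_c\Vert\bar{\mathbf w}_c^{(t)}-\bar{\mathbf w}^{(t)}\Vert\Big)^2\Big]},\qquad
x_2^{(t)}=\sqrt{\beta\,\mathbb E\Vert\bar{\mathbf w}^{(t)}-\mathbf w^*\Vert^2},
\]
and uses a Minkowski-type inequality (``Fact~1'' in the appendix: $\sqrt{\mathbb E[(\sum_i x_i)^2]}\le\sum_i\sqrt{\mathbb E[x_i^2]}$) to keep the recursion \emph{linear} in $(x_1,x_2)$. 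This produces exactly $\mathbf x^{(t+1)}\le(\mathbf I+\eta_t\beta\,\mathbf B)\mathbf x^{(t)}+\eta_t\beta\,\mathbf z$ with $\mathbf B=\begin{bmatrix}2&2\omega\\ 1&-\mu/(2\beta)\end{bmatrix}$, whose eigenvalues are $\lambda_\pm$ as stated. If you square first and then use $\Vert\sum_j\mathbf a_j\Vert^2\le n\sum_j\Vert\mathbf a_j\Vert^2$ as you propose, the off-diagonal coupling lands at order $\eta^2$ rather than $\eta$, and the dominant eigenvalue to leading order becomes $1+2\eta\beta$ independently of $\omega$; you would not recover the $\omega$-dependence in $\lambda_+$ or the specific numerical constants. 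The constraint $\alpha\ge\gamma\beta(\lambda_+-2+\mu/(2\beta))$ in the hypothesis is precisely the condition $1+\eta_0\beta\lambda_-\ge 0$ coming from this matrix, and $\alpha\ge\gamma\beta^2/\mu$ enforces $\eta_0\le\mu/\beta^2$ used in bounding $\Vert\bar{\mathbf w}^{(t)}-\mathbf w^*-\eta_t\nabla F(\bar{\mathbf w}^{(t)})\Vert$.

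There is also an additional step you are missing: after solving the $2\times2$ system for $\bar{\mathbf x}^{(t)}$, the paper returns to a \emph{per-cluster} recursion for $\sqrt{\beta\,\mathbb E\Vert\bar{\mathbf w}_c^{(t)}-\bar{\mathbf w}^{(t)}\Vert^2}$ with a $(1+\eta_t\beta)$ factor and source $y^{(t)}=[1\ \ 2\omega]\,\mathbf x^{(t)}$; it is this second layer that generates the trailing $\prod_{j=\ell+1}^{t-1}(1+\eta_j\beta)$ in $\Sigma_{+,t}$, and only after bounding this per-cluster quantity does one square and take $\sum_c\varrho_c$ to land on $A^{(t)}$. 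So $\Sigma_{+,t}$ is not the direct solution of the $2\times2$ eigen-system but rather the convolution of that solution with the per-cluster propagator. Your sketch collapses these two layers into one.
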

\begin{proof}
See Appendix~\ref{app:Local_disperse}.
\end{proof}

The bound in~\eqref{eq:At} demonstrates how the expected model dispersion across the clusters increases with respect to the consensus error ($\epsilon^{(0)}$), the noise in the gradient estimation ($\sigma^2$), and the heterogeneity of local datasets ($\delta,\omega$). 
{Intuitively, the upper bound in \eqref{eq:At} dictates that, the larger $\epsilon^{(0)}$, $\sigma^2$, $\delta$ or $\omega$, the larger the dispersion, due to error propagation in the network. Proposition \ref{Local_disperseT} will be an instrumental result in the convergence proof developed in the next section.}

\subsection{General Convergence Behavior of $\hat{\mathbf{w}}^{(t)}$}
\label{ssec:convAvg}
Next, we focus on the convergence of the global loss. In the following theorem, we bound the expected distance that the global loss is from the optimal over time, as a function of the model dispersion.

\begin{theorem} \label{co1}
        When using {\tt TT-HF} for ML model training with $\eta_t \leq 1/\beta$ $\forall t$,
  the one-step behavior of {the global model} $\hat{\mathbf w}^{(t)}$ {(see \eqref{15})} satisfies, for $t\in\mathcal{T}_k$,
\begin{align} \label{eq:th1mainRes}
       &\mathbb E\left[F(\hat{\mathbf w}^{(t+1)})-F(\mathbf w^*)\right]
        \leq
        \underbrace{(1-\mu\eta_{t})\mathbb E[F(\hat{\mathbf w}^{(t)})-F(\mathbf w^*)]}_{{(a)}}
        \nonumber \\&
        +\underbrace{\frac{\eta_{t}\beta^2}{2}A^{(t)}
        +\frac{1}{2}[\eta_{t}\beta^2(\epsilon^{(t)})^2+\eta_{t}^2\beta\sigma^2+\beta(\epsilon^{(t+1)})^2]}_{(b)},
\end{align}
where $A^{(t)}$ is the model dispersion from Definition~\ref{modDisp}.
\end{theorem}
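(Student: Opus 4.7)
The plan is to analyze the evolution of the global loss by chaining $\beta$-smoothness expansions through three ``levels'' of models: the cluster-wise averages $\bar{\mathbf{w}}_c^{(t)}$, the network average $\bar{\mathbf{w}}^{(t)} = \sum_c \varrho_c \bar{\mathbf{w}}_c^{(t)}$, and the sampled global iterate $\hat{\mathbf{w}}^{(t)}$. The decomposition $\mathbf{w}_i^{(t)} = \bar{\mathbf{w}}_c^{(t)} + \mathbf{e}_i^{(t)}$ from \eqref{eq14}, together with the fact that symmetric row-stochastic consensus matrices (Assumption \ref{assump:cons}) preserve cluster-wise averages, so that $\sum_{i\in\mathcal{S}_c}\mathbf{e}_i^{(t)} = \mathbf{0}$ and hence $\mathbb{E}[\mathbf{w}_{n_c}^{(t)}] = \bar{\mathbf{w}}_c^{(t)}$ under uniform cluster sampling, is the backbone of the argument.

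First I would exploit $\beta$-smoothness to peel off the sampling randomness that turns $\bar{\mathbf{w}}^{(t+1)}$ into $\hat{\mathbf{w}}^{(t+1)}$:
\begin{equation*}
F(\hat{\mathbf{w}}^{(t+1)}) \leq F(\bar{\mathbf{w}}^{(t+1)}) + \nabla F(\bar{\mathbf{w}}^{(t+1)})^{\!\top}\bigl(\hat{\mathbf{w}}^{(t+1)}-\bar{\mathbf{w}}^{(t+1)}\bigr) + \tfrac{\beta}{2}\bigl\Vert \hat{\mathbf{w}}^{(t+1)}-\bar{\mathbf{w}}^{(t+1)}\bigr\Vert^2.
\end{equation*}
Taking expectation over $\{n_c\}$ kills the cross term, and Jensen's inequality applied to $\Vert\sum_c \varrho_c \mathbf{e}_{n_c}^{(t+1)}\Vert^2$ combined with Condition \ref{paraDiv} bounds the residual by $\tfrac{\beta}{2}(\epsilon^{(t+1)})^2$. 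This isolates the last term of \eqref{eq:th1mainRes}.

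Second, I would write $\bar{\mathbf{w}}^{(t+1)} = \bar{\mathbf{w}}^{(t)} - \eta_t \sum_i \rho_i \widehat{\mathbf{g}}_i^{(t)}$, apply $\beta$-smoothness anchored at $\bar{\mathbf{w}}^{(t)}$, and then shift the anchor to $\hat{\mathbf{w}}^{(t)}$. Taking expectation over the SGD noise invokes Assumption \ref{assump:SGD_noise}: unbiasedness eliminates one cross term, while the variance bound together with $\mathbb{E}\Vert\widehat{\mathbf{g}}_i^{(t)}\Vert^2 \leq \Vert \nabla F_i(\mathbf{w}_i^{(t)})\Vert^2 + \sigma^2$ produces the $\tfrac{1}{2}\eta_t^2\beta\sigma^2$ contribution. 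Strong convexity then converts $-\eta_t \nabla F(\hat{\mathbf{w}}^{(t)})^{\!\top}(\hat{\mathbf{w}}^{(t)}-\mathbf{w}^*)$ into the $(1-\mu\eta_t)[F(\hat{\mathbf{w}}^{(t)})-F(\mathbf{w}^*)]$ contraction, provided I correctly handle the bias between $\sum_i \rho_i \nabla F_i(\mathbf{w}_i^{(t)})$ and $\nabla F(\hat{\mathbf{w}}^{(t)})$.

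That bias is the crux. I would split it as
\begin{equation*}
\sum_i \rho_i \nabla F_i(\mathbf{w}_i^{(t)}) - \nabla F(\hat{\mathbf{w}}^{(t)}) = \underbrace{\sum_i \rho_i\bigl[\nabla F_i(\mathbf{w}_i^{(t)}) - \nabla F_i(\bar{\mathbf{w}}_c^{(t)})\bigr]}_{\text{consensus error bias}} + \underbrace{\sum_c \varrho_c \nabla \hat{F}_c(\bar{\mathbf{w}}_c^{(t)}) - \nabla F(\hat{\mathbf{w}}^{(t)})}_{\text{dispersion bias}},
\end{equation*}
controlling the first bracket by $\beta$-smoothness of $F_i$ and Condition \ref{paraDiv} (yielding $\tfrac{\eta_t\beta^2}{2}(\epsilon^{(t)})^2$) and the second bracket by $\beta$-smoothness of $\hat F_c$ and Definition \ref{modDisp} (yielding $\tfrac{\eta_t\beta^2}{2}A^{(t)}$), after applying weighted Young's inequalities of the form $2a^{\!\top}b \leq \tfrac{1}{\mu\eta_t}\Vert a\Vert^2 \!\cdot\! c + (\mu\eta_t/c)\Vert b\Vert^2$ chosen to keep the coefficient of $\Vert\nabla F(\hat{\mathbf{w}}^{(t)})\Vert^2$ on the right-hand side nonpositive.

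The main obstacle I anticipate is precisely this bookkeeping: weighted Young's inequalities must be calibrated so that (i) the coefficient of $\Vert\nabla F(\hat{\mathbf{w}}^{(t)})\Vert^2$ is absorbed (the assumption $\eta_t\leq 1/\beta$ is fully used here to dominate the $\eta_t^2\beta$ factor coming from the smoothness quadratic term), and (ii) the surviving pieces line up as exactly $\tfrac{\eta_t\beta^2}{2}[A^{(t)} + (\epsilon^{(t)})^2]$ plus the SGD noise and sampling-consensus terms. Once that ledger balances, stitching the cluster-average step with the first step from the sampling expansion gives \eqref{eq:th1mainRes}.
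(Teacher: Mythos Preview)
Your high-level decomposition (sampling $\to$ SGD noise $\to$ gradient bias split into consensus-error and dispersion pieces) is the same as the paper's, and step one---peeling off the sampling error via smoothness to produce the $\tfrac{\beta}{2}(\epsilon^{(t+1)})^2$ term---matches exactly. But the mechanism you describe for extracting the $(1-\mu\eta_t)$ contraction is not right, and the paper's route is different and cleaner.

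\textbf{The gap.} In a function-value smoothness expansion $F(\bar{\mathbf w}^{(t+1)})\leq F(\text{anchor})+\nabla F(\text{anchor})^\top(\bar{\mathbf w}^{(t+1)}-\text{anchor})+\tfrac{\beta}{2}\Vert\cdot\Vert^2$, the cross term is $-\eta_t\nabla F(\text{anchor})^\top g$ with $g=\sum_i\rho_i\widehat{\mathbf g}_i^{(t)}$; the term $-\eta_t\nabla F(\hat{\mathbf w}^{(t)})^\top(\hat{\mathbf w}^{(t)}-\mathbf w^*)$ that you invoke is the signature of \emph{iterate-distance} analysis ($\Vert\mathbf w-\mathbf w^*\Vert^2$), and it simply does not appear here. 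So strong convexity cannot be applied the way you describe. Relatedly, if you do shift the anchor to $\hat{\mathbf w}^{(t)}$, your ``dispersion bias'' bracket $\sum_c\varrho_c\nabla\hat F_c(\bar{\mathbf w}_c^{(t)})-\nabla F(\hat{\mathbf w}^{(t)})$ is not controlled by $A^{(t)}$ alone: it also carries the sampling gap $\bar{\mathbf w}^{(t)}-\hat{\mathbf w}^{(t)}$, so the constants in front of $A^{(t)}$ and $(\epsilon^{(t)})^2$ would not line up with \eqref{eq:th1mainRes}.

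\textbf{What the paper does instead.} The entire descent lemma is run at the \emph{deterministic} network average $\bar{\mathbf w}^{(t)}$, never at $\hat{\mathbf w}^{(t)}$. The cross term $-\eta_t\nabla F(\bar{\mathbf w}^{(t)})^\top g$ is handled by the polarization identity $-2a^\top b=-\Vert a\Vert^2-\Vert b\Vert^2+\Vert a-b\Vert^2$ (their Lemma~\ref{lem1}), after which the PL inequality $\Vert\nabla F(\bar{\mathbf w}^{(t)})\Vert^2\geq 2\mu[F(\bar{\mathbf w}^{(t)})-F(\mathbf w^*)]$ produces the contraction, the $-\Vert g\Vert^2$ term absorbs the quadratic (using $\eta_t\leq 1/\beta$), and $\Vert\nabla F(\bar{\mathbf w}^{(t)})-g\Vert^2$ becomes $\sum\Vert\bar{\mathbf w}^{(t)}-\mathbf w_j^{(t)}\Vert^2$ via smoothness. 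Expanding $\mathbf w_j^{(t)}=\bar{\mathbf w}_c^{(t)}+\mathbf e_j^{(t)}$ and using $\sum_j\mathbf e_j^{(t)}=0$ splits this cleanly into $A^{(t)}+(\epsilon^{(t)})^2$---no Young's inequality calibration needed. Only at the very end does the paper pass to $\hat{\mathbf w}$ via the sandwich $\mathbb E[F(\hat{\mathbf w}^{(t+1)})]\leq\mathbb E[F(\bar{\mathbf w}^{(t+1)})]+\tfrac{\beta}{2}(\epsilon^{(t+1)})^2$ (smoothness) and $\mathbb E[F(\bar{\mathbf w}^{(t)})]\leq\mathbb E[F(\hat{\mathbf w}^{(t)})]$ (strong convexity, since the sampling cross term has zero mean). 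That second inequality is the only place strong convexity is used beyond PL, and it is what replaces your ``shift the anchor'' step.
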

\begin{proof}
See Appendix~\ref{app:thm1}.
\end{proof} 
 

Theorem~\ref{co1} quantifies the dynamics of the global model relative to the optimal model during a given update period $\mathcal{T}_k$ of {\tt TT-HF}. {Since the theorem holds for all $t\in\mathcal T_k$, it also quantifies the suboptimality gap when global aggregation is performed at time $t+1=t_k$.}
{Note that the term (a) corresponds to the one-step progress of a \emph{centralized} gradient descent under strongly-convex global loss (Assumption~\ref{Assump:SmoothStrong}), so that the term (b) quantifies the additional loss incurred as a result of the model dispersion across the clusters ($A^{(t)}$, which in turn is bounded by Proposition~\ref{Local_disperseT}), consensus errors ($\epsilon^{(t)}$), and SGD noise ($\sigma^2$).}
In fact, without careful choice of our control parameters, the sequence $\mathbb E[F(\hat{\mathbf w}^{(t)})-F(\mathbf w^*)]$ may diverge. Thus, we are motivated to find conditions under which convergence is guaranteed, and furthermore, under which the upper bound in~\eqref{eq:th1mainRes} will approach zero.

Specifically, we aim for {\tt TT-HF} to match the asymptotic convergence behavior of centralized stochastic gradient descent (SGD) under a diminishing step size, which is $\mathcal{O}(1/t)$ \cite{Bubeck}. From~\eqref{eq:th1mainRes}, we see that to match SGD, the terms in $(b)$ should be of order $\mathcal{O}(\eta_t^2)$, i.e., the same as the
{degradation due to the SGD noise, $\eta_{t}^2\beta\sigma^2/2$.}
This implies that {control parameters need to be tuned in such a  way that} $A^{(t)}{=} \mathcal{O}(\eta_t)$ and $\epsilon^{(t)}{=}\mathcal{O}(\eta_t)$.
{Proving that these conditions hold under proper choice of parameters will be part of Theorem~\ref{thm:subLin}.}

\subsection{Sublinear Convergence Rate of $\hat{\mathbf{w}}^{(t)}$}
\label{ssec:sublinear}
Among the quantities involved in Theorem~\ref{co1}, $\eta_t, \tau_k$ and $\epsilon^{(t)}$ are the three tunable parameters that directly impact the learning performance of {\tt TT-HF}. We now prove that with proper choice of these parameters, {\tt TT-HF} enjoys sub-linear convergence with rate of $\mathcal{O}(1/t)$. 

\begin{theorem} \label{thm:subLin_m}
Under Assumptions \ref{beta},~\ref{assump:cons}, and~\ref{assump:SGD_noise},
 suppose
 {$\eta_t=\frac{\gamma}{t+\alpha}$ and $\epsilon^{(t)}=\eta_t\phi$, where}
 $\gamma>1/\mu$,
{$\phi>0$, $\alpha\geq\alpha_{\min}$}
 and $\omega< \omega_{\max}(\alpha)$. Then, by using {\tt TT-HF} for ML model training,
    \begin{align} \label{eq:thm2_result-1-T_m}
        &\mathbb E\left[F(\hat{\mathbf w}^{(t)})-F(\mathbf w^*)\right]\leq\frac{\nu}{t+\alpha}, ~~\forall t,
        \end{align}
        where $\tau = \max_{1\leq \ell \leq k} \{\tau_\ell\}$,
        \begin{align}
            &\hspace{-2mm}\alpha_{\min}\triangleq
    \gamma\beta\max\Big\{
\frac{\mu}{4\beta}-1+\sqrt{(1+\frac{\mu}{4\beta})^2+2\omega},\frac{\beta}{\mu}\Big\},\hspace{-4mm}
\\ & \omega_{\max}(\alpha)\triangleq\frac{1}{\beta\gamma}\sqrt{\frac{\alpha}{Z_1}}
\sqrt{\mu\gamma-1+\frac{1}{1+\alpha}}, \label{eq:OmegaMaxTh2}
\\&\hspace{-5mm}
\nu{\triangleq} \max \left\{\frac{\beta^2\gamma^2Z_2}{\mu\gamma-1},
        \frac{\alpha Z_2/Z_1}{\omega_{\max}^2{-}\omega^2},\alpha\left[F(\hat{\mathbf w}^{(0)}){-}F(\mathbf w^*)\right]\right\},\label{eq:nuTh2}\!\!\!\\
        Z_1 &= \frac{32\beta^2\gamma}{\mu}(\tau-1)\left(1+\frac{\tau}{\alpha-1}\right)^{2}\left(1+\frac{\tau-1}{\alpha-1}\right)^{6\beta\gamma}\label{eq:z_1Th2}\\
        &Z_2 = \frac{\sigma^2+2\phi^2}{2\beta}\nonumber
                +50\gamma(\tau-1)\left(1+\frac{\tau-2}{\alpha+1}\right)\\&
                \qquad\times\left(1+\frac{\tau-1}{\alpha-1}\right)^{6\beta\gamma}\left(\sigma^2+\phi^2+\delta^2\right).\label{eq:z_2Th2}
        \end{align}
\end{theorem}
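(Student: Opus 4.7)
My plan is to prove the $\mathcal O(1/t)$ rate by strong induction on $t$, with inductive hypothesis $\mathbb{E}[F(\hat{\mathbf w}^{(t)}) - F(\mathbf w^*)] \leq \nu/(t+\alpha)$. The base case $t=0$ is guaranteed by the third branch of the max defining $\nu$ in~\eqref{eq:nuTh2}. For the inductive step, I would invoke Theorem~\ref{co1} to obtain
\[
\mathbb{E}[F(\hat{\mathbf w}^{(t+1)}) - F(\mathbf w^*)] \leq (1-\mu\eta_t)\frac{\nu}{t+\alpha} + R^{(t)},
\]
where $R^{(t)}$ collects the dispersion, consensus, and stochastic-gradient noise terms of~\eqref{eq:th1mainRes}. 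Using the elementary inequality $(t+\alpha)^2 \geq (t+\alpha-1)(t+\alpha+1)$, closing the induction reduces to verifying $R^{(t)} \leq \nu(\mu\gamma-1)/(t+\alpha)^2$. Substituting $\eta_t=\gamma/(t+\alpha)$ and $\epsilon^{(t)}=\phi\eta_t$, and exploiting monotonicity $\eta_{t+1}\leq\eta_t$, the pieces of $R^{(t)}$ that do not involve $A^{(t)}$ are collectively at most $\beta(\sigma^2+2\phi^2)\eta_t^2/2$; requiring this to fit into the $\nu(\mu\gamma-1)/(t+\alpha)^2$ budget yields precisely the first branch of~\eqref{eq:nuTh2}.

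The delicate step is bounding the dispersion term $\eta_t\beta^2 A^{(t)}/2$. Here I would invoke Proposition~\ref{Local_disperseT}, which is applicable because the hypothesis $\alpha\geq\alpha_{\min}$ matches its requirement. The bound~\eqref{eq:At} splits $A^{(t)}$ into a ``noise-like'' piece proportional to $(\Sigma_{+,t})^2$ and a ``feedback'' piece proportional to $\omega^2(\Sigma_{+,t})^2\,[F(\bar{\mathbf w}^{(t_{k-1})})-F(\mathbf w^*)]$. Because the global aggregation at $t_{k-1}$ synchronizes all local models to $\hat{\mathbf w}^{(t_{k-1})}$, one has $\bar{\mathbf w}^{(t_{k-1})}=\hat{\mathbf w}^{(t_{k-1})}$, so the inductive hypothesis applied at $t_{k-1}$ bounds $F(\bar{\mathbf w}^{(t_{k-1})})-F(\mathbf w^*)$ by $\nu/(t_{k-1}+\alpha)$. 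Plugging this back produces a self-referential inequality of the schematic form $R^{(t)}\lesssim (Z_1\omega^2\nu + \alpha Z_2)/(t+\alpha)^2$. Closing the induction then requires that the dispersion-feedback slack $Z_1(\omega_{\max}^2-\omega^2)$ be strictly positive, which is exactly the condition $\omega<\omega_{\max}(\alpha)$ in~\eqref{eq:OmegaMaxTh2}; rearranging the resulting inequality for $\nu$ yields the second branch of~\eqref{eq:nuTh2}.

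The main technical obstacle is extracting the explicit polynomials $Z_1,Z_2$ of~\eqref{eq:z_1Th2}--\eqref{eq:z_2Th2} from $(\Sigma_{+,t})^2$. I would handle the two inner products via $\prod_j(1+\eta_j\beta\lambda_+)\leq \exp\!\bigl(\beta\lambda_+\sum_j \gamma/(j+\alpha)\bigr)\leq (1+(\tau-1)/(\alpha-1))^{\gamma\beta\lambda_+}$, and analogously for the trailing product with exponent $\gamma\beta$, so that the combined exponent is replaced by $6\beta\gamma$ once $\omega<\omega_{\max}$ keeps $\lambda_+$ suitably small. The outer summation $\sum_\ell \beta\eta_\ell$ contributes at most $\beta\gamma(\tau-1)/(t_{k-1}+\alpha)$, and squaring this while using $(t+\alpha)/(t_{k-1}+\alpha)\leq 1+\tau/(\alpha-1)$ generates exactly the prefactor $(\tau-1)(1+\tau/(\alpha-1))^2$ appearing in $Z_1$. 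The bookkeeping here is the hard part: any slack in the integral estimate $\sum 1/(j+\alpha)$ or in controlling $\lambda_+$ inflates both $Z_1$ and $Z_2$, and through~\eqref{eq:OmegaMaxTh2} it contracts the admissible range of $\omega$. With these estimates in hand, the sufficient inequality $R^{(t)}\leq\nu(\mu\gamma-1)/(t+\alpha)^2$ is verified by taking $\nu$ as the maximum of the three branches in~\eqref{eq:nuTh2}, completing the induction and establishing~\eqref{eq:thm2_result-1-T_m}.
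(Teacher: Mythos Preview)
Your strategy is the paper's: strong induction on $t$ (the paper organizes it as an outer induction on $k$ and an inner one on $t\in\mathcal T_k$), Theorem~\ref{co1} for the one-step descent, Proposition~\ref{Local_disperseT} for the dispersion, and the synchronization identity $\bar{\mathbf w}^{(t_{k-1})}=\hat{\mathbf w}^{(t_{k-1})}$ to recycle the hypothesis through the feedback term. A few details in your outline would, however, not reproduce the stated constants. First, the first branch of~\eqref{eq:nuTh2} carries the \emph{full} $Z_2$, which absorbs both the non-$A^{(t)}$ residuals \emph{and} the noise-like piece $25(\Sigma_{+,t})^2(\sigma^2+\delta^2+\beta^2(\epsilon^{(0)})^2)/\beta^2$ of~\eqref{eq:At}; you attribute only the former to it. Second, your crude estimate $\sum_\ell\beta\eta_\ell\le\beta\gamma(\tau-1)/(t_{k-1}+\alpha)$, once squared, produces $(\tau-1)^2$ rather than the linear $(\tau-1)$ in $Z_1$; the paper instead bounds the logarithmic sum via $\ln(1+x)\le 2\sqrt x$ to obtain $\Sigma_{+,t}\le 2\beta\gamma\sqrt{(t-t_{k-1})/(t_{k-1}+\alpha+1)}\,(1+(t-t_{k-1})/(t_{k-1}+\alpha-1))^{3\beta\gamma}$, and then estimates $(\Sigma_{+,t})^2$ \emph{differently} for its two occurrences---as $C\cdot\eta_t^2\beta^2(t_{k-1}+\alpha)$ for the feedback term (so the $(t_{k-1}+\alpha)^{-1}$ cancels) and as $C'\cdot\eta_t\beta$ for the noise term---which is what pins down $Z_1$ and $Z_2$ exactly. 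Third, simplifying the closing via $(t+\alpha)^2\ge(t+\alpha-1)(t+\alpha+1)$ discards the $1/(1+\alpha)$ from $\omega_{\max}$; the paper keeps the exact target $\nu/(t+1+\alpha)$, observes that the resulting residual inequality is convex in $t$, and verifies it only at $t\to\infty$ (giving the first branch and $\mu\gamma>1$) and at $t=0$ (giving the second branch and the exact $\omega_{\max}$ in~\eqref{eq:OmegaMaxTh2}).
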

\begin{proof}
See Appendix~\ref{app:subLin}.
\end{proof}
\vspace{0.05in}

Theorem~\ref{thm:subLin_m} is one of the central contributions of this paper, revealing how several parameters (some controllable and others characteristic of the environment) affect the convergence of {\tt TT-HF}, and conditions under which $\mathcal{O}(1/t)$ convergence can be achieved. We make several key observations. First,
{to achieve $\mathcal{O}(1/t)$ convergence, the gradient diversity parameter $\omega=\frac{\zeta}{2\beta}$ should not be too large ($\omega< \omega_{\max}(\alpha)$); in fact, $\omega$ induces error propagation of order $\sim\Vert\mathbf w_c-\mathbf w^*\Vert$, so that too large values of $\omega$ may cause the error to diverge.
Since $\omega_{\max}(\alpha)$ is an increasing  function of $\alpha$ (see~\eqref{eq:OmegaMaxTh2}), larger values of $\omega$ may be tolerated by increasing $\alpha$, i.e., by using a smaller step-size $\eta_t$, confirming the intuition that larger gradient diversity requires a smaller step-size for convergence. However, the penalty incurred may be slower convergence of the suboptimality gap (since $\nu$ increases with $\alpha$, see~\eqref{eq:nuTh2}).
}

We now discuss the choice of the consensus error $\epsilon^{(t)}$. To guarantee $\mathcal O(1/t)$ convergence, Theorem~\ref{thm:subLin_m} dictates that it should be chosen as $\epsilon^{(t)}=\eta_t\phi$ for a constant $\phi>0$, i.e. it should decrease over time according to the step-size.
To see that this is a {feasible and practical} condition, note from Lemma \ref{lemma:cons} that the upper bound of $\Vert \mathbf e_i^{(t)} \Vert$ {increases proportionally to the divergence $\Upsilon^{(t)}_c$ (see \eqref{eq:cons}), and decreases at geometric rate with the number of consensus steps. In turn, $\Upsilon^{(t)}_c$ can be shown to be of the order of the step-size $\eta_t$ (assuming $\eta_t \approx \eta_{t-1}$):
$$
\Upsilon^{(t)}_{{c}}=
\max_{j,j'\in\mathcal S_c}\Vert \widetilde{\mathbf{w}}^{(t)}_{{j}} -\widetilde{\mathbf{w}}^{(t)}_{{j'}}\Vert
\approx \eta_{t} \max_{j,j'\in\mathcal S_c}\big\Vert \widehat{\mathbf g}_{j}^{(t-1)}-\widehat{\mathbf g}_{j'}^{(t-1)}\big\Vert,
$$
where we used \eqref{8}, and the approximation holds if we assume the difference ${\mathbf{w}}^{(t-1)}_{{j}} - {\mathbf{w}}^{(t-1)}_{{j'}}$ in initial model parameters at $t-1$ is negligible compared to the gradients.
This is a legit assumption, since ${\mathbf{w}}^{(t-1)}_{{j}}$ is the model parameter at node $j$, \emph{after} the consensus rounds at time $t-1$.
Using Lemma~\ref{lemma:cons}, it then follows that, to make $\epsilon^{(t)}=\eta_t\phi$, the number of consensus rounds should be chosen such that $
(\lambda_{{c}})^{\Gamma^{(t)}_{{c}}}
\approx \frac{1}{\sqrt{s_c}}\phi/\max_{j,j'\in\mathcal S_c}\big\Vert \widehat{\mathbf g}_{j}^{(t-1)}-\widehat{\mathbf g}_{j'}^{(t-1)}\big\Vert$,
{and are thus dominated by the divergence of local gradients within the cluster and SGD noise, irrespective of the step-size.}
We will use this property in the development of our control algorithm for $\Gamma^{(t)}_c$ in Sec.~\ref{Sec:controlAlg}.
}


The bound also shows the impact of the duration of local model training intervals $\tau$ on the convergence, through the term $\nu$ in~\eqref{eq:thm2_result-1-T_m}.
In particular, from~\eqref{eq:nuTh2},
it can be seen that increasing $\tau$ results in a sharp increase of $\nu$ (through the factors $Z_1$ and $Z_2$ defined in \eqref{eq:z_1Th2} and~\eqref{eq:z_2Th2}). Moreover, we also observe a quadratic impact on $\nu$ with respect to the consensus error $\epsilon^{(t)}$ (through $\phi$). It then follows that, all else constant, increasing the value of $\tau$ requires a smaller value of $\phi$ (i.e., more accurate consensus) to achieve a desired value of $\nu$ in~\eqref{eq:nuTh2}. This is consistent with how {\tt TT-HF} is designed, since the motivation for including consensus rounds (to decrease $\epsilon^{(t)}$) is to reduce the global aggregation frequency, which results in uplink bandwidth utilization and power consumption savings.


{These observations reveal a trade-off between accuracy, delay, and energy consumption.}
In the next section, we leverage these relationships in developing an adaptive algorithm for {\tt TT-HF} that tunes the control parameters to achieve the convergence bound in Theorem~\ref{thm:subLin_m} while minimizing network costs.

\section{Adaptive Control Algorithm for {\tt TT-HF}}\label{Sec:controlAlg}
\noindent There are three parameters in {\tt TT-HF} that can be tuned over time: (i) local model training intervals $\tau_k$, (ii) gradient descent step size $\eta_t$, and (iii) rounds of D2D communications $\Gamma_c^{(t)}$. In this section, we develop a control algorithm (Sec.~\ref{ssec:control}) based on Theorem~\ref{thm:subLin_m} for tuning (i), (ii) at the main server at the beginning of each global aggregation, and (iii) at each device cluster in a decentralized manner. To do so, we propose an approach for determining the learning-related parameters (Sec.~\ref{subsec:learniParam}), a resource-performance tradeoff optimization for $\tau_k$ and $\Gamma_c^{(t)}$ (Sec.~\ref{subsec:learnduration}), and estimation procedures for dataset-related parameters (Sec.~\ref{subsec:cont}).

\subsection{Learning-Related Parameters ($\alpha, \gamma, \phi$, $\eta_t$)}\label{subsec:learniParam}
We aim to tune the step size-related parameters ($\alpha, \gamma$) and the consensus error coefficient ($\phi$) to satisfy the conditions in Theorem~\ref{thm:subLin_m}. In this section, we present a method for doing so  given properties of the ML model, local datasets, and SGD noise ($\beta, \mu, \zeta, \delta, \sigma$, and thus $\omega=\zeta/(2\beta)$). Later in Sec.~\ref{subsec:cont}, we will develop methods for estimating $\zeta, \delta, \sigma$ at the server.\footnote{We assume that $\beta$ and $\mu$ can be computed at the server prior to training given the knowledge of the deployed ML model.} We assume that the latency-sensitivity of the learning application specifies a tolerable amount of time that {\tt TT-HF} can wait between consecutive global aggregations, i.e., the value of $\tau$. 

To tune the step size parameters, first, a value of $\gamma$ is determined such that $\gamma>1/\mu$. Then, since smaller values of $\alpha$ are associated with faster convergence, the minimum value of $\alpha$ that simultaneously satisfies the conditions in the statement of Theorem~\ref{thm:subLin_m} is chosen, i.e., $\alpha\geq\alpha_{\min}$ and $\omega_{\max}>\omega$ (note that $\omega_{\max}$ is a function of $\alpha$, see~\eqref{eq:OmegaMaxTh2}.

Let $T$ be a (maximum) desirable duration of the entire {\tt TT-HF} algorithm, and $\xi$ be a (maximum) desirable loss at the end of the model training, which may be chosen based on the learning application. 
To satisfy the loss requirement, from Theorem~\ref{thm:subLin_m} the following condition needs to be satisfied,
\begin{equation}\label{42}
    \frac{\nu}{T+\alpha} \leq \xi,
\end{equation} 
yielding a maximum value tolerated for $\nu$,
i.e., $\nu^{\mathsf{max}}=\xi(T+\alpha)$. 
Since $\nu$ is a function of the local model training period $\tau$ and consensus coefficient $\phi$ (see~\eqref{eq:nuTh2}), this bound places a condition on the parameters $\tau$ and $\phi$.
Furthermore, with the values of $\alpha$ and $\gamma$ chosen above, along with the value of $\tau$, the algorithm may not always be able to provide any arbitrary desired loss $\xi$ at time $T$. Therefore, 
considering the expression for $\nu$ from Theorem~\ref{thm:subLin_m}, the following feasibility check is conducted:
\begin{equation}\label{46}
\hspace{-4.1mm}\resizebox{.93\linewidth}{!}{$
\max\hspace{.51mm} \left\{\frac{\beta^2\gamma^2Z^{\mathsf{min}}_2}{\mu\gamma-1},
\frac{\alpha Z^{\mathsf{min}}_2/Z_1}{\omega_{\max}^2-\omega^2},\frac{\alpha\Vert\nabla F(\hat{\mathbf w}^{(0)})\Vert^2}{2\mu }\right\}\hspace{-.7mm} \leq \nu^{\mathsf{max}}\hspace{-.15mm},$}\hspace{-4mm}
\end{equation} 
where

\vspace{-0.2in}
{ \begin{equation}
Z^{\mathsf{min}}_2 =
        \frac{\sigma^2}{2\beta}
        +50\gamma(\tau-1)\Big(1+\frac{\tau-2}{\alpha+1}\Big)
        \Big(1+\frac{\tau-1}{\alpha-1}\Big)^{6\beta\gamma}\big(\sigma^2+\delta^2\big) \nonumber
\end{equation} }

\vspace{-0.1in}
\noindent is the value of $Z_2$  obtained by setting the consensus coefficient $\phi=0$ in~\eqref{eq:z_2Th2}. The third term inside the $\max$ of \eqref{46} is obtained via the Polyak-Lojasiewicz 
inequality $\left\Vert\nabla F(\hat{\mathbf w}^{(t)})\right\Vert^2 \geq 2\mu [F(\hat{\mathbf w}^{(t)})-F(\mathbf w^*)]$, since the value of $F(\mathbf w^*)$ is not known, {whereas $\nabla F(\hat{\mathbf w}^{(t)})$} can be estimated using the local gradient of the sampled devices at the server.
If~\eqref{46} is not satisfied, the chosen values of $\tau$, $\xi$ and/or $T$ must be loosened, and this procedure must be repeated until \eqref{46} becomes feasible.

Once $\alpha, \gamma$ and $\tau$ are chosen, we move to selecting $\phi$. All else constant, larger consensus errors would be more favorable in {\tt TT-HF} due to requiring less rounds of D2D communications (Lemma~\ref{lemma:cons}). The largest possible value of $\phi$, denoted $\phi^{\mathsf{max}}$, can be obtained directly from~\eqref{46} via replacing $Z_2^{\mathsf{min}}$ with $Z_2$ and considering the definition of $Z_2$ in~\eqref{eq:z_2Th2}:\footnote{In the $\max$ function in~\eqref{47}, only the first two arguments from the function in~\eqref{46} are present as the third is independent of $Z_2$ and $\phi$.}
\begin{align} \label{47}
\hspace{-4mm}
    \phi^{\mathsf{max}}\hspace{-.5mm}=\hspace{-1mm}\sqrt{\beta}\hspace{-1mm}
    \sqrt{\hspace{-1mm}\frac{\nu^{\mathsf{max}}
    \min \left\{
    \frac{\mu\gamma-1}{\beta^2\gamma^2},
\frac{Z_1\left(\omega_{\max}^2-\omega^2\right)}{\alpha}\right\}-Z_2^{\mathsf {min}}}{1+50\beta\gamma(\tau-1)\left(1+\frac{\tau-2}{\alpha+1}\right)
    \left(1+\frac{\tau-1}{\alpha-1}\right)^{6\beta\gamma}}}.\hspace{-4mm}
\end{align}
Note that \eqref{47} exists if the feasibility check in \eqref{46} is satisfied.

The values of $\nu^{\mathsf{max}}$ and $\alpha$ are re-computed at the server at each global aggregation. The devices use this to set their step sizes $\eta_t$ during the next local update period accordingly.

\subsection{Local Training Periods ($\tau_k$) and Consensus Rounds ($\Gamma^{(t)}_c$)}\label{subsec:learnduration}
One of the main motivations behind {\tt TT-HF} is minimizing the resource consumption among edge devices during model training. We thus propose tuning the $\tau_k$ and $\Gamma^{(t)}_c$ parameters according to the joint impact of three metrics: energy consumption, training delay imposed by consensus, and trained model performance. To capture this {trade-off}, we formulate an optimization problem $(\bm{\mathcal{P}})$ solved by the main server at the beginning of each global aggregation period $\mathcal{T}_k$, i.e., when $t=t_{k-1}$:
\begin{align*} 
    &(\bm{\mathcal{P}}): ~~\min_{\tau_k}  \underbrace{\frac{c_1\Big(E_{\textrm{Glob}}+\sum\limits_{t=t_{k-1}}^{t_{k-1}+\tau_k}\sum\limits_{c=1}^{N} \Gamma_c^{(t)}s_c E_{\textrm{D2D}}\Big)}{\tau_k}}_{(a)}+
    \nonumber \\&
    \!\!\!\! \underbrace{\frac{c_2\Big(\Delta_{\textrm{Glob}}+\sum\limits_{t=t_{k-1}}^{t_{k-1}+\tau_k}\sum\limits_{c=1}^{N} \Gamma_c^{(t)}\Delta_{\textrm{D2D}}\Big)}{{\tau_k}}}_{(b)}
   + c_3\Big(\underbrace{1-\frac{t_{k-1}+\alpha}{t_{k-1}+\tau_k+\alpha}}_{(c)}\Big)
\end{align*}
\vspace{-0.3in}
    \begin{align}
    & \textrm{s.t.}\nonumber \\
   & \;\;\; \Gamma_c^{(t)}=\max\Big\{\Big\lceil\log\Big(\frac{\eta_t\phi}{\sqrt{s_c }\Upsilon_c^{(t)}}\Big)/\log\Big(\lambda_c\Big)\Big\rceil,0\Big\},\forall c, \hspace{-3mm} \label{eq:consensusNum}\\ 
   & \;\;\; 1 \leq \tau_k \leq \min{\{\tau, T-t_{k-1}\}}, \tau_k\in\mathbb{Z}^+, \label{eq:tauMax}\\ 
    & \;\;\; \Upsilon_c^{(t_{k-1})}=0,~\forall c, \label{eq:Up_init}\\ 
    & \;\;\; \Upsilon_c^{(t)} = \mathbbm{1}_{\{\Gamma^{(t-1)}_c =0\}} (\underbrace{A_c^{(k)}\Upsilon_c^{(t-1)}+B_c^{(k)}}_{(d)}) \; + \nonumber\\& 
    ~~~~~~~~~~~\left(1-\mathbbm{1}_{\{\Gamma^{(t-1)}_c =0\}}\right) (\underbrace{a_c^{(k)}\Upsilon_c^{(t-1)}+b_c^{(k)}}_{(e)}),~\forall c, \label{eq:Up_dyn}
\end{align}
where $E_{\textrm{D2D}}$ is the energy consumption of each D2D communication round for each device, $E_{\textrm{Glob}}$ is the energy consumption for device-to-server communications, $\Delta_{\textrm{D2D}}$ is the communication delay per D2D round conducted in parallel among the devices, and $\Delta_{\textrm{Glob}}$ is the device-to-server communication delay.
The objective function captures the trade-off between average energy consumption (term $(a)$), average D2D delay (term $(b)$), and expected ML model performance (term $(c)$). In particular, term $(c)$ is a penalty on the ratio of the upper bound given in~\eqref{eq:thm2_result-1-T_m}
between the updated model and the previous model at the main server. A larger ratio implies the difference in performance between the aggregations is smaller, and thus that synchronization is occurring frequently, consistent with $\tau_k$ appearing in the denominator. This term also contains a diminishing marginal return from global aggregations as the learning proceeds: smaller values of $\tau_k$ are more favorable in the initial stages of ML model training, i.e., for smaller $t_{k-1}$. This matches well with the intuition that ML model performance has a sharper increase at the beginning of model training, so frequent aggregations at smaller $t_{k-1}$ will have larger benefit to the model performance stored at the main server. The coefficients $c_1,c_2,c_3 \geq 0$ are introduced to weigh each of the design considerations.

The equality constraint on $\Gamma^{(t)}_c$ in~\eqref{eq:consensusNum} forces the condition $\epsilon^{(t)}=\eta_t\phi$ imposed by Theorem~\ref{thm:subLin_m}, obtained using the result in Lemma~\ref{lemma:cons}. This equality reveals the condition under which the local aggregations, i.e., D2D communication, are triggered. Note that since the spectral radius is less than one, we have $\log(\lambda_c)<0$, thus the requirement to conduct D2D communications, i.e., triggering in cluster model synchronization, is $\sqrt{s_c}\Upsilon_c^{(t)}>\eta\phi$. In other words, when the divergence of local models exceeds a predefined threshold $\Upsilon_c^{(t)}>\frac{\eta\phi}{\sqrt{s_c}}$, local synchronization is triggered via D2D communication, and the number of D2D rounds is given by $\Gamma_c^{(t)}$. Also, ~\eqref{eq:tauMax} ensures the feasible ranges for $\tau_k$.\label{parg:in-cluster}


As can be seen from~\eqref{eq:consensusNum}, to obtain the desired consensus rounds for future times $t \in \mathcal{T}_k$, the values of $\Upsilon_c^{(t)}$ -- the divergence of model parameters in each cluster -- are needed. Obtaining these exact values at $t = t_{k-1}$ is not possible since it requires the knowledge of the model parameters $\widetilde{\mathbf{w}}_i^{(t)}$ of the devices for the future timesteps, which is non-causal. To address this {challenge, problem $(\bm{\mathcal{P}})$ incorporates the additional constraints \eqref{eq:Up_init} and \eqref{eq:Up_dyn}, which aim to} estimate the future values of $\Upsilon^{(t)}_c$, $\forall c$ through a time-series predictor, initialized as $\Upsilon_c^{(t_{k-1})}=0$ in \eqref{eq:Up_init} (since, at the beginning of the period, the nodes start with the same model provided by the server).
 In the expression \eqref{eq:Up_dyn}, $\mathbbm{1}_{\{\Gamma^{(t-1)}_c =0\}}$ takes the value of $1$ when no D2D communication rounds are performed at $t-1$, and $0$ otherwise. Two linear terms ($(d)$ and $(e)$) are included, one for each of these cases, characterized by coefficients $A_c^{(k)},B_c^{(k)},a_c^{(k)},b_c^{(k)}\in\mathbb{R}$ which vary across clusters and global aggregations. These coefficients are estimated through fitting the linear functions to the values of $\Upsilon_c^{(t)}$ obtained from the previous global aggregation $\mathcal{T}_{k-1}$. These values of $\Upsilon_c^{(t)}$ from $\mathcal{T}_{k-1}$ are in turn estimated in a distributed manner through a method presented in Sec.~\ref{subsec:cont}. 


Note that $(\bm{\mathcal{P}})$ is a non-convex and integer optimization problem. Given the parameters in Sec.~\ref{subsec:learniParam}, the solution for $\tau_k$ can be obtained via a line search over the integer values in the range of $\tau_k$ given in~\eqref{eq:tauMax}. Solving our optimization problem involves two steps: (i) linear regression of the constants used in (53), i.e., $A_c^{(k)},B_c^{(k)},a_c^{(k)},b_c^{(k)}$ using the history of observations, and (ii) line search over the feasible integer values for $\tau_k$. The complexity of part (i) is $\mathcal O(\tau_{k-1})$, since the dimension of each observant, i.e., $\Upsilon_c^{(t)}$, is one and the observations are obtained via looking back into the previous global aggregation interval.   Also, the complexity of (ii) is $\mathcal O(\tau_{\max})$ since it is just an exhaustive search 
{over} the range of $\tau\leq\tau_{\max}$, where $\tau_{\max}$ is the maximum tolerable interval that satisfies the feasibility conditions in Sec.~\ref{subsec:learniParam}.
While the optimization produces predictions of $\Gamma^{(t)}_c$ for $t \in \mathcal{T}_k$ through~\eqref{eq:consensusNum}, the devices will later compute~\eqref{eq:consensusNum} at time $t$ when the real-time estimates of $\Upsilon^{(t)}_c$ can be made through~\eqref{eq:Ups_est}, as will be discussed next.

\subsection{Data and Model-Related Parameters ($\delta, \zeta, \sigma^2, \Upsilon^{(t)}_c$)} \label{subsec:cont}

We also need techniques for estimating the gradient diversity ($\delta$, $\zeta$), SGD noise ($\sigma^2$), and cluster parameter divergence ($\Upsilon^{(t)}_c$).


\subsubsection{Estimation of $\delta,\zeta,\sigma^2$}\label{subsub:estparam} 
These parameters can be estimated by the main server during model training. The server can estimate $\delta$ and $\zeta$ at each global aggregation by receiving the latest gradients from SGD at the sampled devices. $\sigma^2$ can first be estimated locally at the sampled devices, and then decided at the main server.

Specifically, to estimate $\delta,\zeta$, since the value of $\mathbf w^*$ is not known, we upper bound the gradient diversity in Definition \ref{gradDiv} by introducing a new parameter $\delta'$:
\begin{align} \label{eq:estGradDiv}
  \hspace{-3mm}  \Vert\nabla\hat F_c(\mathbf w)-\nabla F(\mathbf w)\Vert \leq \delta+ \zeta \Vert\mathbf w-\mathbf w^*\Vert \leq  \delta' + \zeta \Vert\mathbf w\Vert,  \hspace{-3mm} 
\end{align}
which satisfies $\delta'\geq \delta+\zeta\Vert\mathbf w^*\Vert$. Thus, a value of $\zeta< 2\beta$ is set, and then the value of $\delta'$ is estimated using~\eqref{eq:estGradDiv}, where the server uses the SGD gradients $\widehat{\mathbf g}_{n_c}^{(t_k)}$ from the sampled devices $n_c$ at the instance of each global aggregation $k$, and chooses the smallest $\delta'$ such that $\Vert\nabla\hat F_c(\hat{\mathbf w}^{(t_k)})-\nabla F(\hat{\mathbf w}^{(t_k)})\Vert \approx \Vert \widehat{\mathbf g}_{n_c}^{(t_k)}-\sum_{c'=1}^N \varrho_{c'}\widehat{\mathbf g}_{n_{c'}}^{(t_k)}\Vert\leq \delta'+\zeta \Vert \hat{\mathbf w}^{(t_k)}\Vert$ $\forall c$.



From Assumption~\ref{assump:SGD_noise}, a simple way of obtaining the value of $\sigma^2$ would be comparing the gradients from sampled devices with their full-batch counterparts. But this might be impractical if the local datasets $\mathcal{D}_i$ are large. Thus, we propose an approach where $\sigma^2$ is computed at each device through two independent mini-batches of data. Recall $|\xi_i|$ denotes the mini-batch size used at node $i$ during the model training. 
At each instance of global aggregation, the sampled devices each select two mini-batches of size $|\xi_i|$ and compute two SGD realizations $\mathbf g_1$, $\mathbf g_2$ from which $\widehat{\mathbf g}_{i}^{(t_k)} = (\mathbf g_1 + \mathbf g_2) /2$. Since $\mathbf  g_1= \nabla F_i(\mathbf w^{(t_k)})+\mathbf n_1$, $\mathbf g_2= \nabla F_i(\mathbf w^{(t_k)}) +\mathbf n_2$, we use the fact that $\mathbf n_1$ and $\mathbf n_2$ are independent random variables with the same upper bound on variance $\sigma^2$, and thus $\Vert \mathbf  g_1-\mathbf  g_2 \Vert^2 = \Vert \mathbf n_1-\mathbf n_2 \Vert^2 \leq  2\sigma^2$, from which $\sigma^2$ can be approximated locally. These scalars are then transferred to the main server, which in turn chooses the maximum reported $\sigma^2$ from the sampled devices.
 
\subsubsection{Estimation of $\Upsilon^{(t)}_c$} 
Based on~\eqref{eq:cons}, we propose the following approximation to estimate the value of $\Upsilon^{(t)}_c$:
    \begin{align} \label{eq:Ups_est}
         \Upsilon^{(t)}_c&
         =\max_{j,j'\in\mathcal S_c}\Vert\tilde{\mathbf w}_j^{(t)}-\tilde{\mathbf w}_{j'}^{(t)}\Vert
    \nonumber \\ &
    \approx \underbrace{\max_{j\in\mathcal S_c}\Vert\tilde{\mathbf w}_j^{(t)}\Vert}_{(a)}- \underbrace{\min_{j\in\mathcal S_c}\Vert\tilde{\mathbf w}_j^{(t)}\Vert}_{(b)},
    \end{align}
where we have used the lower bound $\Vert\mathbf a - \mathbf b \Vert \geq \Vert\mathbf a\Vert-\Vert\mathbf b\Vert$ for vectors $\mathbf a$ and $\mathbf b$, which we experimentally observe gives a better approximation of $\Upsilon^{(t)}_c$. In~\eqref{eq:Ups_est}, $(a)$ and $(b)$ can be both obtained in a distributed manner through scalar message passing, where each device $i\in\mathcal{S}_c$ computes $\Vert\tilde{\mathbf w}_i^{(t)}\Vert$  
and shares it with its neighbors $j \in \mathcal{N}_i$. The devices update their $\max$ and $\min$ accordingly, share these updated values, and the process continues. After the rounds of message passing has exceeded the diameter of the graph, each node has the value of $(a)$ and $(b)$, and thus the estimate of $\Upsilon^{(t)}_c$. The server can obtain these values for $t \in \mathcal{T}_k$ from the node $n_c$ it samples for cluster $c$ at $t = t_k$. 

\subsection{{\tt TT-HF} with Adaptive Parameter Control}
\label{ssec:control}
The full {\tt TT-HF} procedure with adaptive parameter control is summarized in Algorithm~\ref{GT}. The values of $\tau$, desired $\xi$ and $T$, and model characteristics $\mu, \beta$ are provided as inputs.

First, estimates of different parameters are initialized, the value of $\phi$ is determined, and the first period of model training is set (lines 2-6). Then, during the local model training intervals, in each timestep, the devices (i) compute the SGD updates, (ii) estimate the cluster model divergence, (iii) determine the number of D2D consensus rounds, and (iv) conduct the consensus process with their neighboring nodes (lines 12-16).

At global aggregation instances, the sampled devices compute their estimated local SGD noise, and transmit it along with their model parameter vector, gradient vector, and estimates of cluster parameter divergence over the previous global aggregation round to the server (lines 20-21). Then, the main server (i) updates the global model, (ii) estimates $\zeta,\delta',\sigma$ for the step size, (iii) estimates the linear model coefficients used in~\eqref{eq:Up_dyn}, (iv) obtains the optimal length $\tau_{k+1}$ of the next local model training interval, and (v) broadcasts the updated global model, step size coefficients, local model training interval, and consensus coefficient, along with the indices of the sampled devices for the next global aggregation (line 23-29).

\begin{algorithm}[t]
{\footnotesize 
\SetAlgoLined
\caption{{\tt TT-HF} with adaptive control parameters.} \label{GT}
\KwIn{Desirable loss criterion $\xi$, length of model training $T$, maximum tolerable $\tau$, and model-related parameters $\beta,\mu$} 
\KwOut{Global model $\hat{\mathbf w}^{(T)}$}
// Start of initialization by the server\\
 Initialize $\hat{\mathbf w}^{(0)}$ and broadcast it among the devices along with the indices $n_c$ of the sampled devices for the first global aggregation.\\
 Initialize estimates of $\zeta \ll 2\beta,\delta',\sigma$.\\
 Initialize $\alpha$ and $\gamma > 1/\mu$ for the step size $\eta_t=\frac{\gamma}{t+\alpha}$, where $\alpha$ is the smallest solution that satisfies the condition mentioned in Sec.~\ref{subsec:learniParam}, and $\alpha, \gamma,\xi,\tau, T$ satisfy~\eqref{46}.\\
 Obtain $\phi^{\mathsf{max}}$ from~\eqref{47}.\\
 Initialize $\tau_1$ randomly, where $\tau_1\leq \tau$.\\
// End of initialization by the server\\
Initialize $t=1, ~k=1,~ t_0=0, ~t_1=\tau_1$.\\
 \While{$t\leq T$}{
     \While{$t\leq t_k$}{
        \For{$c=1:N$}
        {
        // Operation at the clusters\\
         Each device $i\in\mathcal{S}_c$ performs a local SGD update based on~\eqref{eq:SGD} and~\eqref{8} using $\widehat{\mathbf w}_i^{(t-1)}$ to obtain~$\widetilde{\mathbf{w}}_i^{(t)}$.\\
        Devices estimate the value of $\Upsilon^{(t)}_c$ using~\eqref{eq:Ups_est} with distributed message passing.\\
        Devices compute the number of D2D communication consensus rounds  $\Gamma_c^{(t)}$ according to~\eqref{eq:consensusNum}.\\
        Devices inside the cluster conduct $\Gamma^{(t)}_{{c}}$ rounds of consensus procedure based on~\eqref{eq:ConsCenter}, initializing  $\textbf{z}_{i}^{(0)}=\widetilde{\mathbf{w}}_i^{(t)}$, and setting $\mathbf w_i^{(t)} = \textbf{z}_{i}^{(\Gamma^{(t)}_{{c}})}$.
      }
      \If{$t=t_k$}{
      // Operation at the clusters\\
      Each sampled device $n_c$ estimates the local SGD noise 
      as described in Sec.~\ref{subsub:estparam}.\\
      Each sampled devices $n_c$ sends $\mathbf w_{n_c}^{(t_k)}$, $\widehat{\mathbf g}_{n_c}^{(t_k)}$, the estimated local SGD noise, and the estimated values of $\Upsilon_c(t)$, $t \in \mathcal{T}_k$ to the server. \\
      // Operation at the server\\
      Compute $\widehat{\mathbf w}^{(t_k)}$ using \eqref{15}.\\
      Set $\zeta\ll 2\beta$, and compute $\delta'=\Big[\max_{c} \{\Vert \widehat{\mathbf g}_{n_c}^{(t_k)}-\sum_{c'=1}^N \varrho_{c'}\widehat{\mathbf g}_{n_{c'}}^{(t_k)}\Vert- \zeta \Vert \hat{\mathbf w}^{(t_k)}\Vert\}\Big]^+$. \\
      Choose the maximum among the reported local SGD noise values as $\sigma^2$.\\
      Characterize $\alpha$ and $\gamma > 1/\mu$ for the step size $\eta_t=\frac{\gamma}{t+\alpha}$ according to the condition on $\alpha$ in Sec.~\ref{subsec:learniParam} and \eqref{46}, and compute $\phi^{\mathsf{max}}$ according to~\eqref{47}.\\
   Estimate $A^{(k+1)}_c$, $B^{(k+1)}_c$, $a^{(k+1)}_c$, and $b^{(k+1)}_c$, $\forall c$ in~\eqref{eq:Up_dyn} via linear data fitting.\\
   Solve the optimization~$(\bm{\mathcal{P}})$ to obtain $\tau_{k+1}$.\\
     Broadcast $\widehat{\mathbf w}^{(t_k)}$ among the devices along with (i) the $n_c$ for $k+1$, (ii) $\alpha$, (iii) $\gamma$, (iv) $\tau_{k+1}$, and (iv) $\phi$.
       }
     $t=t+1$
 }
}
}
\end{algorithm}

\vspace{-1mm}
\section{Numerical Evaluations}
\label{sec:experiments}

\noindent In this section, we conduct numerical experiments to verify the performance of {\tt TT-HF}. After describing the setup in Sec.~\ref{ssec:setup}, we study model performance/convergence in Sec.~\ref{ssec:conv-eval} and the impact of our adaptive control algorithm in Sec.~\ref{ssec:control-eval}. Overall, we will see that {\tt TT-HF} provides substantial improvements in training time, accuracy, and/or resource utilization compared to conventional federated learning~\cite{wang2019adaptive,Li}. 

\vspace{-2mm}
\subsection{Experimental Setup}\label{channelModel}
\label{ssec:setup}

\noindent \textbf{Network architecture.} 
We consider a network consisting of $I = 125$ edge devices placed into $N = 25$ clusters, each with $s_c = 5$ devices placed uniformly at random in a $50\text{ m}\times50\text{ m}$ square field (in each cluster). The channel model and D2D network configuration are explain below.

\textit{\textit{Channel model:}} We assume that the D2D communications are conducted using orthogonal frequency division techniques, e.g., OFDMA, to reduce the interference across the devices. We consider the instantaneous channel capacity for transmitting data from node $i$ to $i'$, both belonging to the same cluster $c$ following this formula:
\begin{equation}\label{eq:shannon_m}
    C_{i,i'}^{(t)}=W\log_2 \left(1+ \frac{p_i^{(t)} \vert {h}^{(t)}_{i,i'}\vert^2}{\sigma^2} \right),
\end{equation}
where $\sigma^2=N_0 W$ is the noise power, with $N_0=-173$ dBm/Hz denoting the white noise power spectral density;
$W=1$ MHz is the bandwidth; $p^{(t)}_i=24$ dBm, $\forall i,t$ is the transmit power; ${h}^{(t)}_{i,i'}$ is the channel coefficient. We incorporate the effect of both large-scale and small scaling fading in $h^{(t)}_{i,i'}$, given by \cite{tse2005fundamentals,channelJun2021}:
\begin{equation}
     h_{i,i'}^{(t)}= \sqrt{\beta_{i,i'}^{(t)}} u_{i,i'}^{(t)},
\end{equation}
where $\beta_{i,i'}^{(t)}$ is the large-scale pathloss coefficient and $ u_{i,i'}^{(t)} \sim \mathcal{CN}(0,1)$ captures Rayleigh fading, varying i.i.d. over time. We assume channel reciprocity, i.e., $h^{(t)}_{i,i'}=h^{(t)}_{i',i}$, for simplicity. We model $\beta_{i,i'}^{(t)}$ as \cite{tse2005fundamentals,channelJun2021}
\begin{equation}
    \beta_{i,i'}^{(t)} = \beta_0 - 10\alpha\log_{10}(d^{(t)}_{i,i'}/d_0).
\end{equation}
where $\beta_0=-30$ dB denotes the large-scale pathloss coefficient at a reference distance of $d_0=1$ m, $\alpha$ is the path loss exponent chosen as $3.75$ suitable for urban areas, and $d^{(t)}_{i,i'}$ denotes the instantaneous Euclidean distance between the respective nodes.
\\

\textit{D2D network configuration:} 
We incorporate the wireless channel model explained above into our scenario to define the set of D2D neighbors and configure the cluster topologies.
We assume that the nodes moves slowly so that their locations remain static during each global aggregation period, although it may change between consecutive global aggregations.  
We build the cluster topology based on channel reliability across the nodes quantified via the outage probability. Specifically, considering~(\ref{eq:shannon_m}), the probability of outage upon transmitting with data rate of $R^{(t)}_{i,i'}$ between two nodes $i,i'$  is given by
\begin{figure}[t]
\includegraphics[width=1.0\columnwidth]{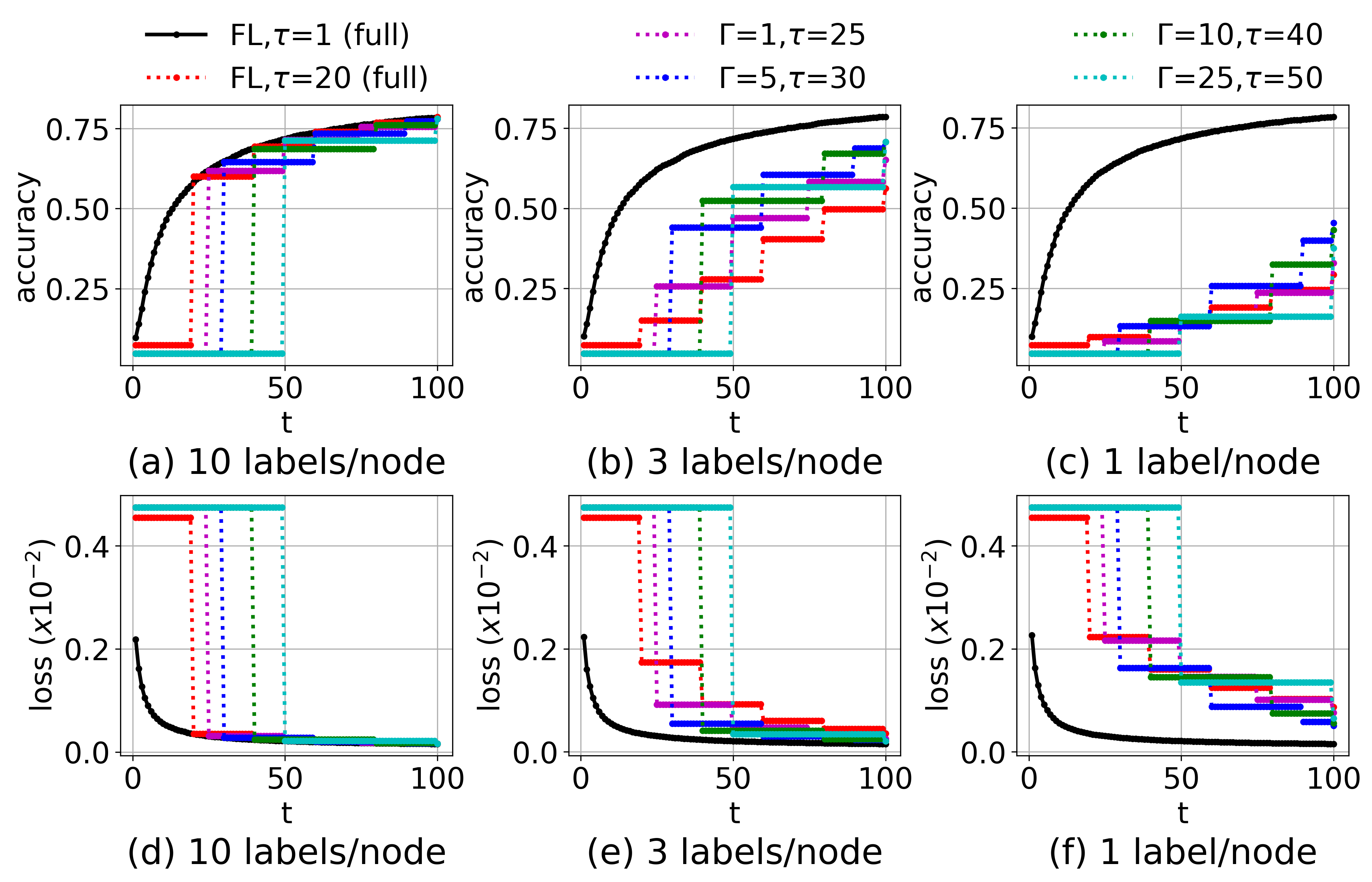}
\centering
\caption{Performance comparison between {\tt TT-HF} and baseline methods when varying the local model training interval ($\tau$) and the number of D2D consensus rounds ($\Gamma$). With a larger $\tau$, {\tt TT-HF} can still outperform the baseline federated learning~\cite{wang2019adaptive,Li} if $\Gamma$ is increased, i.e., local D2D communications can be used to offset the frequency of global aggregations. \emph{full} implies that baseline schemes do not leverage D2D and instead require all the device to engage in uplink transmissions. SVM is  used for classification. }
\label{fig:mnist_poc_2_all}
\vspace{-5mm}
\end{figure}

\begin{equation}\label{eq:out}
    \textrm{p}^{\mathsf{out},(t)}_{i,i'}=1-\exp\Big(\frac{-(2^{R_{i,i'}^{(t)}}-1)}{\mathrm{SNR}^{(t)}_{i,i'}}\Big),
\end{equation}
 where $\mathrm{SNR}^{(t)}_{i,i'}=\frac{p_i^{(t)}\vert h_{i,i'}^{(t)}\vert^2}{\sigma^2}$.
To construct the graph topology of each cluster $c$, we create an edge between two nodes $i$ and $i'$ if and only if their respective outage probability satisfies $\textrm{p}^{\mathsf{out},(t)}_{i,i'} \leq 5\%$ given a defined common data rate $R^{(t)}_{i,i'}=R^{(t)}_c$, chosen as $R_c^{(t)}=14$ Mbps. This value is used since it is large enough to neglect the effect of quantization error in digital communication of the signals, and at the same time results in connected graphs inside the clusters (numerically, we found an average degree of 2 nodes in each cluster).  
{After creating the topology based on the large-scale pathloss and outage probability requirements, we model outages during the consensus phase as follows:} if the {instantaneous channel capacity} (given by~\eqref{eq:shannon_m}, which captures the effect of fast fading) on an edge drops below $R_c^{(t)}$, outage occurs, so that the packet is lost and the model update is not received at the respective receiver. Therefore, although nodes are assumed to be static during each global aggregation period,
     the {instantaneous} cluster topology, i.e., the communication configuration among the nodes, changes
     with respect to every local SGD iteration in a model training interval due to outages.
\\

     Given a communication graph we choose $d_c=1/8$ to form the consensus iteration at each node $i$ as $\textbf{z}_{i}^{(t'+1)} = \textbf{z}_{i}^{(t')}+d_{{c}}\sum_{j\in \mathcal{N}_i} (\textbf{z}_{j}^{(t')}-\textbf{z}_{i}^{(t')})$  (refer to the discussion provided after Assumption~\ref{assump:cons}). Note that given $d_c$, broadcast by the server at the beginning of each global aggregation, each node can conduct D2D communications and local averaging without any global coordination.  
    
\noindent\textbf{Datasets.}
We consider MNIST~\cite{MNIST} and Fashion-MNIST (F-MNIST)~\cite{xiao2017}, two datasets commonly used in image classification tasks. Each dataset contains $70$K images ($60$K for training, $10$K for testing), where each image is one of 10 labels of hand-written digits and fashion products, respectively. For brevity, we present the results for MNIST here, and refer the reader to Appendix~\ref{app:experiments} for FMNIST; the results are qualitatively similar.\\
\textbf{Data distributions.}
To simulate varying degrees of statistical data heterogeneity among the devices, we divide the datasets into the devices' local $\mathcal{D}_i$ in three ways: (a) \textit{extreme non-i.i.d.}, where each local dataset has only data points from a single label; (b) \textit{moderate non-i.i.d.}, where each local dataset contains datapoints from three of the 10 labels; and (c) \textit{i.i.d.}, where each local dataset has datapoints covering all $10$ labels. In each case, $\mathcal{D}_i$ is selected randomly (without replacement) from the full dataset of labels assigned to device $i$. \\

\begin{figure}[t]
\includegraphics[width=1.0\columnwidth]{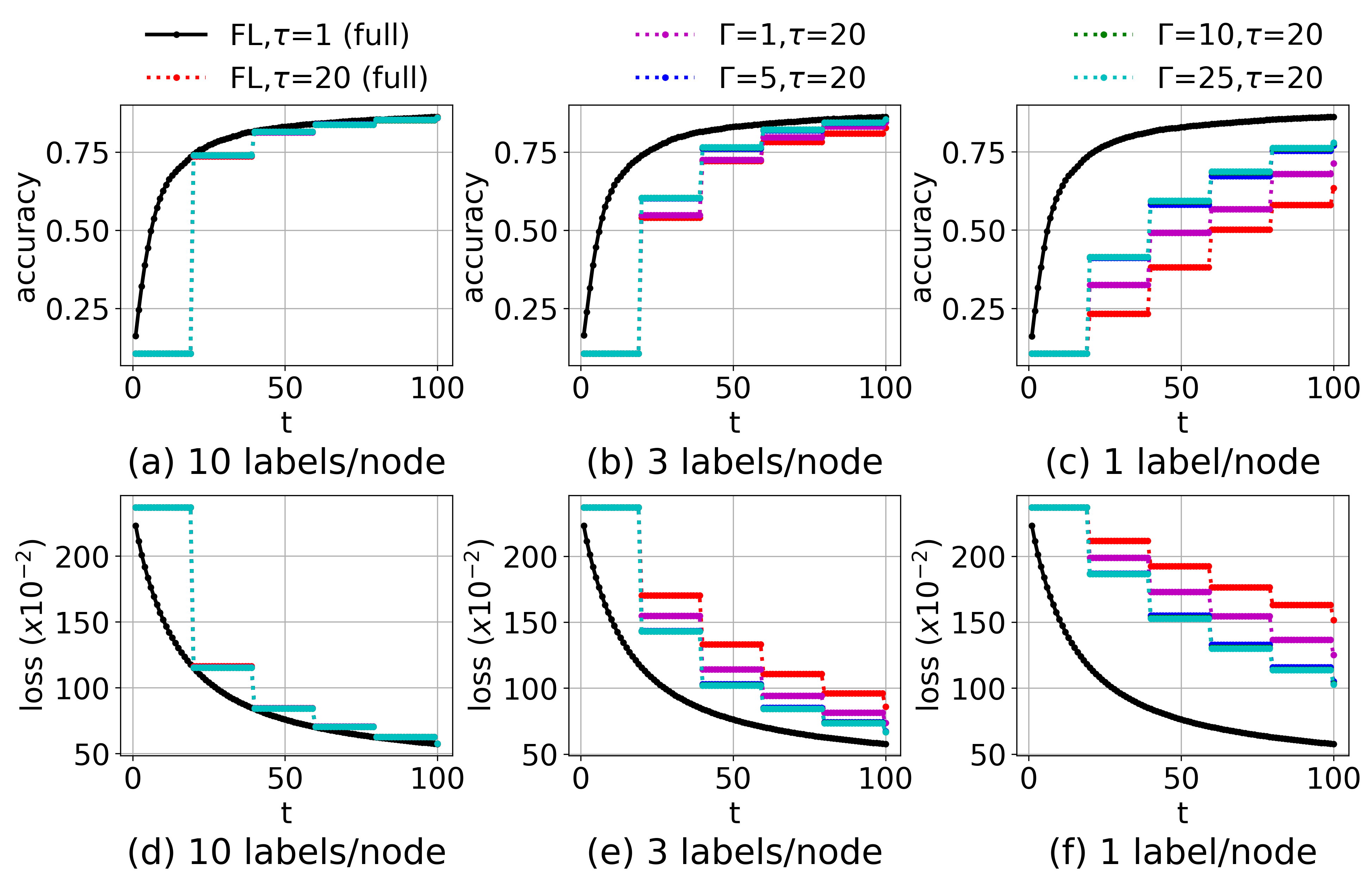}
\centering
\caption{Performance comparison between {\tt TT-HF} and baseline methods when varying the number of D2D consensus rounds ($\Gamma$). Under the same period of local model training ($\tau$), increasing $\Gamma$ results in a considerable improvement in the model accuracy/loss over time as compared to the current art~\cite{wang2019adaptive,Li} when data is non-i.i.d. \emph{full} implies that baseline schemes do not leverage D2D and instead require all the device to engage in uplink transmissions. NN is  used for classification.}
\label{fig:mnist_poc_1_all}
\vspace{-5mm}
\end{figure} 
\textbf{ML models.} 
We consider loss functions from two different ML classifiers: regularized (squared) support vector machines (SVM) and a fully connected neural network (NN). In both cases, we use the standard implementations in PyTorch which results in a model dimension of $M = 7840$ on MNIST. Note that the SVM satisfies Assumption~\ref{Assump:SmoothStrong}, while the NN does not.  The numerical results obtained for both classifiers are qualitatively similar. Thus, for brevity, we show a selection of results for each classifier here, and refer the reader to Appendix~\ref{app:experiments} for the extensive simulation results on both classifiers, where we also explain the implementation of our control algorithm for non-convex loss functions. The SVM uses a linear kernel, and the weights initialization follows a uniform distribution, with mean and variance calculated according to~\cite{he2015delving}. All of our implementations can be accessed at~\cite{ShamsGithub}.



\subsection{{\tt TT-HF} Model Training Performance and Convergence}
\label{ssec:conv-eval}
One of the main premises of {\tt TT-HF} is that cooperative consensus procedure within clusters during the local model training interval can (i) preserve model performance while reducing the required frequency of global aggregations and/or (ii) increase the model training accuracy, especially when statistical data heterogeneity is present across the devices. Our first set of experiments seek to validate these facts:

\subsubsection{Local consensus reducing global aggregation frequency}
In Fig.~\ref{fig:mnist_poc_2_all}, we compare the performance of {\tt TT-HF} for increased local model training intervals $\tau$ against the current federated learning algorithms that do not exploit local D2D model consensus procedure. The baselines both assume full device participation (i.e., all devices upload their local model to the server at each global aggregation), and thus are 5x more uplink resource-intensive at each aggregation. One baseline conducts global aggregations after each round of training ($\tau = 1$), and the other, based on~\cite{wang2019adaptive}, has local update intervals of $20$ ($\tau = 20$). Recall that longer local training periods are desirable to reduce the frequency of communication between devices and the main server. 
We conduct consensus after every $t = 5$ time instances, and increase $\Gamma$ as $\tau$ increases. The $\tau = 1$ baseline is an upper bound on the achievable performance since it replicates centralized model training.


Fig.~\ref{fig:mnist_poc_2_all} confirms that {\tt TT-HF} can still outperform the baseline FL with $\tau = 20$ when the frequency of global aggregations is decreased: in other words, increasing $\tau$ can be counteracted with a higher degree of local consensus procedure $\Gamma^{(t)}_c=\Gamma$, $\forall c,t$. Considering the moderate non-i.i.d. plots ((b) and (e)), we also see that the jumps in global model performance, while less frequent, are substantially larger for {\tt TT-HF} than the baseline. This result shows that D2D communications can reduce reliance on the main server for a more distributed model training process.
It can also be noted that {\tt TT-HF} achieves this performance gain despite the communication impairments, i.e., packet lost due to fast fading, that we assumed in D2D communications. This implies the robustness of {\tt TT-HF} to imperfect D2D communications among the devices. 

\begin{figure}[t]
\includegraphics[width=1.0\columnwidth]{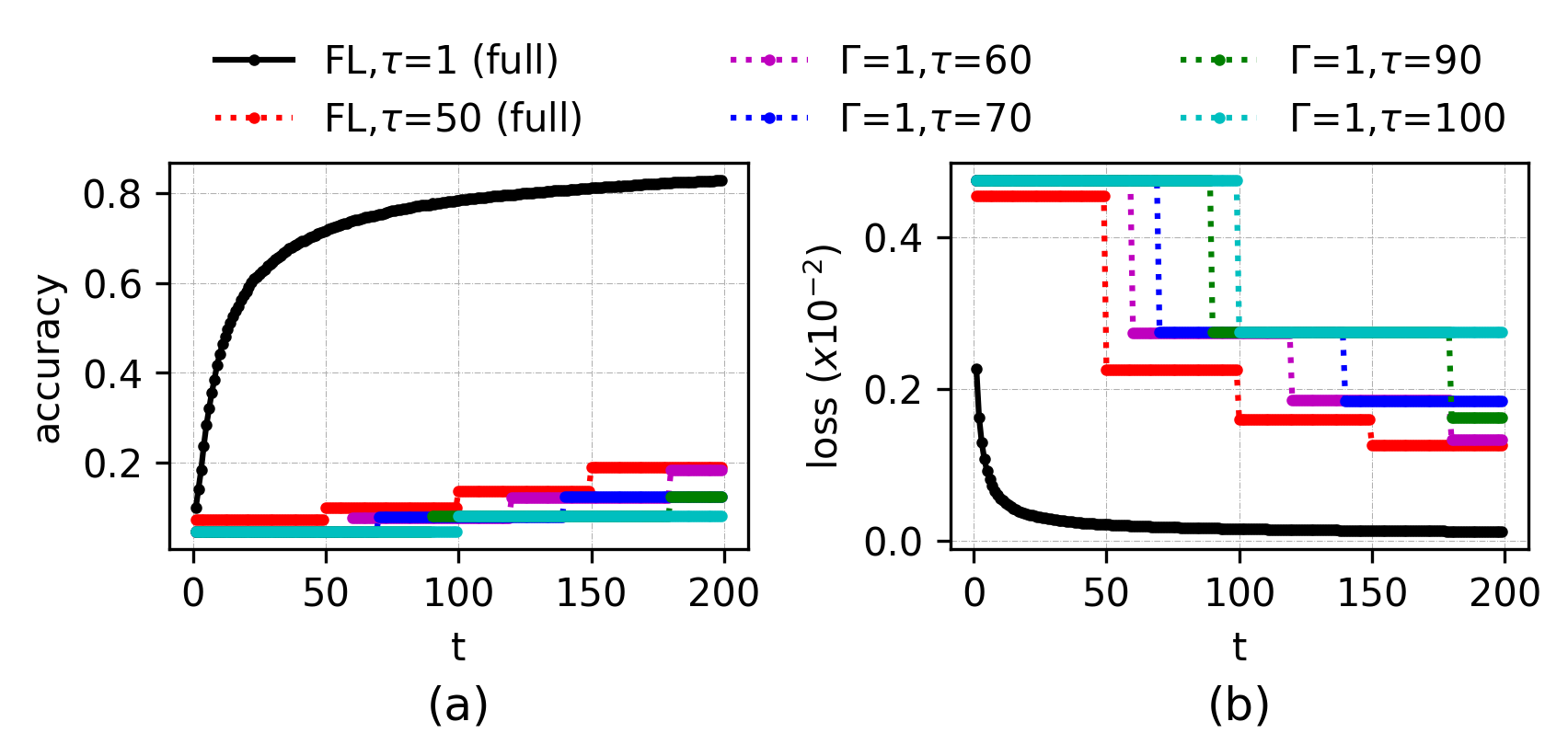}
\centering
\caption{Performance of {\tt TT-HF} in the extreme non-i.i.d. case for the setting in Fig.~\ref{fig:mnist_poc_2_all} when $\Gamma$ is small and the local model training interval length is increased substantially. {\tt TT-HF} exhibits poor convergence behavior when $\tau$ exceeds a certain value, due to model dispersion. SVM is  used for classification.}
\label{fig:poc_3_iid_1_gamma_1_lut_50}
\vspace{-5mm}
\end{figure}

\subsubsection{D2D enhancing ML model performance} 
In Fig.~\ref{fig:mnist_poc_1_all}, we compare the performance of {\tt TT-HF} with the baseline methods, where we set $\tau_k = \tau = 20$ and conduct a fixed number of D2D rounds in clusters after every $5$ time instances, i.e., $\Gamma^{(t)}_c = \Gamma$ for different values of $\Gamma$. 
Fig.~\ref{fig:mnist_poc_1_all} verifies that local D2D communications can significantly boost the performance of ML model training. Specifically, when the data distributions are moderate non-i.i.d ((b) and (e)) or extreme non-i.i.d. ((c) and (f)), we see that increasing $\Gamma$ improves the trained model accuracy/loss substantially from FL with $\tau = 20$. It also reveals that there is a diminishing reward of increasing $\Gamma$ as the performance of {\tt TT-HF} approaches that of FL with $\tau = 1$. Finally, we observe that the gains obtained through D2D communications are only present when the data distributions across the nodes are non-i.i.d., as compared to the i.i.d. scenario ((a) and (d)), which emphasizes the purpose of {\tt TT-HF} for handling statistical heterogeneity. This result further shows the applicability of {\tt TT-HF} to non-convex classifiers such as NN.



\subsubsection{Convergence behavior}
Recall that the upper bound on convergence in Theorem~\ref{co1} is dependent on the expected model dispersion $A^{(t)}$ and the consensus error $\epsilon^{(t)}$ across clusters. For the settings in Figs.~\ref{fig:mnist_poc_1_all}\&\ref{fig:mnist_poc_2_all}, increasing the local model training period $\tau$ and decreasing the consensus rounds $\Gamma$ will result in increased $A^{(t)}$ and $\epsilon^{(t)}$, respectively, for a given $t$. In Fig.~\ref{fig:poc_3_iid_1_gamma_1_lut_50}, we show that {\tt TT-HF} suffers from poor convergence behavior in the extreme non-i.i.d. case when the period of local descents $\tau$ are excessively prolonged, similar to the baseline FL when $\tau = 50$~\cite{wang2019adaptive}. This further emphasizes the importance of Algorithm~\ref{GT} tuning these parameters around Theorem~\ref{thm:subLin_m}'s result.


\subsection{{\tt TT-HF} with Adaptive Parameter Control}
\label{ssec:control-eval}
We turn now to evaluating the efficacy and analyzing the behavior of {\tt TT-HF} under parameter tuning from Algorithm~\ref{GT}.


\subsubsection{Improved resource efficiency compared with baselines}
Fig.~\ref{fig:resource_bar_0} compares the performance of {\tt TT-HF} under our control algorithm with the two baselines: (i) FL with full device participation and $\tau = 1$ (from Sec.~\ref{ssec:conv-eval}), and (ii) FL with $\tau = 20$ but only one device sampled from each cluster for global aggregations.\footnote{The baseline of FL, $\tau = 20$ with full participation is omitted because it results in very poor costs.} The result is shown under different ratios of delays $\frac{\Delta_{\textrm{D2D}}}{\Delta_{\textrm{Glob}}}$ and different ratios of energy consumption $\frac{E_{\textrm{D2D}}}{E_{\textrm{Glob}}}$ between D2D communications and global aggregations.\footnote{These plots are generated for some typical ratios observed in the literature. For example, a similar data rate in D2D and uplink transmission can be achieved via typical values of transmit powers of $10$dbm in D2D mode and $24$dbm in uplink mode~\cite{hmila2019energy,dominic2020joint}, which coincides with a ratio of ${E_{\textrm{D2D}}}/{E_{\textrm{Glob}}}=0.04$. In practice, the actual values are dependent on many environmental factors.} Three metrics are shown: (a) total cost based on the objective of $(\bm{\mathcal{P}})$, (b) total energy consumed, and (c) total delay experienced up to the point where $75\%$ of peak accuracy is reached.

Overall, in (a), we see that {\tt TT-HF} (depicted through the bars) outperforms the baselines (depicted through the horizontal lines) substantially in terms of total cost, by at least 75\% in each case. In (b), we observe that for smaller values of ${E_{\textrm{D2D}}}/{E_{\textrm{Glob}}}$, {\tt TT-HF} lowers the overall power consumption, but after the D2D energy consumption reaches a certain threshold, it does not result in energy savings anymore. The same impact can be observed regarding the delay from (c), i.e., once $\frac{\Delta_{\textrm{D2D}}}{\Delta_{\textrm{Glob}}} \approx 0.1$ there is no longer an advantage in terms of delay. Ratios of $0.1$ for either of these metrics, however, is significantly larger than what is being observed in 5G networks~\cite{hmila2019energy,dominic2020joint}, indicating that {\tt TT-HF} would be effective in practical systems.


\begin{figure}[t]
\hspace{-3mm}
\includegraphics[width=0.99\columnwidth]{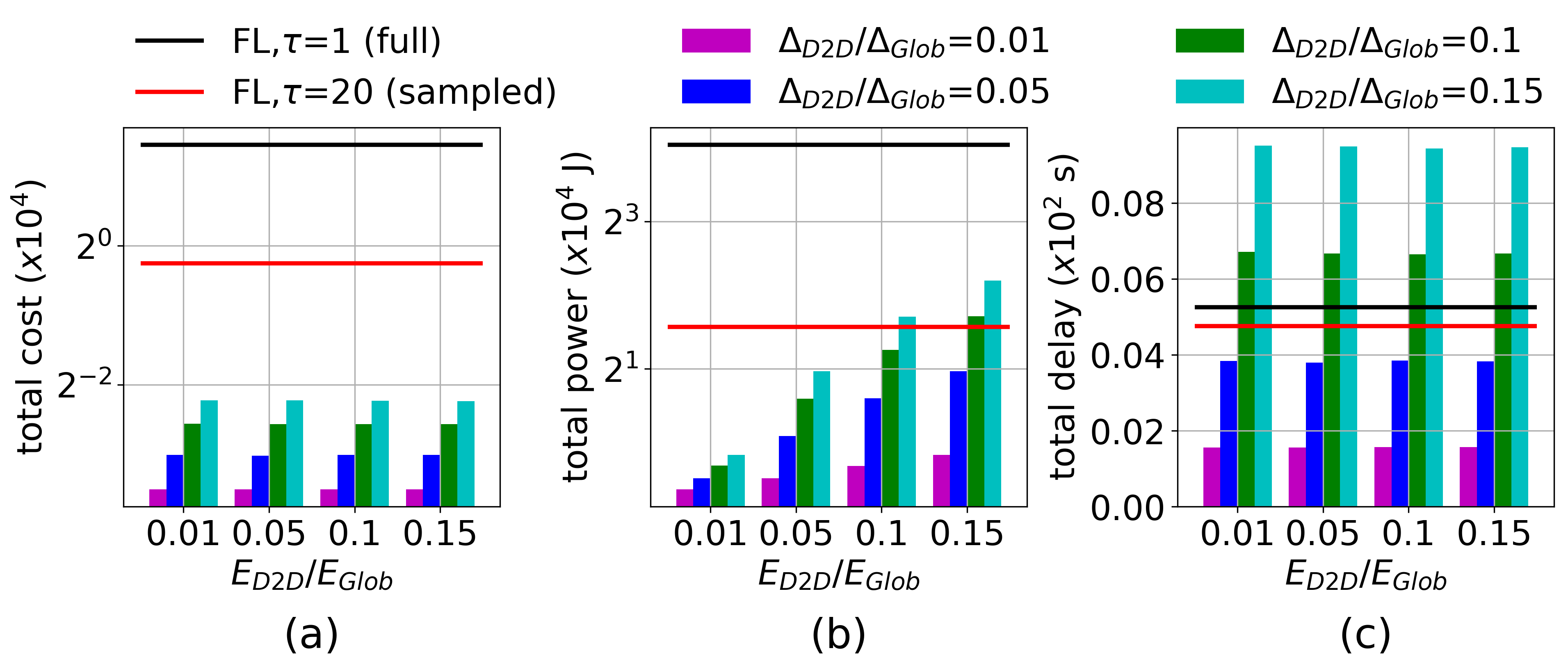}
\centering
\caption{Comparing total (a) cost, (b) power, and (c) delay metrics from the optimization objective in $(\bm{\mathcal{P}})$ achieved by {\tt TT-HF} versus baselines upon reaching $75\%$ of peak accuracy, for different configurations of delay and energy consumption. {\tt TT-HF} obtains a significantly lower total cost in (a). (b) and (c) demonstrate the region under which {\tt TT-HF} attains energy savings and delay gains. SVM is used for classification.}
\label{fig:resource_bar_0}
\vspace{-5mm}
\end{figure}

\subsubsection{Impact of design choices on local model training interval}
We are also interested in how the design weights $c_1, c_2, c_3$ in $(\bm{\mathcal{P}})$ affect the behavior of the control algorithm. In Fig.~\ref{fig:behaviour_of_tau}, we plot the value of $\tau_2$, i.e., the length of the second local model training interval, for different configurations of $c_1$, $c_2$ and $c_3$.\footnote{The specific ranges of values chosen gives comparable objective terms $(a)$, $(b)$, and $(c)$ in $(\bm{\mathcal{P}})$.} The maximum tolerable value of $\tau$ is assumed to be $40$. As we can see, increasing $c_1$ and $c_2$ -- which elevates the priority on minimizing energy consumption and delay, respectively -- results in a longer local model training interval, since D2D communication is more efficient. On the other hand, increasing $c_3$ -- which prioritizes the global model convergence rate -- results in a quicker global aggregation.

\subsection{Main Takeaways}\label{subsec:takeway}
Data heterogeneity in local dataset across local devices can result in considerable performance degradation  of federated learning algorithms. In this case, longer local update periods will result in models that are significantly
biased towards local datasets and degrade the convergence speed of the global model and the resulting model accuracy. By blending federated aggregations with cooperative D2D consensus procedure among local device clusters in {\tt TT-HF}, we effectively decrease the bias of the local models to the local datasets and speed up the convergence at a lower cost (i.e., utilizing low power D2D communications to reduce the frequency of performing global aggregation via uplink transmissions). Due to the low network cost in performing D2D transmission, {\tt TT-HF} provides a practical solution for federated learning to achieve faster convergence or to prolong the local model training interval, leading to delay and energy consumption savings.

Although we develop our algorithm based on federated learning with vanilla SGD local optimizer, our method can benefit other counterparts in the literature. This is due to the fact that, intuitively, conducting D2D communications via the method proposed on this paper reduces the local bias of the nodes' models to their local datasets, which is one of the main challenges faced in federated learning. In Appendix~\ref{app:experiments} we conduct some preliminary experiment to show the impact of our method on FedProx~\cite{sahu2018convergence}.




\begin{figure}[t]
\vspace{4mm}
\hspace{-2.5mm}
\includegraphics[width=1\columnwidth]{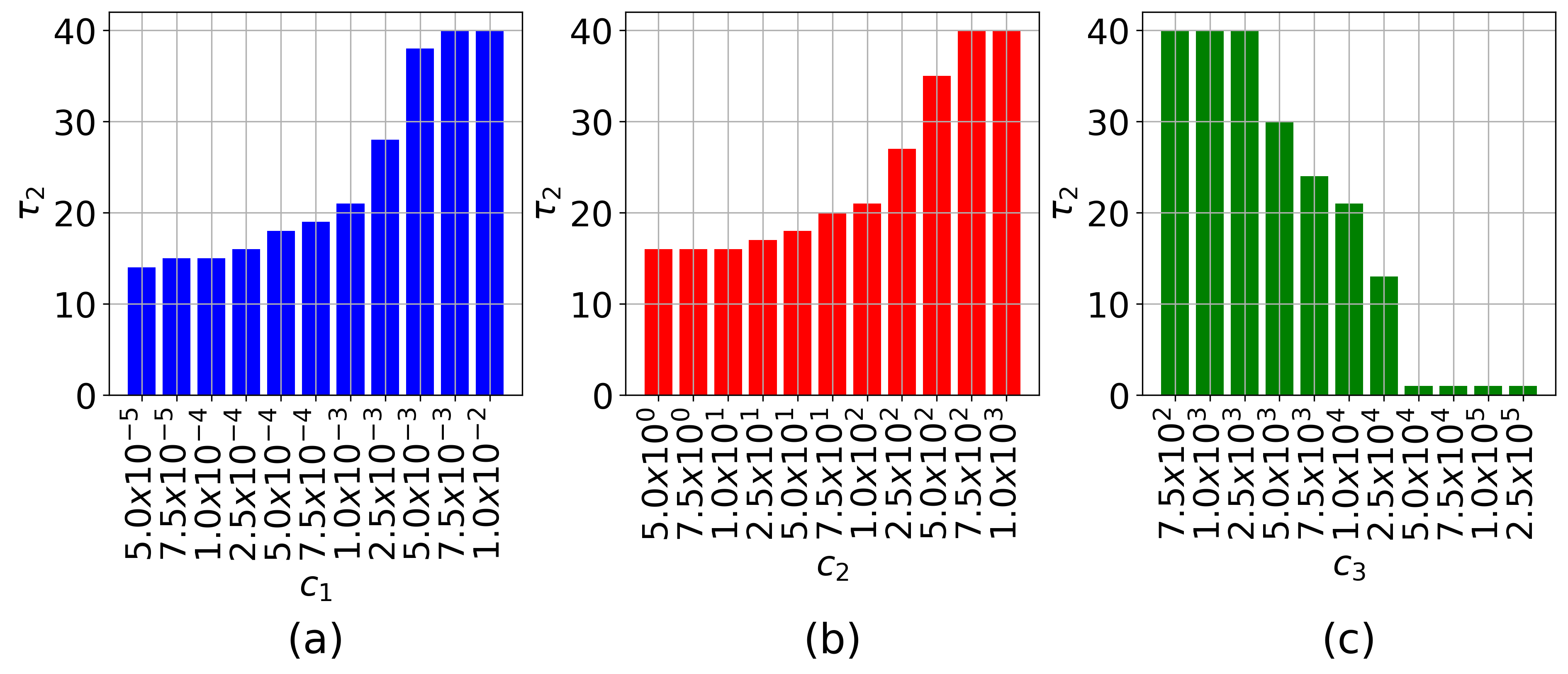}
\centering
\caption{Value of the second local model training interval obtained through $(\bm{\mathcal{P}})$ for different configurations of weighing coefficients $c_1, c_2, c_3$ (default $c_1 = 10^{-3}, c_2=10^2, c_3=10^4$). Higher weight on energy and delay (larger $c_1$ and $c_2$) prolongs the local training period, while higher weight on the global model loss (larger $c_3$) decreases the length, resulting in more rapid global aggregations.
}
\label{fig:behaviour_of_tau}
\vspace{-5mm}
\end{figure}

\section{Conclusion and Future Work} \label{sec:conclude}
\noindent We proposed {\tt TT-HF}, a methodology which improves the efficiency of federated learning in D2D-enabled wireless networks by augmenting global aggregations with cooperative consensus procedure among device clusters. We conducted a formal convergence analysis of {\tt TT-HF}, resulting in a bound which quantifies the impact of gradient diversity, consensus error, and global aggregation periods on the convergence behavior. Using this bound, we characterized a set of conditions under which {\tt TT-HF} is guaranteed to converge sublinearly with rate of $\mathcal{O}(1/t)$. Based on these conditions, we developed an adaptive control algorithm that actively tunes the device learning rate, cluster consensus rounds, and global aggregation periods throughout the training process. Our experimental results demonstrated the robustness of {\tt TT-HF} against data heterogeneity among edge devices, and its improvement in trained model accuracy, training time, and/or network resource utilization in different scenarios compared to the current art.

There are several avenues for future work. To further enhance the flexibility of {\tt TT-HF}, one may consider (i) heterogeneity in computation capabilities across edge devices, (ii) different communication delays from the clusters to the server, and (iii) wireless interference caused by D2D communications. 
Furthermore, using the set of new techniques we provided to conduct convergence analysis in this paper, we aim to extend our convergence analysis to non-convex settings in future work. This includes obtaining the conditions under which approaching a stationary point of the global loss function is guaranteed, and the rate under which the convergence is achieved.

\bibliographystyle{IEEEtran}
\bibliography{ref}

\newpage

\begingroup
\onecolumn

\appendices 
\setcounter{lemma}{0}
\setcounter{proposition}{0}
\setcounter{theorem}{0}
\setcounter{definition}{0}
\setcounter{assumption}{0}
\section*{Preliminaries and Notations used in the Proofs}\label{app:notations}
In the following Appendices, in order to increase the tractability of the the expressions inside the proofs, we introduce the  the following scaled parameters: (i) strong convexity denoted by $\tilde{\mu}$, normalized gradient diversity by $\tilde{\delta}$, step size by $\tilde{\eta_t}$, SGD variance $\tilde{\sigma}$, and consensus error inside the clusters $\tilde{\epsilon}_c^{(t)}$ and across the network $\tilde{\epsilon}^{(t)}$ inside the cluster as follows:
\begin{itemize}[leftmargin=5mm]
\item  \textbf{Strong convexity}:
 $F$ is $\mu$-strongly convex, i.e.,
\begin{align} 
    F(\mathbf w_1) \geq  F(\mathbf w_2)&+\nabla F(\mathbf w_2)^\top(\mathbf w_1-\mathbf w_2)+\frac{\tilde{\mu}\beta}{2}\Big\Vert\mathbf w_1-\mathbf w_2\Big\Vert^2,~\forall { \mathbf w_1,\mathbf w_2},
\end{align}
where as compared to~Assumption~\ref{Assump:SmoothStrong}, we considered $\tilde{\mu}=\mu/\beta\in(0,1)$.
\item \textbf{Gradient diversity}: The gradient diversity across the device clusters $c$ is measured via two non-negative constants $\delta,\zeta$ that satisfy 
\begin{align} 
    \Vert\nabla\hat F_c(\mathbf w)-\nabla F(\mathbf w)\Vert
    \leq \sqrt{\beta}\tilde{\delta}+ 2\omega\beta \Vert\mathbf w-\mathbf w^*\Vert,~\forall c, \mathbf w,
\end{align}
where as compared to Assumption~\ref{gradDiv}, we presumed $\tilde{\delta}=\delta/\sqrt{\beta}$ and $\omega=\zeta/(2\beta)\in [0,1]$.

\item \textbf{Step size}: The local updates to compute \textit{intermediate updated local model} at the devices is expressed as follows:
\begin{align} 
    {\widetilde{\mathbf{w}}}_i^{(t)} = 
           \mathbf w_i^{(t-1)}-\frac{\tilde{\eta}_{t-1}}{\beta} \widehat{\mathbf g}_{i}^{(t-1)},~t\in\mathcal T_k,
\end{align}
where we used the scaled in the step size, i.e., ${\tilde{\eta}_{t-1}}=\eta_{t-1}{\beta}$. Also, when we consider decreasing step size, we consider scaled parameter $\tilde{\gamma}$ in the step size as follows: $\frac{\gamma}{t+\alpha}=\frac{\tilde{\gamma}/\beta}{t+\alpha}$ indicating that $\tilde{\gamma}=\gamma\beta$.

\item \textbf{Variance of the noise of the estimated gradient through SGD}:
The variance on the SGD noise is bounded as:
\begin{align}
    \mathbb{E}[\Vert{\mathbf n}_{j}^{(t)}\Vert^2]\leq \beta\tilde{\sigma}^2, \forall j,t,
\end{align}
where we consider scaled SGD noise as: $\tilde{\sigma}^2=\sigma^2/\beta$.

\item \textbf{Average of the consensus error inside cluster $c$ and across the network}: $\epsilon_c^{(t)}$ is an upper bound on the average of the consensus error inside cluster $c$ for time $t$, i.e.,
    \begin{align}
        \frac{1}{s_c}\sum\limits_{i\in \mathcal S_c}\Vert \mathbf{e}_i^{(t)}\Vert^2 \leq (\tilde{\epsilon}_c^{(t)})^2/\beta,
    \end{align}
    where we use the scaled consensus error $(\tilde{\epsilon}_c^{(t)})^2=\beta(\epsilon_c^{(t)})^2$.
    Also, in the proofs we use the notation $\epsilon$ to denote the average consensus error across the network defined as $(\epsilon^{(t)})^2=\sum_{c=1}^N\varrho_c(\epsilon_c^{(t)})^2$. When the consensus is assumed to be decreasing over time we use the scaled coefficient $\tilde{\phi}^2=\phi^2/\beta$, resulting in  $(\epsilon^{(t)})^2=\eta_t^2\tilde{\phi}^2\beta $.
\end{itemize}
    
  Finally, to track the global model variations, we introduce the instantaneous global model $\hat{\mathbf w}^{(t)}= 
          \sum\limits_{c=1}^N \varrho_c \mathbf w_{n_c}^{(t)}$, where $n_c$ is a node uniformly sampled from cluster $c$. We note that $\hat{\mathbf w}^{(t)}$ is only realized at the server at the instance of the global aggregations.

\section{Proof of Proposition \ref{Local_disperse}} \label{app:Local_disperse}
\begin{proposition} \label{Local_disperse}
    Under Assumptions \ref{beta} and~\ref{assump:SGD_noise}, if $\eta_t=\frac{\gamma}{t+\alpha}$, $\epsilon^{(t)}$ is non-increasing with respect to $t\in \mathcal T_k$, i.e., $\epsilon^{(t+1)}/\epsilon^{(t)} \leq 1$ and $\alpha\geq\max\{\beta\gamma[\frac{\mu}{4\beta}-1+\sqrt{(1+\frac{\mu}{4\beta})^2+2\omega}],\frac{\beta^2\gamma}{\mu}\}$, using {\tt TT-HF} for ML model training, the following upper bound on the expected model dispersion across the clusters holds:
        \begin{align}
            &A^{(t)}\leq
             \frac{16\omega^2}{\mu}[\Sigma_{+,t}]^2 [F(\bar{\mathbf w}(t_{k-1}))-F(\mathbf w^*)]
            +25[\Sigma_{+,t}]^2
            \left[\frac{\sigma^2}{\beta^2}+\frac{\delta^2}{\beta^2}+(\epsilon^{(0)})^2\right], ~~t\in\mathcal{T}_k,
        \end{align}
        where 
        \begin{align}
            &[\Sigma_{+,t}]^2=\left[\sum\limits_{\ell=t_{k-1}}^{t-1}\left(\prod_{j=t_{k-1}}^{\ell-1}(1+\eta_j\beta\lambda_+)\right)\beta\eta_\ell\left(\prod_{j=\ell+1}^{t-1}(1+\eta_j\beta)\right)\right]^2,
        \end{align}
    and
    \begin{equation}
        \lambda_+=1-\frac{\mu}{4\beta}+\sqrt{(1+\frac{\mu}{4\beta})^2+2\omega}.
    \end{equation}
\end{proposition}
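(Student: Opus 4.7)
\textbf{Proof plan for Proposition \ref{Local_disperse}.} My overall strategy is to lift the scalar recursion on the cluster-average error into a two-dimensional coupled dynamical system, solve it in closed form via eigen-decomposition, and then read off the bound on $A^{(t)}$. The need for a coupled system arises from the generalized gradient-diversity bound in Definition \ref{gradDiv}: the term $\zeta \| \mathbf w - \mathbf w^* \|$ ties the evolution of $\|\bar{\mathbf w}_c^{(t)} - \bar{\mathbf w}^{(t)}\|$ to the evolution of $\|\bar{\mathbf w}^{(t)} - \mathbf w^*\|$, so neither quantity can be bounded in isolation.

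First, I will derive the one-step recursions. Averaging the SGD update \eqref{8} inside each cluster (after absorbing the zero-mean consensus error as in \eqref{eq14}) gives an expression for $\bar{\mathbf w}_c^{(t+1)}$; taking the $\varrho_c$-weighted average gives $\bar{\mathbf w}^{(t+1)}$. Subtracting, taking norms, and applying $\beta$-smoothness (Assumption \ref{Assump:SmoothStrong}), the gradient-diversity bound (Definition \ref{gradDiv}), the SGD-noise bound (Assumption \ref{assump:SGD_noise}), and Condition \ref{paraDiv} for the consensus error, I expect to obtain
\[
\sqrt{\beta\,\mathbb E\bigl[\textstyle\sum_c \varrho_c \|\bar{\mathbf w}_c^{(t+1)} - \bar{\mathbf w}^{(t+1)}\|^2\bigr]}
\;\leq\; (1+\eta_t\beta)\,u^{(t)} + \eta_t\beta\bigl[2\omega\beta\,v^{(t)} + \text{(noise/diversity/consensus terms)}\bigr],
\]
where $u^{(t)},v^{(t)}$ denote the square-roots of $\beta\,\mathbb E[(\sum_c\varrho_c\|\bar{\mathbf w}_c^{(t)}-\bar{\mathbf w}^{(t)}\|)^2]$ and $\beta\,\mathbb E\|\bar{\mathbf w}^{(t)}-\mathbf w^*\|^2$ respectively. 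An analogous bound for $v^{(t+1)}$ will emerge using $\eta_t \leq \mu/\beta^2$ together with the smoothness/strong-convexity inequality $\|\mathbf w - \eta\nabla F(\mathbf w) - \mathbf w^*\| \leq (1-\eta\mu/2)\|\mathbf w - \mathbf w^*\|$.

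Collecting the two bounds yields a system $\mathbf x^{(t+1)} \leq (\mathbf I + \eta_t\beta \mathbf B)\mathbf x^{(t)} + \eta_t\beta \mathbf z$ with $\mathbf B = \begin{pmatrix} 2 & 2\omega \\ 1 & -\mu/(2\beta)\end{pmatrix}$. The eigenvalues of $\mathbf B$ are exactly $\lambda_+ = 1 - \tfrac{\mu}{4\beta} + \sqrt{(1+\tfrac{\mu}{4\beta})^2 + 2\omega}$ and $\lambda_- = -(\mu/\beta+2\omega)/\lambda_+$, which explains the appearance of $\lambda_+$ in the proposition. Unrolling the recursion over $t\in\mathcal T_k$ from the initial condition $\mathbf x^{(t_{k-1})} = \mathbf e_2 \sqrt{\beta}\|\bar{\mathbf w}^{(t_{k-1})}-\mathbf w^*\|$ (valid because all devices share $\hat{\mathbf w}^{(t_{k-1})}$ immediately after global aggregation, so the dispersion component vanishes), then projecting back through $\mathbf U$ and using $\lambda_- < 0 < \lambda_+$ to drop the contracting mode, will deliver a bound in which each factor $\prod_{j}(1+\eta_j\beta\lambda_+)$ and each factor $\prod_{j}(1+\eta_j\beta)$ appears, matching the $\Sigma_{+,t}$ structure.

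Finally, I will square the resulting inequality, apply Jensen/Cauchy--Schwarz to convert $\sum_c \varrho_c \|\bar{\mathbf w}_c-\bar{\mathbf w}\|$ into $\sum_c\varrho_c\|\bar{\mathbf w}_c-\bar{\mathbf w}\|^2$, and invoke strong convexity $\|\bar{\mathbf w}^{(t_{k-1})}-\mathbf w^*\|^2 \leq (2/\mu)[F(\bar{\mathbf w}^{(t_{k-1})})-F(\mathbf w^*)]$ to convert the initial-distance term into the loss gap, yielding the $\tfrac{16\omega^2}{\mu}$ coefficient. The monotonicity assumption $\epsilon^{(t+1)} \leq \epsilon^{(t)}$ lets me replace every $\epsilon^{(t)}$ by $\epsilon^{(0)}$ inside the convolution sum, and the numerical constant $25$ absorbs the accumulated $(3+\sqrt 3)/2$, $(1+\sqrt 3)/2$ factors coming from the bound $\lambda_+ \in [2, 1+\sqrt 3]$. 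The main technical obstacle will be the third step: obtaining a clean, sharp expression for the matrix product $\prod_{\ell} (\mathbf I + \eta_\ell \beta \mathbf D)$ after projecting through $\mathbf U^{-1}$ and $\mathbf U$, without losing the fact that $\lambda_-$-contributions are benign; careful tracking of signs via $\lambda_-\lambda_+ = -(\mu/\beta + 2\omega)$ and $\lambda_+ + \lambda_- = 2 - \mu/(2\beta)$ should make this manageable. The lower bound $\alpha \geq \beta\gamma\lambda_+$ ensures that $1 + \eta_t\beta\lambda_+ > 0$ and $\eta_0\beta \leq 1/\lambda_+$, keeping the discrete-time propagator well-behaved throughout.
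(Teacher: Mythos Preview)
Your overall architecture---form the coupled $2\times 2$ system on $(x_1^{(t)},x_2^{(t)})$, diagonalize $\mathbf B$, unroll, and read off $\Sigma_{+,t}$---matches the paper's proof exactly, including the matrix $\mathbf B$, the eigenvalues $\lambda_\pm$, the initial condition, and the final use of strong convexity to produce the $16\omega^2/\mu$ coefficient.

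There is, however, a genuine gap in your last step. You propose to ``apply Jensen/Cauchy--Schwarz to convert $\sum_c \varrho_c \|\bar{\mathbf w}_c-\bar{\mathbf w}\|$ into $\sum_c\varrho_c\|\bar{\mathbf w}_c-\bar{\mathbf w}\|^2$''. But Jensen runs the other way: $\bigl(\sum_c\varrho_c\|\bar{\mathbf w}_c-\bar{\mathbf w}\|\bigr)^2 \le \sum_c\varrho_c\|\bar{\mathbf w}_c-\bar{\mathbf w}\|^2$, so an upper bound on $x_1^{(t)}$ gives only a \emph{lower} bound on $A^{(t)}$. Relatedly, your first displayed recursion has $\sqrt{\beta A^{(t+1)}}$ on the left but $u^{(t)}=x_1^{(t)}$ on the right, so it does not close on either quantity. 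The paper handles this with an explicit third stage: after bounding $\mathbf x^{(t)}$, it returns to the \emph{per-cluster} scalar recursion
\[
\sqrt{\beta\,\mathbb E\|\bar{\mathbf w}_c^{(t+1)}-\bar{\mathbf w}^{(t+1)}\|^2}\;\le\;(1+\eta_t\beta)\sqrt{\beta\,\mathbb E\|\bar{\mathbf w}_c^{(t)}-\bar{\mathbf w}^{(t)}\|^2}\;+\;\eta_t\beta\, y^{(t)}+\cdots,
\]
with the driver $y^{(t)}=[1\ \ 2\omega]\,\mathbf x^{(t)}$ already controlled by the coupled-system solution. Unrolling this per-$c$ inequality, inserting the bound on $y^{(\ell)}$, squaring, and \emph{then} taking $\sum_c\varrho_c$ yields $A^{(t)}$ directly without any wrong-direction Jensen. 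You need this detour (or an equivalent) for the proof to close.

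One minor correction: the condition $\alpha\ge\beta\gamma\bigl[\tfrac{\mu}{4\beta}-1+\sqrt{(1+\tfrac{\mu}{4\beta})^2+2\omega}\bigr]$ equals $\beta\gamma|\lambda_-|$, not $\beta\gamma\lambda_+$; its role is to keep $1+\eta_t\beta\lambda_-\ge 0$ so that $\Sigma_{-,t}\ge 0$ and the $\lambda_-$-mode can be dropped safely.
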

\begin{proof}

We break down the proof into 3 parts: in Part I we find the relationship between $\Vert\bar{\mathbf w}^{(t)}-\mathbf w^*\Vert$ and $\sum\limits_{c=1}^N\varrho_{c}\Vert\bar{\mathbf w}_c^{(t)}-\bar{\mathbf w}^{(t)}\Vert$, which turns out to form a coupled dynamic system, which is solved in Part II. Finally, Part III draws the connection between $A^{(t)}$ and the solution of the coupled dynamic system and obtains the upper bound on $A^{(t)}$.

\textbf{(Part I) Finding the relationship between $\Vert\bar{\mathbf w}^{(t)}-\mathbf w^*\Vert$ and $\sum\limits_{c=1}^N\varrho_{c}\Vert\bar{\mathbf w}_c^{(t)}-\bar{\mathbf w}^{(t)}\Vert$}:
Using the definition of $\bar{\mathbf w}^{(t+1)}$ given in Definition~\ref{modDisp}, and the notations introduced in Appendix~\ref{app:notations}, we have:  
\begin{equation}
    \bar{\mathbf w}^{(t+1)}= \bar{\mathbf w}^{(t)}-\frac{\tilde{\eta_t}}{\beta}\sum\limits_{c=1}^N\varrho_{c}\frac{1}{s_{c}}\sum\limits_{j\in\mathcal S_{c}} \widehat{\mathbf g}_{j,t},~t\in\mathcal{T}_k.
\end{equation}
Adding and subtracting terms in the above equality gives us:
\begin{align} 
        \bar{\mathbf w}^{(t+1)}-\mathbf w^* =&~ \bar{\mathbf w}^{(t)}-\mathbf w^*-\frac{\tilde{\eta_t}}{\beta} \nabla F(\bar{\mathbf w}^{(t)})
        \nonumber \\&
        -\frac{\tilde{\eta_t}}{\beta}\sum\limits_{c=1}^N\varrho_{c}\frac{1}{s_{c}}\sum\limits_{j\in\mathcal S_{c}} [\widehat{\mathbf g}_{j,t}-\nabla F_j(\mathbf w_j^{(t)})]
        \nonumber \\&
        -\frac{\tilde{\eta_t}}{\beta} \sum\limits_{c=1}^N\varrho_{c}\frac{1}{s_{c}}\sum\limits_{j\in\mathcal S_{c}} [\nabla F_j(\mathbf w_j^{(t)})-\nabla F_j(\bar{\mathbf w}_c^{(t)})]
        \nonumber \\&
        -\frac{\tilde{\eta_t}}{\beta} \sum\limits_{c=1}^N\varrho_{c} [\nabla F_c(\bar{\mathbf w}_c^{(t)})-\nabla F_c(\bar{\mathbf w}^{(t)})].
    \end{align}
    Taking the norm-2 from the both hand sides of the above equality and applying the triangle inequality yields:
    \begin{align} \label{29}
        \Vert\bar{\mathbf w}^{(t+1)}-\mathbf w^*\Vert \leq& \Vert\bar{\mathbf w}^{(t)}-\mathbf w^*-\frac{\tilde{\eta_t}}{\beta} \nabla F(\bar{\mathbf w}^{(t)})\Vert
        +\frac{\tilde{\eta_t}}{\beta} \sum\limits_{c=1}^N\varrho_{c}\frac{1}{s_{c}}\sum\limits_{j\in\mathcal S_{c}} \Vert\mathbf n_{j}^{(t)}\Vert
        \nonumber \\&
        +\frac{\tilde{\eta_t}}{\beta} \sum\limits_{c=1}^N\varrho_{c}\frac{1}{s_{c}}\sum\limits_{j\in\mathcal S_{c}} \Vert\nabla F_j(\mathbf w_j^{(t)})-\nabla F_j(\bar{\mathbf w}_c^{(t)})\Vert
        \nonumber \\&
        +\frac{\tilde{\eta_t}}{\beta} \sum\limits_{c=1}^N\varrho_{c} \Vert\nabla F_c(\bar{\mathbf w}_c^{(t)})-\nabla F_c(\bar{\mathbf w}^{(t)})\Vert.
    \end{align}
 To bound the terms on the right hand side above, we first use the $\mu$-strong convexity and $\beta$-smoothness of $F(\cdot)$, when $\eta_t\leq\frac{\mu}{\beta^2}$, to get 
    \begin{align} \label{eq:stx}
        &\Vert\bar{\mathbf w}^{(t)}-\mathbf w^*-\frac{\tilde{\eta_t}}{\beta} \nabla F(\bar{\mathbf w}^{(t)})\Vert
        \nonumber \\&
        =
        \sqrt{\Vert\bar{\mathbf w}^{(t)}-\mathbf w^*\Vert^2+(\frac{\tilde{\eta_t}}{\beta})^2\Vert\nabla F(\bar{\mathbf w}^{(t)})\Vert^2-\frac{2\tilde{\eta_t}}{\beta}(\bar{\mathbf w}^{(t)}-\mathbf w^*)^\top\nabla F(\bar{\mathbf w}^{(t)})}
        \nonumber \\&
        \overset{(a)}{\leq} 
        \sqrt{(1-2\tilde{\eta}_t\tilde{\mu})\Vert\bar{\mathbf w}^{(t)}-\mathbf w^*\Vert^2+(\frac{\tilde{\eta_t}}{\beta})^2\Vert\nabla F(\bar{\mathbf w}^{(t)})\Vert^2}
        \nonumber \\&
        \overset{(b)}{\leq}
        \sqrt{1-2\tilde{\eta}_t\tilde{\mu}+\tilde{\eta}_t^2}\Vert\bar{\mathbf w}^{(t)}-\mathbf w^*\Vert
        \overset{(c)}{\leq} 
        (1-\frac{\tilde{\eta}_t\tilde{\mu}}{2})\Vert\bar{\mathbf w}^{(t)}-\mathbf w^*\Vert,
    \end{align}
    where $(a)$ results from the property of a strongly convex function, i.e., $(\bar{\mathbf w}^{(t)}-\mathbf w^*)^\top\nabla F(\bar{\mathbf w}^{(t)})\geq\tilde{\mu}\beta\Vert\bar{\mathbf w}^{(t)}-\mathbf w^*\Vert^2$, $(b)$ comes from the property of smooth functions, i.e., $\Vert\nabla F(\bar{\mathbf w}^{(t)})\Vert^2\leq\beta^2\Vert\bar{\mathbf w}^{(t)}-\mathbf w^*\Vert^2$ and the last step $(c)$ follows from the fact that $\tilde{\eta}_t\leq \tilde{\eta}_0$ and assuming $ \tilde{\eta}_0\leq\tilde{\mu}$, implying $\alpha\geq  \tilde{\gamma}/\tilde{\mu}$.
    Also, considering the other terms on the right hand side of~\eqref{29}, using $\beta$-smoothness, we have 
    \begin{align} \label{eq:beta_use}
        \Vert\nabla F_j(\mathbf w_j^{(t)})-\nabla F_j(\bar{\mathbf w}_c^{(t)})\Vert
        \leq
        \beta\frac{1}{s_{c}}\sum\limits_{j\in\mathcal S_{c}}\Vert\mathbf w_j^{(t)}-\bar{\mathbf w}_c^{(t)}\Vert.
    \end{align}
    Moreover, using Condition~\ref{paraDiv}, we get
    \begin{align} \label{eq:e_c}
        \frac{1}{s_{c}}\sum\limits_{j\in\mathcal S_{c}}\Vert\mathbf w_j^{(t)}-\bar{\mathbf w}_c^{(t)}\Vert
        &=
        \frac{1}{s_{c}}\sum\limits_{j\in\mathcal S_{c}}\Vert\mathbf e_j^{(t)}\Vert
        \nonumber \\&\leq
        \sqrt{\frac{1}{s_{c}}\sum\limits_{j\in\mathcal S_{c}}\Vert\mathbf e_j^{(t)}\Vert^2}
        \leq\tilde{\epsilon}_c^{(t)}/\sqrt{\beta}.
    \end{align}
Combining \eqref{eq:beta_use} and \eqref{eq:e_c} gives us:
    \begin{align} \label{eq:epsilon_use}
        \frac{1}{s_{c}}\sum\limits_{j\in\mathcal S_{c}}\Vert\nabla F_j(\mathbf w_j^{(t)})-\nabla F_j(\bar{\mathbf w}_c^{(t)})\Vert
        \leq
        \sqrt{\beta}\tilde{\epsilon}_c^{(t)}.
    \end{align}
    Replacing the result of~\eqref{eq:stx} and~\eqref{eq:epsilon_use} in~\eqref{29} yields:
    \begin{align}
        \Vert\bar{\mathbf w}^{(t+1)}-\mathbf w^*\Vert \leq& 
         (1-\frac{\tilde{\eta}_t\tilde{\mu}}{2})
\Vert\bar{\mathbf w}^{(t)}-\mathbf w^*\Vert
        +\frac{\tilde{\eta}_t}{\beta} \sum\limits_{c=1}^N\varrho_{c}\frac{1}{s_{c}}\sum\limits_{j\in\mathcal S_{c}} \Vert\mathbf n_{j}^{(t)}\Vert
        \nonumber \\&
        +\frac{\tilde{\eta}_t}{\sqrt{\beta}}\sum\limits_{c=1}^N\varrho_{c}\tilde{\epsilon}_{c}^{(t)}
        +\tilde{\eta}_t \sum\limits_{c=1}^N\varrho_{c} \Vert\bar{\mathbf w}_c^{(t)}-\bar{\mathbf w}^{(t)}\Vert.
    \end{align}
Multiplying the both hand sides of the above inequality by $\sqrt{\beta}$ followed by taking square and expectation, we get
\begin{align}
        \mathbb E\left[\sqrt{\beta}\Vert\bar{\mathbf w}^{(t+1)}-\mathbf w^*\Vert\right]^2 \leq& 
         \mathbb E\bigg[\sqrt{\beta}(1-\frac{\tilde{\eta}_t\tilde{\mu}}{2})
\Vert\bar{\mathbf w}^{(t)}-\mathbf w^*\Vert
        +\frac{\tilde{\eta}_t}{\sqrt{\beta}} \sum\limits_{c=1}^N\varrho_{c}\frac{1}{s_{c}}\sum\limits_{j\in\mathcal S_{c}} \Vert\mathbf n_{j}^{(t)}\Vert
        \nonumber \\&
        +{\tilde{\eta}_t}\sum\limits_{c=1}^N\varrho_{c}\tilde{\epsilon}_{c}^{(t)}
        +\tilde{\eta}_t \sqrt{\beta} \sum\limits_{c=1}^N\varrho_{c}  \Vert\bar{\mathbf w}_c^{(t)}-\bar{\mathbf w}^{(t)}\Vert\bigg]^2.
    \end{align}
Taking the square roots from the both hand sides and using Fact \ref{fact:1} (See Appendix \ref{app:fact1}) yields:
\begin{align} \label{eq:fact_x2}
       \sqrt{\beta\mathbb E[\Vert\bar{\mathbf w}^{(t+1)}-\mathbf w^*\Vert^2]} 
       \leq& 
         (1-\frac{\tilde{\eta}_t\tilde{\mu}}{2})
\sqrt{\beta\mathbb E[\Vert\bar{\mathbf w}^{(t)}-\mathbf w^*\Vert^2]}
        +{\tilde{\eta}_t} \tilde{\sigma}
        \nonumber \\&
        +{\tilde{\eta}_t}\sum\limits_{c=1}^N\varrho_{c}\tilde{\epsilon}_{c}^{(t)}
        +\tilde{\eta}_t \sqrt{\beta \Big(\sum\limits_{c=1}^N\varrho_{c} \mathbb E\Vert\bar{\mathbf w}_c^{(t)}-\bar{\mathbf w}^{(t)}\Vert\Big)^2}.
    \end{align}
    We compact~\eqref{eq:fact_x2} and represent it via the following relationship:
\begin{align} \label{eq:x2_x1}
        &x_2^{(t+1)} \leq \begin{bmatrix}
                \tilde{\eta}_t & (1-\frac{\tilde{\eta}_t\tilde{\mu}}{2})
        \end{bmatrix}\mathbf x^{(t)}+\tilde{\eta}_t\left(\tilde{\sigma}+\sum\limits_{c=1}^N\varrho_{c}\tilde{\epsilon}_{c}^{(t)}\right),
    \end{align}
     where $\mathbf x^{(t)}=\begin{bmatrix}
             x_1^{(t)} & x_2^{(t)}
     \end{bmatrix}^\top$, 
         $x_1^{(t)}=\sqrt{\beta\mathbb E[(\sum\limits_{c=1}^N\varrho_{c}\Vert\bar{\mathbf w}_c^{(t)}-\bar{\mathbf w}^{(t)}\Vert)^2]}$, and $x_2^{(t)}=\sqrt{\beta\mathbb E[\Vert\bar{\mathbf w}^{(t)}-\mathbf w^*\Vert^2]}$.

The relationship in~\eqref{eq:x2_x1} reveals the dependency of $x_2^{(t+1)}$ on $x_2^{(t)}$ and $x_1^{(t)}$. 
    To bound $x^{(t)}_1$, we first use the fact that $\bar{\mathbf w}_c^{(t+1)}$ can be written as follows:
    \begin{align} \label{eq:w_c}
        &\bar{\mathbf w}_c^{(t+1)}=\bar{\mathbf w}_c^{(t)}
        -\frac{\tilde{\eta}_t}{\beta}\frac{1}{s_{c}}\sum\limits_{j\in\mathcal S_{c}}\nabla F_j(\mathbf w_j^{(t)})
        -\frac{\tilde{\eta}_t}{\beta}\frac{1}{s_{c}}\sum\limits_{j\in\mathcal S_{c}}\mathbf n_j^{(t)}.
    \end{align}
    Similarly, $\bar{\mathbf w}^{(t+1)}$ can be written as:
    \begin{align} \label{eq:w-}
        &\bar{\mathbf w}^{(t+1)}=
        \bar{\mathbf w}^{(t)}
        -\frac{\tilde{\eta}_t}{\beta}\sum\limits_{d=1}^N\varrho_{d}\frac{1}{s_{d}}\sum\limits_{j\in\mathcal S_{d}}\nabla F_j(\mathbf w_j^{(t)})
        -\frac{\tilde{\eta}_t}{\beta}\sum\limits_{d=1}^N\varrho_{d}\frac{1}{s_{d}}\sum\limits_{j\in\mathcal S_{d}}\mathbf n_j^{(t)}.
    \end{align}
    Combining~\eqref{eq:w_c} and~\eqref{eq:w-} and performing some algebraic manipulations yields:
    \begin{align} 
        &\bar{\mathbf w}_c^{(t+1)}-\bar{\mathbf w}^{(t+1)}=\bar{\mathbf w}_c^{(t)}-\bar{\mathbf w}^{(t)}
        -\frac{\tilde{\eta}_t}{\beta}\frac{1}{s_{c}}\sum\limits_{j\in\mathcal S_{c}}\mathbf n_{j}^{(t)}
        +\frac{\tilde{\eta}_t}{\beta}\sum\limits_{d=1}^N\varrho_{d}\frac{1}{s_{d}}\sum\limits_{j\in\mathcal S_{d}}\mathbf n_{j}^{(t)}
        \nonumber \\&
        -\frac{\tilde{\eta}_t}{\beta}\frac{1}{s_{c}}\sum\limits_{j\in\mathcal S_{c}}\Big[\nabla F_j(\bar{\mathbf w}_j ^{(t)})-\nabla F_j(\bar{\mathbf w}_c ^{(t)})\Big]
        +\frac{\tilde{\eta}_t}{\beta}\sum\limits_{d=1}^N\varrho_{d}\frac{1}{s_{d}}\sum\limits_{j\in\mathcal S_{d}}\Big[\nabla F_j(\bar{\mathbf w}_j ^{(t)})-\nabla F_j(\bar{\mathbf w}_d^{(t)})\Big]
        \nonumber \\&
        -\frac{\tilde{\eta}_t}{\beta}\Big[\nabla\hat F_c(\bar{\mathbf w}_c^{(t)})-\nabla\hat F_c(\bar{\mathbf w}^{(t)})\Big]
        +\frac{\tilde{\eta}_t}{\beta}\sum\limits_{d=1}^N\varrho_{d}\Big[\nabla\hat F_d(\bar{\mathbf w}_d^{(t)})-\nabla\hat F_d(\bar{\mathbf w}^{(t)})\Big]
       \nonumber \\&
        -\frac{\tilde{\eta}_t}{\beta}\Big[\nabla\hat F_c(\bar{\mathbf w}^{(t)})-\nabla F(\bar{\mathbf w}^{(t)})\Big].
    \end{align}   
    Taking the norm-2 of the both hand sides of the above equality and applying the triangle inequality gives us
    \begin{align} \label{eq:tri_wc}
        &\Vert\bar{\mathbf w}_c^{(t+1)}-\bar{\mathbf w}^{(t+1)}\Vert\leq
        \Vert\bar{\mathbf w}_c^{(t)}-\bar{\mathbf w}^{(t)}\Vert
        +\frac{\tilde{\eta}_t}{\beta}\Vert\frac{1}{s_{c}}\sum\limits_{j\in\mathcal S_{c}}\mathbf n_{j}^{(t)}\Vert
        +\frac{\tilde{\eta}_t}{\beta}\Vert\sum\limits_{d=1}^N\varrho_{d}\frac{1}{s_{d}}\sum\limits_{j\in\mathcal S_{d}}\mathbf n_{j}^{(t)}\Vert
        \nonumber \\&
        +\frac{\tilde{\eta}_t}{\beta}\frac{1}{s_{c}}\sum\limits_{j\in\mathcal S_{c}}\Vert\nabla F_j(\bar{\mathbf w}_j ^{(t)})-\nabla F_j(\bar{\mathbf w}_c ^{(t)})\Vert
        +\frac{\tilde{\eta}_t}{\beta}\sum\limits_{d=1}^N\varrho_{d}\frac{1}{s_{d}}\sum\limits_{j\in\mathcal S_{d}}\Vert\nabla F_j(\bar{\mathbf w}_j ^{(t)})-\nabla F_j(\bar{\mathbf w}_d^{(t)})\Vert
        \nonumber \\&
        +\frac{\tilde{\eta}_t}{\beta}\Vert\nabla\hat F_c(\bar{\mathbf w}_c^{(t)})-\nabla\hat F_c(\bar{\mathbf w}^{(t)})\Vert
        +\frac{\tilde{\eta}_t}{\beta}\sum\limits_{d=1}^N\varrho_{d}\Vert\nabla\hat F_d(\bar{\mathbf w}_d^{(t)})-\nabla\hat F_d(\bar{\mathbf w}^{(t)})\Vert
        \nonumber \\&
        +\frac{\tilde{\eta}_t}{\beta}\Vert\nabla\hat F_c(\bar{\mathbf w}^{(t)})-\nabla F(\bar{\mathbf w}^{(t)})\Vert.
    \end{align}   
    Using $\beta$-smoothness of $F_j(\cdot)$, $\forall j$, and $\hat F_c(\cdot)$, $\forall c$, Definition \ref{gradDiv} and Condition~\ref{paraDiv}, we further bound the right hand side of~\eqref{eq:tri_wc} to get
\begin{align} \label{eq:tri_wc2}
        &\Vert\bar{\mathbf w}_c^{(t+1)}-\bar{\mathbf w}^{(t+1)}\Vert\leq
        (1+\tilde{\eta}_t)\Vert\bar{\mathbf w}_c^{(t)}-\bar{\mathbf w}^{(t)}\Vert
        +2\omega\tilde{\eta}_t\Vert\bar{\mathbf w}^{(t)}-\mathbf w^*\Vert
                +\tilde{\eta}_t\sum\limits_{d=1}^N\varrho_{d}\Vert\bar{\mathbf w}_d^{(t)}-\bar{\mathbf w}^{(t)}\Vert
        \nonumber \\&
        +\frac{\tilde{\eta}_t}{\beta}\Vert\frac{1}{s_{c}}\sum\limits_{j\in\mathcal S_{c}}\mathbf n_{j}^{(t)}\Vert
        +\frac{\tilde{\eta}_t}{\beta}\Vert\sum\limits_{d=1}^N\varrho_{d}\frac{1}{s_{d}}\sum\limits_{j\in\mathcal S_{d}}\mathbf n_{j}^{(t)}\Vert
        \nonumber \\&
        +\tilde{\eta}_t\frac{1}{s_{c}}\sum\limits_{j\in\mathcal S_{c}}
        \Vert\bar{\mathbf w}_j ^{(t)}-\bar{\mathbf w}_c ^{(t)}\Vert
        +\tilde{\eta}_t\sum\limits_{d=1}^N\varrho_{d}\frac{1}{s_{d}}\sum\limits_{j\in\mathcal S_{d}}
        \Vert\bar{\mathbf w}_j ^{(t)}-\bar{\mathbf w}_d^{(t)}\Vert
        +\frac{\tilde{\eta}_t}{\sqrt{\beta}}\tilde{\delta}.
    \end{align}   
    Using~\eqref{eq:e_c} we have $\frac{1}{s_{d}}\sum\limits_{j\in\mathcal S_{d}}
        \Vert\bar{\mathbf w}_j ^{(t)}-\bar{\mathbf w}_d^{(t)}\Vert\leq\frac{\tilde{\epsilon}_{d}^{(t)}}{\sqrt{\beta}}$, and thus~\eqref{eq:tri_wc2} can be written as
        \begin{align} \label{eq:wc_w}
        &\Vert\bar{\mathbf w}_c^{(t+1)}-\bar{\mathbf w}^{(t+1)}\Vert\leq
        (1+\tilde{\eta}_t)\Vert\bar{\mathbf w}_c^{(t)}-\bar{\mathbf w}^{(t)}\Vert
        +2\omega\tilde{\eta}_t\Vert\bar{\mathbf w}^{(t)}-\mathbf w^*\Vert
                +\tilde{\eta}_t\sum\limits_{d=1}^N\varrho_{d}\Vert\bar{\mathbf w}_d^{(t)}-\bar{\mathbf w}^{(t)}\Vert
        \nonumber \\&
        +\frac{\tilde{\eta}_t}{\beta}\Vert\frac{1}{s_{c}}\sum\limits_{j\in\mathcal S_{c}}\mathbf n_{j}^{(t)}\Vert
        +\frac{\tilde{\eta}_t}{\beta}\Vert\sum\limits_{d=1}^N\varrho_{d}\frac{1}{s_{d}}\sum\limits_{j\in\mathcal S_{d}}\mathbf n_{j}^{(t)}\Vert
        +\frac{\tilde{\eta}_t}{\sqrt{\beta}}\left(\tilde{\epsilon}_{c}^{(t)}+\sum\limits_{d=1}^N\varrho_{d}\tilde{\epsilon}_{d}^{(t)}+\tilde{\delta}\right).
    \end{align} 
  Taking the weighted sum $\sum\limits_{c=1}^N\varrho_c$ from the both hand sides of the above inequality gives us
    \begin{align} 
        &\sum\limits_{c=1}^N\varrho_c\Vert\bar{\mathbf w}_c^{(t+1)}-\bar{\mathbf w}^{(t+1)}\Vert\leq
        (1+2\tilde{\eta}_t)\sum\limits_{c=1}^N\varrho_c\Vert\bar{\mathbf w}_c^{(t)}-\bar{\mathbf w}^{(t)}\Vert
        +2\omega\tilde{\eta}_t\Vert\bar{\mathbf w}^{(t)}-\mathbf w^*\Vert
        \nonumber \\&
        +\frac{\tilde{\eta}_t}{\beta}\sum\limits_{c=1}^N\varrho_c\Vert\frac{1}{s_{c}}\sum\limits_{j\in\mathcal S_{c}}\mathbf n_{j}^{(t)}\Vert
        +\frac{\tilde{\eta}_t}{\beta}\Vert\sum\limits_{d=1}^N\varrho_{d}\frac{1}{s_{d}}\sum\limits_{j\in\mathcal S_{d}}\mathbf n_{j}^{(t)}\Vert
        +\frac{\tilde{\eta}_t}{\sqrt{\beta}}\left(2\sum\limits_{d=1}^N\varrho_{d}\tilde{\epsilon}_{d}^{(t)}+\tilde{\delta}\right).
    \end{align}  
    
    Multiplying the both hand side of the above inequality by $\sqrt{\beta}$, followed by taking square and expectation, using a similar procedure used to obtain~\eqref{eq:fact_x2}, we get the bound on $x_1^{(t+1)}$ as follows:
     \begin{align} \label{eq:fact_x1}
        &
x_1^{(t+1)}\leq
\begin{bmatrix}
        (1+2\tilde{\eta}_t) & 2\omega\tilde{\eta}_t
\end{bmatrix}
\mathbf x^{(t)}
        +\tilde{\eta}_t\left(2\sum\limits_{d=1}^N\varrho_{d}\tilde{\epsilon}_{d}^{(t)}+\tilde{\delta}+2\tilde{\sigma}\right).
    \end{align}

\textbf{(Part II) Solving the coupled dynamic system:}
To bound $\mathbf x^{(t)}$, we need to bound $x_1^{(t)}$ and $x_2^{(t)}$, where $x_2^{(t)}$ is given by~\eqref{eq:x2_x1}, which is dependent on $\mathbf x^{(t-1)}$. Also, $x_1^{(t)}$ is given in~\eqref{eq:fact_x1} which is dependent on  $\mathbf x^{(t-1)}$. This leads to a \textit{coupled dynamic system} where $\mathbf x^{(t)}$ can be expressed in a compact form as follows:
    \begin{align} \label{69}
        \mathbf x^{(t+1)}
        &\leq 
        [\mathbf I+\tilde{\eta}_t\mathbf B]\mathbf  x^{(t)}
        +\tilde{\eta}_t\mathbf z, 
    \end{align}
    where $\mathbf  x^{(t_{k-1})}=\mathbf e_2\sqrt{\beta}\Vert\bar{\mathbf w}^{(t_{k-1})}-\mathbf w^*\Vert$,
    $
\mathbf z= \begin{bmatrix}
        2 & 1
\end{bmatrix}^\top[\tilde{\sigma}+\sum\limits_{d=1}^N\varrho_{d}\tilde{\epsilon}_{d}^{(0)}]
        +\mathbf e_1\tilde{\delta}
    $
    ,    
    $\mathbf B=\begin{bmatrix} 2 & 2\omega
    \\ 1 & -\tilde{\mu}/2\end{bmatrix}$, $\mathbf e_1=\begin{bmatrix}1\\0\end{bmatrix}$ and $\mathbf e_2=\begin{bmatrix}0\\1\end{bmatrix}$.
    We aim to characterize an upper bound on $\mathbf x^{(t)}$ denoted by $\bar{\mathbf x}^{(t)}$, where 
     \begin{align} \label{69}
 \bar{\mathbf x}^{(t+1)}
= 
        [\mathbf I+\tilde{\eta}_t\mathbf B]\bar{\mathbf x}^{(t)}
        +\tilde{\eta}_t\mathbf z.
    \end{align}
 To solve the coupled dynamic, we use the eigen-decomposition on $\mathbf B$:
    $\mathbf B=\mathbf U\mathbf D\mathbf U^{-1}$, where
    $$\mathbf D=\begin{bmatrix} \lambda_+ & 0
    \\ 0 & \lambda_-\end{bmatrix},\ 
    \mathbf U=\begin{bmatrix} \omega & \omega
    \\ \frac{\lambda_+}{2}-1 & \frac{\lambda_-}{2}-1\end{bmatrix},\ 
\mathbf U^{-1}=\frac{1}{
    \omega\sqrt{(1+\tilde{\mu}/4)^2+2\omega}
    }\begin{bmatrix} 1-\frac{\lambda_-}{2} & \omega
    \\ \frac{\lambda_+}{2}-1 &-\omega\end{bmatrix}$$
        and the eigenvalues in $\mathbf D$ are given by
        \begin{align} \label{eq:lambda+}
            \lambda_+ =1-\tilde{\mu}/4+\sqrt{(1+\tilde{\mu}/4)^2+2\omega}>0
        \end{align}
    and 
    \begin{align} \label{eq:lambda-}
        \lambda_-=
        1-\tilde{\mu}/4-\sqrt{(1+\tilde{\mu}/4)^2+2\omega}
        =
        -\frac{\tilde{\mu}+2\omega}{\lambda_+}<0
    \end{align}
    
    To further compact the relationship in~\eqref{69}, we introduce a variable $\mathbf f^{(t)}$, where $\mathbf f^{(t)}=\mathbf U^{-1}\bar{\mathbf x}^{(t)}+\mathbf U^{-1}\mathbf B^{-1}\mathbf z$, satisfying the following recursive expression:
    \begin{align} 
        \mathbf f^{(t+1)}
        &= 
         [\mathbf I+\tilde{\eta}_t\mathbf D]\mathbf f^{(t)}.
    \end{align}
  Recursive expansion of the right hand side of the above equality yields: 
    \begin{align} 
        \mathbf f^{(t)}
        &= 
         \prod_{\ell=t_{k-1}}^{t-1}[\mathbf I+\tilde{\eta}_\ell\mathbf D]\mathbf f^{(t_{k-1})}.
    \end{align}
    Using the fact that
    $\bar{\mathbf x}^{(t)}=\mathbf U\mathbf f^{(t)}-\mathbf B^{-1}\mathbf z$,
    we obtain the following expression for $\bar{\mathbf x}^{(t)}$:
    \begin{align} 
    \bar{\mathbf x}^{(t)}=
\mathbf U\prod_{\ell=t_{k-1}}^{t-1}(\mathbf I+\tilde{\eta}_\ell\mathbf D)
         \mathbf U^{-1}\mathbf e_2\Vert\bar{\mathbf w}^{(t_{k-1})}-\mathbf w^*\Vert
+\mathbf U\left[\prod_{\ell=t_{k-1}}^{t-1}(\mathbf I+\tilde{\eta}_\ell\mathbf D)-\mathbf I\right]
         \mathbf U^{-1}\mathbf B^{-1}\mathbf z.
    \end{align}
    \textbf{(Part III) Finding the connection between $A^{(t)}$ and ${\mathbf x}^{(t)}$ and the expression for $A^{(t)}$}:
To bound the model dispersion across the clusters, we revisit~\eqref{eq:wc_w}, where we multiply its both hand side by $\sqrt{\beta}$, followed by taking square and expectation and follow a similar procedure used to obtain~\eqref{eq:fact_x2} to get:
    \begin{align} \label{eq:recurse}
        \sqrt{\beta\mathbb E[\Vert\bar{\mathbf w}_c^{(t+1)}-\bar{\mathbf w}^{(t+1)}\Vert^2]}
        \leq&
        (1+\tilde{\eta}_t)\sqrt{\beta\mathbb E[\Vert\bar{\mathbf w}_c^{(t)}-\bar{\mathbf w}^{(t)}\Vert^2]}
        +\tilde{\eta}_t y^{(t)}
        \nonumber \\&
        +\tilde{\eta}_t[\tilde{\epsilon}_{c}^{(t)}+\sum\limits_{d=1}^N\varrho_{d}\tilde{\epsilon}_{d}^{(t)}+\tilde{\delta}+2\tilde{\sigma}],
    \end{align}  
    where $y^{(t)}=\begin{bmatrix}
        1 & 2\omega
    \end{bmatrix}\mathbf x^{(t)}$. Recursive expansion of~\eqref{eq:recurse} yields: 
    \begin{align} \label{eq:expanded}
        &
\sqrt{\beta\mathbb E[\Vert\bar{\mathbf w}_c^{(t)}-\bar{\mathbf w}^{(t)}\Vert^2]}
        \leq
        \sum_{\ell=t_{k-1}}^{t-1}\tilde{\eta}_\ell\prod_{j=\ell+1}^{t-1}(1+\tilde{\eta}_j)
        y^{(\ell)}
        \nonumber \\&
        +\sum_{\ell=t_{k-1}}^{t-1}\tilde{\eta}_\ell\prod_{j=\ell+1}^{t-1}(1+\tilde{\eta}_j)
        [\tilde{\epsilon}_{c}^{(0)}+\sum\limits_{d=1}^N\varrho_{d}\tilde{\epsilon}_{d}^{(0)}+\tilde{\delta}+2\tilde{\sigma}].
    \end{align}    
The expression in~\eqref{eq:expanded} reveals the dependency of the difference between the model in one cluster $c$ and the global average of models, i.e., the left hand side, on $y^{(t)}$ which by itself depends on $\mathbf x^{(t)}$. Considering the fact that $A^{(t)}\triangleq\mathbb E\left[\sum\limits_{c=1}^N\varrho_{c}\big\Vert\bar{\mathbf w}_c^{(t)}-\bar{\mathbf w}^{(t)}\big\Vert^2\right]$, the aforementioned dependency implies the dependency of $A^{(t)}$ on $\mathbf x^{(t)}$.

So, the key to obtain $A^{(t)}$ is to bound $y^{(t)}$, which can be expressed as follows:
\begin{align} \label{eq:yt}
    y^{(t)}&=\begin{bmatrix}
        1 & 2\omega
    \end{bmatrix}\mathbf x^{(t)}\leq 
    \begin{bmatrix}
        1 & 2\omega
    \end{bmatrix}\bar{\mathbf x}^{(t)}
   \nonumber\\&=
  [g_1\Pi_{+,t}+g_2\Pi_{-,t}]\sqrt{\beta}\Vert\bar{\mathbf w}^{(t_{k-1})}-\mathbf w^*\Vert
  \nonumber \\&
+[g_3(\Pi_{+,t}-\Pi_{0,t})+g_4(\Pi_{-,t}-\Pi_{0,t})]
[\tilde{\sigma}+\sum\limits_{d=1}^N\varrho_{d}\tilde{\epsilon}_{d}^{(0)}] \nonumber\\&
+[g_5(\Pi_{+,t}-\Pi_{0,t})+g_6(\Pi_{-,t}-\Pi_{0,t})]\tilde{\delta},
    \end{align}
    where we define
    $
    \Pi_{\{+,-,0\},t}=\prod_{\ell=t_{k-1}}^{t-1}[1+\tilde{\eta}_\ell\lambda_{\{+,-,0\}}],\ 
    $ with $\lambda_+$ given by~\eqref{eq:lambda+} and $\lambda_-$ given by~\eqref{eq:lambda-} and $\lambda_0=0$. Also, the constants $g_1$, $g_2$, $g_3$, $g_4$, $g_5$, and $g_6$ are given by: 
    $$
        g_1\triangleq
        \begin{bmatrix}
        1 & 2\omega
    \end{bmatrix}\mathbf U\mathbf e_1\mathbf e_1^\top\mathbf U^{-1}\mathbf e_2
        =
        \omega
        \left[1-\frac{\tilde{\mu}/4}{\sqrt{(1+\tilde{\mu}/4)^2+2\omega}}\right]>0,
    $$
    $$
g_2\triangleq
    \begin{bmatrix}
        1 & 2\omega
    \end{bmatrix}\mathbf U\mathbf e_2\mathbf e_2^\top\mathbf U^{-1}\mathbf e_2
    =
    \omega\left[1+\frac{\tilde{\mu}/4}{\sqrt{(1+\tilde{\mu}/4)^2+2\omega}}\right]
    =g_2=2\omega-g_1>0,
    $$
    $$
g_3\triangleq
    \begin{bmatrix}
        1 & 2\omega
    \end{bmatrix}\mathbf U\mathbf e_1\mathbf e_1^\top\mathbf U^{-1}\mathbf B^{-1}\begin{bmatrix}
        2 & 1
    \end{bmatrix}^\top
    =
\frac{1}{2}+\frac{1+\tilde{\mu}/4+2\omega}{2\sqrt{(1+\tilde{\mu}/4)^2+2\omega}}
    =g_3>1,
$$
$$
g_4\triangleq
    \begin{bmatrix}
        1 & 2\omega
    \end{bmatrix}\mathbf U\mathbf e_2\mathbf e_2^\top\mathbf U^{-1}\mathbf B^{-1}\begin{bmatrix}
        2 & 1
    \end{bmatrix}^\top
=
\frac{1}{2}-\frac{1+\tilde{\mu}/4+2\omega}{2\sqrt{(1+\tilde{\mu}/4)^2+2\omega}},
$$$$
=
-\omega\frac{1+2\omega+\tilde{\mu}/2}{\sqrt{(1+\tilde{\mu}/4)^2+2\omega}}
\frac{1}{\sqrt{(1+\tilde{\mu}/4)^2+2\omega}+[1+\tilde{\mu}/4+2\omega]}
=1- g_3<0,
$$

$$
g_5
\triangleq
    \begin{bmatrix}
        1 & 2\omega
    \end{bmatrix}\mathbf U\mathbf e_1\mathbf e_1^\top\mathbf U^{-1}\mathbf B^{-1}\mathbf e_1
=
\frac{1}{[\tilde{\mu}+2\omega]\sqrt{(1+\tilde{\mu}/4)^2+2\omega}}
$$
$$
\cdot\frac{
\frac{\tilde{\mu}}{2}(1+\tilde{\mu}/4)^2
+\omega[1+\frac{5\tilde{\mu}}{4}+\tilde{\mu}^2/8]
+2\omega^2
+\sqrt{(1+\tilde{\mu}/4)^2+2\omega}[\frac{\tilde{\mu}}{2}(1+\tilde{\mu}/4)+\omega[1+\tilde{\mu}/2]]
}{1+\tilde{\mu}/4+\sqrt{(1+\tilde{\mu}/4)^2+2\omega}}
>0,
$$

$$
g_6\triangleq
\begin{bmatrix}
        1 & 2\omega
    \end{bmatrix}\mathbf U\mathbf e_2\mathbf e_2^\top\mathbf U^{-1}\mathbf B^{-1}\mathbf e_1
$$$$
=
\frac{\omega}{[\tilde{\mu}+2\omega]\sqrt{(1+\tilde{\mu}/4)^2+2\omega}}
\frac{1+\frac{3\tilde{\mu}}{4}+2\omega+\sqrt{(1+\tilde{\mu}/4)^2+2\omega}}{1+\tilde{\mu}/4+\sqrt{(1+\tilde{\mu}/4)^2+2\omega}}
 =
\frac{\tilde{\mu}/2+2\omega}{\tilde{\mu}+2\omega}-g_5>0.
$$

Revisiting~\eqref{eq:recurse} with the result of~\eqref{eq:yt} gives us:
\begin{align} \label{eq:mid_goal}
        &
        \sqrt{\beta\mathbb E[\Vert\bar{\mathbf w}_c^{(t)}-\bar{\mathbf w}^{(t)}\Vert^2]}
        \leq
2\omega\frac{g_1\Sigma_{+,t}+g_2\Sigma_{-,t}}{g_1+g_2}\sqrt{\beta}\Vert\bar{\mathbf w}(t_{k-1})-\mathbf w^*\Vert
         \nonumber\\&
+[\Sigma_{+,t}+(g_3-1)(\Sigma_{+,t}-\Sigma_{-,t})]
[\tilde{\sigma}+\sum\limits_{d=1}^N\varrho_{d}\tilde{\epsilon}_{d}^{(0)}]
\nonumber\\&
+\frac{\tilde{\mu}/2}{\tilde{\mu}+2\omega}[\frac{g_5}{g_5+g_6}\Sigma_{+,t}+\frac{g_6}{g_5+g_6}\Sigma_{-,t}+\Sigma_{0,t}]\tilde{\delta}
+\Sigma_{0,t}[\tilde{\epsilon}_{c}^{(0)}+\tilde{\sigma}],
    \end{align}
    where we used the facts that $g_3+g_4=1$, $g_5+g_6=\frac{\tilde{\mu}/2+2\omega}{\tilde{\mu}+2\omega}$,
    $g_1+g_2=2\omega$,  and $g_3>1$, and defined  $\Sigma_{+,t}$, $\Sigma_{-,t}$, and $\Sigma_{0,t}$ as follows:
    $$
 \Sigma_{\{+,-,0\},t}
 =\sum_{\ell=t_{k-1}}^{t-1}\tilde{\eta}_\ell\prod_{j=\ell+1}^{t-1}(1+\tilde{\eta}_j)\Pi_{\{+,-,0\},\ell}
  =\sum_{\ell=t_{k-1}}^{t-1}
  \left[\prod_{j=t_{k-1}}^{\ell-1}(1+\tilde{\eta}_j\lambda_{\{+,-,0\}})\right]
  \tilde{\eta}_\ell
  \left[\prod_{j=\ell+1}^{t-1}(1+\tilde{\eta}_j)\right].
    $$

    We now demonstrate that: (i)
    $\Sigma_{-,t}\leq \Sigma_{+,t}$, (ii)
    $\Sigma_{0,t}\leq \Sigma_{+,t}$, and (iii)
    $\Sigma_{-,t}\geq 0$.

To prove $\Sigma_{-,t}\leq \Sigma_{+,t}$, we upper bound  $\Sigma_{-,t}$ as follows:
\begin{align}
 \Sigma_{-,t}
 &\leq\sum_{\ell=t_{k-1}}^{t-1}
  \left[\prod_{j=t_{k-1}}^{\ell-1}|1+\tilde{\eta}_j \lambda_{-}|\right]
  \tilde{\eta}_\ell
  \left[\prod_{j=\ell+1}^{t-1}(1+\tilde{\eta}_j)\right]
\nonumber\\&
   \leq\sum_{\ell=t_{k-1}}^{t-1}
  \left[\prod_{j=t_{k-1}}^{\ell-1}(1+\tilde{\eta}_j \lambda_{+})\right]
  \tilde{\eta}_\ell
  \left[\prod_{j=\ell+1}^{t-1}(1+\tilde{\eta}_j)\right]=\Sigma_{+,t}.
    \end{align}
    Similarly it can be shown that $\Sigma_{0,t}\leq \Sigma_{+,t}$ since $\lambda_+>1$.

    To prove $\Sigma_{-,t}\geq 0$, it is sufficient to impose the condition
    $(1+\tilde{\eta}_j \lambda_-)\geq 0,\forall j$, i.e.
    $(1+\tilde{\eta}_0 \lambda_-)\geq 0$, which implies
    $
    \alpha\geq\tilde{\gamma}[\tilde{\mu}/4-1+\sqrt{(1+\tilde{\mu}/4)^2+2\omega}].
    $
    
   Considering~\eqref{eq:mid_goal} with the above mentioned properties for $\Sigma_{-,t}$, $\Sigma_{+,t}$, and $\Sigma_{0,t}$, we get:
    \begin{align} \label{eq:midgoal2}
        \sqrt{\beta\mathbb E[\Vert\bar{\mathbf w}_c^{(t)}-\bar{\mathbf w}^{(t)}\Vert^2]}
        \leq&
2\omega\Sigma_{+,t}\sqrt{\beta}\Vert\bar{\mathbf w}^{(t_{k-1})}-\mathbf w^*\Vert
        \nonumber  \\&
+g_3\Sigma_{+,t}
[\tilde{\sigma}+\sum\limits_{d=1}^N\varrho_{d}\tilde{\epsilon}_{d}^{(0)}]
 \nonumber \\&
+\frac{\tilde{\mu}}{\tilde{\mu}+2\omega}\Sigma_{+,t}\tilde{\delta}
+\Sigma_{+,t}[\tilde{\sigma}+\tilde{\epsilon}_{c}^{(0)}].
    \end{align}
    Moreover, since $\frac{\tilde{\mu}}{\tilde{\mu}+2\omega}\leq 1$,
    $\sum\limits_{d=1}^N\varrho_{d}\tilde{\epsilon}_{d}^{(0)}= \tilde{\epsilon}^{(0)}$ and
        $
g_3\leq\frac{1+\sqrt{3}}{2}
$
(since $g_3$ is increasing with respect to $\omega$ and decreasing with respect to $\tilde{\mu}$), from~\eqref{eq:midgoal2} we obtain
 \begin{align} 
        \sqrt{\beta\mathbb E[\Vert\bar{\mathbf w}_c^{(t)}-\bar{\mathbf w}^{(t)}\Vert^2]}
        \leq &
2\omega\Sigma_{+,t}\sqrt{\beta}\Vert\bar{\mathbf w}^{(t_{k-1})}-\mathbf w^*\Vert
+\Sigma_{+,t}
\left[\frac{3+\sqrt{3}}{2}\tilde{\sigma}+\frac{1+\sqrt{3}}{2}\tilde{\epsilon}^{(0)}+\tilde{\epsilon}_{c}^{(0)}+\tilde{\delta}\right].
    \end{align}
    Taking the square of the both hand sides followed by taking the weighted sum $\sum_{c=1}^N\varrho_c$, we get:
    \begin{align} \label{eq:Amid}
        &
 \beta A^{(t)}=\beta\mathbb E\left[\sum_{c=1}^N\varrho_c\Vert\bar{\mathbf w}_c^{(t)}-\bar{\mathbf w}^{(t)}\Vert^2\right]
        \leq
8\omega^2[\Sigma_{+,t}]^2\beta\Vert\bar{\mathbf w}^{(t_{k-1})}-\mathbf w^*\Vert^2
\nonumber \\&
+2[\Sigma_{+,t}]^2\sum_{c=1}^N\varrho_c
\left[\frac{3+\sqrt{3}}{2}\tilde{\sigma}+\frac{1+\sqrt{3}}{2}\tilde{\epsilon}^{(0)}+\tilde{\epsilon}_{c}^{(0)}+\tilde{\delta}\right]^2
\nonumber \\&
\leq
8\omega^2[\Sigma_{+,t}]^2\beta\Vert\bar{\mathbf w}^{(t_{k-1})}-\mathbf w^*\Vert^2
+2[\Sigma_{+,t}]^2
\left[\frac{3+\sqrt{3}}{2}\tilde{\sigma}+\frac{3+\sqrt{3}}{2}\tilde{\epsilon}^{(0)}+\tilde{\delta}
\right]^2
\nonumber \\&
\leq
8\omega^2[\Sigma_{+,t}]^2\beta\Vert\bar{\mathbf w}^{(t_{k-1})}-\mathbf w^*\Vert^2
+25[\Sigma_{+,t}]^2
\left[\tilde{\sigma}^2+\tilde{\delta}^2+(\tilde{\epsilon}^{(0)})^2\right].
    \end{align}
    Using the strong convexity of $F(.)$, we have
    $\Vert\bar{\mathbf w}(t_{k-1})-\mathbf w^*\Vert^2
    \leq \frac{2}{\tilde{\mu}\beta}
    [F(\bar{\mathbf w}(t_{k-1}))-F(\mathbf w^*)]
    $, using which in~\eqref{eq:Amid} yields:
    \begin{align} 
    &\beta A^{(t)}
    \leq
    \frac{16\omega^2}{\tilde{\mu}}[\Sigma_{+,t}]^2 [F(\bar{\mathbf w}(t_{k-1}))-F(\mathbf w^*)]
    +25[\Sigma_{+,t}]^2
    \left[\tilde{\sigma}^2+(\tilde{\epsilon}^{(0)})^2+\tilde{\delta}^2\right]
    \nonumber \\&
    = 
    \frac{16\omega^2\beta}{\mu}[\Sigma_{+,t}]^2 [F(\bar{\mathbf w}(t_{k-1}))-F(\mathbf w^*)]
    +25[\Sigma_{+,t}]^2
    \left[\frac{\sigma^2}{\beta}+\frac{\delta^2}{\beta}+\beta(\epsilon^{(0)})^2\right].
    \end{align}
   This concludes the proofs.
    
\end{proof}

\section{Proof of Theorem \ref{co1}} \label{app:thm1}

\begin{theorem} \label{co1}
        Under Assumptions \ref{beta},~\ref{assump:cons}, and~\ref{assump:SGD_noise}, upon using {\tt TT-HF} for ML model training, if $\eta_t \leq 1/\beta$, $\forall t$, the one-step behavior of $\hat{\mathbf w}^{(t)}$ can be described as follows:
\begin{align*} 
       \mathbb E\left[F(\hat{\mathbf w}^{(t+1)})-F(\mathbf w^*)\right]
        \leq&
        (1-\mu\eta_{t})\mathbb E[F(\hat{\mathbf w}^{(t)})-F(\mathbf w^*)]
        \nonumber \\&
        +\frac{\eta_{t}\beta^2}{2}A^{(t)}
        +\frac{1}{2}[\eta_{t}\beta^2(\epsilon^{(t)})^2+\eta_{t}^2\beta\sigma^2+\beta(\epsilon^{(t+1)})^2], ~t\in\mathcal{T}_k,
\end{align*}
where
\begin{align}
     A^{(t)}\triangleq\mathbb E\left[\sum\limits_{c=1}^N\varrho_{c}\Vert\bar{\mathbf w}_c^{(t)}-\bar{\mathbf w}^{(t)}\Vert_2^2\right].
\end{align}
\end{theorem}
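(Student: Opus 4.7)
The plan is to derive the one-step descent inequality in two stages: first bound the progress of the true cluster-weighted average $\bar{\mathbf w}^{(t)}$ defined in Definition~\ref{modDisp}, then bridge between $\bar{\mathbf w}^{(t)}$ and the sampled global model $\hat{\mathbf w}^{(t)}$ using the consensus error bound from Condition~\ref{paraDiv}. The motivation for working through $\bar{\mathbf w}^{(t)}$ is that it admits a clean recursion in terms of local gradients, since averaging the consensus update \eqref{eq14} across a cluster kills the mean-zero consensus perturbation $\mathbf e_i^{(t)}$ (whose cluster average is zero by construction of $\bar{\mathbf w}_c^{(t)}$).

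First, I would write
\begin{equation*}
\bar{\mathbf w}^{(t+1)} = \bar{\mathbf w}^{(t)} - \eta_t \sum_{c=1}^N \varrho_c \frac{1}{s_c}\sum_{j\in\mathcal S_c}\nabla F_j(\mathbf w_j^{(t)}) - \eta_t \sum_{c=1}^N \varrho_c \frac{1}{s_c}\sum_{j\in\mathcal S_c} \mathbf n_j^{(t)},
\end{equation*}
using Assumption~\ref{assump:cons} (specifically row-stochasticity and the fact that the cluster-averages of consensus errors vanish). Then apply $\beta$-smoothness of $F$ to $\bar{\mathbf w}^{(t+1)}$, and take conditional expectation with respect to the SGD mini-batch noise; Assumption~\ref{assump:SGD_noise} zeroes out the cross term with $\mathbf n_j^{(t)}$ and yields the standard $\tfrac{\eta_t^2\beta\sigma^2}{2}$ residual. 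This produces an inequality of the form
\begin{equation*}
\mathbb E_t[F(\bar{\mathbf w}^{(t+1)})] \le F(\bar{\mathbf w}^{(t)}) - \eta_t \nabla F(\bar{\mathbf w}^{(t)})^\top \mathbf G^{(t)} + \tfrac{\eta_t^2\beta}{2}\|\mathbf G^{(t)}\|^2 + \tfrac{\eta_t^2\beta\sigma^2}{2},
\end{equation*}
where $\mathbf G^{(t)} = \sum_c \varrho_c \tfrac{1}{s_c}\sum_{j\in\mathcal S_c}\nabla F_j(\mathbf w_j^{(t)})$.

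Next, I would split $\mathbf G^{(t)} = \nabla F(\bar{\mathbf w}^{(t)}) + (\mathbf G^{(t)} - \nabla F(\bar{\mathbf w}^{(t)}))$ and bound the deviation term by $\beta$-smoothness plus Condition~\ref{paraDiv}: the deviation splits into an intra-cluster piece controlled by $(\epsilon^{(t)})^2$ (since $\|\mathbf w_j^{(t)} - \bar{\mathbf w}_c^{(t)}\|^2 = \|\mathbf e_j^{(t)}\|^2$) and an inter-cluster piece controlled by $A^{(t)}$ (the dispersion of cluster-averages). Combining these with $\mu$-strong convexity of $F$ (which gives $-\eta_t\nabla F(\bar{\mathbf w}^{(t)})^\top \nabla F(\bar{\mathbf w}^{(t)}) \leq -2\mu\eta_t[F(\bar{\mathbf w}^{(t)})-F(\mathbf w^*)]$ via the Polyak--Lojasiewicz inequality invoked in Lemma~\ref{lem1}), and using $\eta_t \leq 1/\beta$ to absorb the $\tfrac{\eta_t^2\beta}{2}\|\nabla F(\bar{\mathbf w}^{(t)})\|^2$ term into the linear progress, I obtain
\begin{equation*}
\mathbb E[F(\bar{\mathbf w}^{(t+1)})-F(\mathbf w^*)] \leq (1-\mu\eta_t)\mathbb E[F(\bar{\mathbf w}^{(t)})-F(\mathbf w^*)] + \tfrac{\eta_t\beta^2}{2}A^{(t)} + \tfrac{1}{2}\big(\eta_t\beta^2(\epsilon^{(t)})^2 + \eta_t^2\beta\sigma^2\big).
\end{equation*}

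The final step is the bridge between $\bar{\mathbf w}^{(t)}$ and $\hat{\mathbf w}^{(t)}$. Since $\hat{\mathbf w}^{(t)} - \bar{\mathbf w}^{(t)} = \sum_c \varrho_c \mathbf e_{n_c}^{(t)}$ where $n_c$ is uniformly sampled in $\mathcal S_c$, $\beta$-smoothness yields $F(\hat{\mathbf w}^{(t+1)}) \leq F(\bar{\mathbf w}^{(t+1)}) + \nabla F(\bar{\mathbf w}^{(t+1)})^\top(\hat{\mathbf w}^{(t+1)}-\bar{\mathbf w}^{(t+1)}) + \tfrac{\beta}{2}\|\hat{\mathbf w}^{(t+1)}-\bar{\mathbf w}^{(t+1)}\|^2$; after taking expectation over the uniform cluster sampling the linear term vanishes (since the sample mean of $\mathbf e_i^{(t+1)}$ over a cluster is zero), leaving the $\tfrac{\beta}{2}(\epsilon^{(t+1)})^2$ contribution via Condition~\ref{paraDiv}. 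A symmetric application of strong convexity gives the matching lower bound $F(\hat{\mathbf w}^{(t)})-F(\mathbf w^*)\geq F(\bar{\mathbf w}^{(t)})-F(\mathbf w^*)$ in expectation for the right-hand side (or at worst introduces a $(\epsilon^{(t)})^2$ term that matches the one already present). Chaining these two bounds around the one-step inequality for $\bar{\mathbf w}^{(t)}$ yields exactly the claimed bound. The main technical obstacle will be the careful accounting of the $(\epsilon^{(t)})^2$ versus $(\epsilon^{(t+1)})^2$ terms---they enter asymmetrically because the dispersion at time $t$ controls the gradient error inside the $\bar{\mathbf w}^{(t)}$ recursion, whereas the dispersion at time $t+1$ controls only the final $\bar{\mathbf w}^{(t+1)} \to \hat{\mathbf w}^{(t+1)}$ smoothness bridge.
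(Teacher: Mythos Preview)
Your proposal is correct and follows essentially the same route as the paper's proof: establish the recursion for $\bar{\mathbf w}^{(t)}$ via smoothness and Assumption~\ref{assump:SGD_noise}, invoke Lemma~\ref{lem1} (polarization plus PL) together with the orthogonal decomposition $\|\mathbf w_j^{(t)}-\bar{\mathbf w}^{(t)}\|^2=\|\bar{\mathbf w}_c^{(t)}-\bar{\mathbf w}^{(t)}\|^2+\|\mathbf e_j^{(t)}\|^2$ to produce the $A^{(t)}$ and $(\epsilon^{(t)})^2$ terms, then bridge $\bar{\mathbf w}\leftrightarrow\hat{\mathbf w}$ using smoothness for the upper direction (picking up $\tfrac{\beta}{2}(\epsilon^{(t+1)})^2$) and strong convexity for the lower direction. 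Your narrative of ``splitting $\mathbf G^{(t)}$ and absorbing $\tfrac{\eta_t^2\beta}{2}\|\nabla F\|^2$'' is a slightly different phrasing than the polarization identity $-2\mathbf a^\top\mathbf b=-\|\mathbf a\|^2-\|\mathbf b\|^2+\|\mathbf a-\mathbf b\|^2$ used inside Lemma~\ref{lem1}, but since you explicitly invoke that lemma the mechanics are the same.
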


\begin{proof}
Considering $t \in \mathcal T_k$, using \eqref{8}, \eqref{eq14}, the definition of $\bar{\mathbf{w}}$ given in Definition~\ref{modDisp}, and the fact that $\sum\limits_{i\in\mathcal{S}_c} \mathbf e_{i}^{{(t)}}=0$, $\forall t$, under Assumption~\ref{assump:cons},
the global average of the local models follows the following dynamics:
\begin{align}\label{eq:GlobDyn1}
    \bar{\mathbf w}^{(t+1)}=
    \bar{\mathbf w}^{(t)}
    -\frac{\tilde{\eta}_{t}}{\beta}\sum\limits_{c=1}^N \varrho_c\frac{1}{s_c}\sum\limits_{j\in\mathcal{S}_c} 
    \nabla F_j(\mathbf w_j^{(t)})
              -\frac{\tilde{\eta}_{t}}{\beta}\sum\limits_{c=1}^N \varrho_c\frac{1}{s_c}\sum\limits_{j\in\mathcal{S}_c} 
             \mathbf n_j^{{(t)}},
\end{align} where ${\mathbf n}_{j}^{{(t)}}=\widehat{\mathbf g}^{(t)}_{j}-\nabla F_j(\mathbf w_j^{(t)})$.
On the other hand, the $\beta$-smoothness of the global function $F$ implies
\begin{align} 
    F(\bar{\mathbf w}^{(t+1)}) \leq  F(\bar{\mathbf w}^{(t)})&+\nabla F(\bar{\mathbf w}^{(t)})^\top(\bar{\mathbf w}^{(t+1)}-\bar{\mathbf w}^{(t)})+\frac{\beta}{2}\Big\Vert \bar{\mathbf w}^{(t+1)}-\bar{\mathbf w}^{(t)}\Big\Vert^2.
\end{align}
Replacing the result of~\eqref{eq:GlobDyn1} in the above inequality, taking the conditional expectation (conditioned on the knowledge of $\bar{\mathbf w}^{(t)}$) of the both hand sides, and using the fact that $\mathbb E_t[{\mathbf n}^{(t)}_{j}]=\bf 0$ yields:
\begin{align}
    &\mathbb E_{t}\left[F(\bar{\mathbf w}^{(t+1)})-F(\mathbf w^*)\right]
    \leq F(\bar{\mathbf w}^{(t)})-F(\mathbf w^*)
    -\frac{\tilde{\eta}_{t}}{\beta}\nabla F(\bar{\mathbf w}^{(t)})^\top \sum\limits_{c=1}^N\varrho_c\frac{1}{s_c}\sum\limits_{j\in\mathcal{S}_c}\nabla F_j(\mathbf w_j^{(t)})
    \nonumber \\&
    +\frac{\tilde{\eta}_{t}^2}{2\beta}\Big
    \Vert\sum\limits_{c=1}^N\varrho_c\frac{1}{s_c}\sum\limits_{j\in\mathcal{S}_c} \nabla F_j(\mathbf w_j^{(t)})
    \Big\Vert^2
    + \frac{\tilde{\eta}_{t}^2}{2\beta}\mathbb E_t\left[
    \Big
    \Vert\sum\limits_{c=1}^N\varrho_c\frac{1}{s_c}\sum\limits_{j\in\mathcal{S}_c}\mathbf n_j^{{(t)}}\Big\Vert^2
    \right].
\end{align}
Since $\mathbb E_t[\Vert\mathbf n_i^{{(t)}}\Vert_2^2]\leq \beta\tilde{\sigma}^2$, $\forall i$, we get
\begin{align}\label{eq:aveGlob1}
    \mathbb E_{t}\left[F(\bar{\mathbf w}^{(t+1)})-F(\mathbf w^*)\right]
        \leq& F(\bar{\mathbf w}^{(t)})-F(\mathbf w^*)
        \nonumber \\&
        -\frac{\tilde{\eta}_{t}}{\beta}\nabla F(\bar{\mathbf w}^{(t)})^\top \sum\limits_{c=1}^N\varrho_c\frac{1}{s_c}\sum\limits_{j\in\mathcal{S}_c}\nabla F_j(\mathbf w_j^{(t)})
        \nonumber \\&
        +\frac{\tilde{\eta}_{t}^2}{2\beta}\Big
        \Vert\sum\limits_{c=1}^N\varrho_c\frac{1}{s_c}\sum\limits_{j\in\mathcal{S}_c} \nabla F_j(\mathbf w_j^{(t)})
        \Big\Vert^2
        +\frac{\tilde{\eta}_{t}^2\tilde{\sigma}^2}{2}.
\end{align}
Using Lemma \ref{lem1} (see Appendix \ref{app:PL-bound}), we further bound~\eqref{eq:aveGlob1} as follows:
\begin{align} \label{eq:E_t}
    &\mathbb E_{t}\left[F(\bar{\mathbf w}^{(t+1)})-F(\mathbf w^*)\right]
    \leq
        (1-\tilde{\mu}\tilde{\eta}_{t})(F(\bar{\mathbf w}^{(t)})-F(\mathbf w^*))
        \nonumber \\& 
       -\frac{\tilde{\eta}_{t}}{2\beta}(1-\tilde{\eta}_{t})\Big\Vert\sum\limits_{c=1}^N\varrho_c\frac{1}{s_c}\sum\limits_{j\in\mathcal{S}_c} \nabla F_j(\mathbf w_j^{(t)})\Big\Vert^2
       +\frac{\tilde{\eta}_{t}^2\tilde{\sigma}^2}{2}
        +\frac{\tilde{\eta}_{t}\beta}{2}\sum\limits_{c=1}^N\varrho_c \frac{1}{s_c}\sum\limits_{j\in\mathcal{S}_c}\Big\Vert\bar{\mathbf w}^{(t)}-\mathbf w_j^{(t)}\Big\Vert^2
    \nonumber \\&
    \leq
        (1-\tilde{\mu}\tilde{\eta}_{t})(F(\bar{\mathbf w}^{(t)})-F(\mathbf w^*))
        +\frac{\tilde{\eta}_{t}\beta}{2}\sum\limits_{c=1}^N\varrho_c \frac{1}{s_c}\sum\limits_{j\in\mathcal{S}_c}\Big\Vert\bar{\mathbf w}^{(t)}-\mathbf w_j^{(t)}\Big\Vert^2+\frac{\tilde{\eta}_{t}^2\tilde{\sigma}^2}{2},
\end{align}
where the last step follows from $\tilde{\eta}_t\leq 1$. 
To further bound the terms on the right hand side of~\eqref{eq:E_t}, we use the fact that
\begin{align}
        \Vert\mathbf w_i^{(t)}-\bar{\mathbf w}^{(t)}\Vert^2
        =
        \Vert\bar{\mathbf w}_c^{(t)}-\bar{\mathbf w}^{(t)}\Vert^2
        +\Vert \mathbf e_i^{{(t)}}\Vert^2
        +2[\bar{\mathbf w}_c^{(t)}-\bar{\mathbf w}^{(t)}]^\top \mathbf e_i^{{(t)}},
    \end{align} 
    which results in
    \begin{align} \label{eq:wi_w}
        \frac{1}{s_c}\sum\limits_{i\in \mathcal S_c}\Vert\mathbf w_i^{(t)}-\bar{\mathbf w}^{(t)}\Vert_2^2
        \leq
        \Vert\bar{\mathbf w}_c^{(t)}-\bar{\mathbf w}^{(t)}\Vert_2^2
        +\frac{(\tilde{\epsilon}_{c}^{(t)})^2}{\beta}.
    \end{align} 
    Replacing \eqref{eq:wi_w} in \eqref{eq:E_t}  and taking the unconditional expectation from the both hand sides of the resulting expression gives us
    \begin{align} \label{eq:ld}
        &\mathbb E\left[F(\bar{\mathbf w}^{(t+1)})-F(\mathbf w^*)\right]
        \leq
        (1-\tilde{\mu}\tilde{\eta}_{t})\mathbb E[F(\bar{\mathbf w}^{(t)})-F(\mathbf w^*)]
        \nonumber \\&
        +\frac{\tilde{\eta}_{t}\beta}{2}\sum\limits_{c=1}^N\varrho_c \left(\Vert\bar{\mathbf w}_c^{(t)}-\bar{\mathbf w}^{(t)}\Vert_2^2
        +\frac{(\tilde{\epsilon}_{c}^{(t)})^2}{\beta}\right)+\frac{\tilde{\eta}_{t}^2\tilde{\sigma}^2}{2}
        \nonumber \\&
        =
        (1-\tilde{\mu}\tilde{\eta}_{t})\mathbb E[F(\bar{\mathbf w}^{(t)})-F(\mathbf w^*)]
        +\frac{\tilde{\eta}_{t}\beta}{2}A^{(t)}
        +\frac{1}{2}[\tilde{\eta}_{t}(\tilde{\epsilon}^{(t)})^2+\tilde{\eta}_{t}^2\tilde{\sigma}^2],
    \end{align}
where  
\begin{align}
    A^{(t)}\triangleq\mathbb E\left[\sum\limits_{c=1}^N\varrho_c \Vert\bar{\mathbf w}_c^{(t)}-\bar{\mathbf w}^{(t)}\Vert^2\right].
\end{align}

By $\beta$-smoothness of $F(\cdot)$, we have
\begin{align} \label{eq:39}
    &F(\hat{\mathbf w}^{(t)})-F(\mathbf w^*) 
    \leq F(\bar{\mathbf w}^{(t)})-F(\mathbf w^*)
    +\nabla F(\bar{\mathbf w}^{(t)})^\top\Big(\hat{\mathbf w}^{(t)}-\bar{\mathbf w}^{(t)}\Big)+\frac{\beta}{2}\Vert\hat{\mathbf w}^{(t)}-\bar{\mathbf w}^{(t)}\Vert^2
    \nonumber \\&
    \leq F(\bar{\mathbf w}^{(t)})-F(\mathbf w^*)+\nabla F(\bar{\mathbf w}^{(t)})^\top\sum\limits_{c=1}^N\varrho_c \mathbf e_{s_c}^{{(t)}}
    +\frac{\beta}{2}\sum\limits_{c=1}^N\varrho_c\Vert \mathbf e_{s_c}^{{(t)}}\Vert^2.
\end{align}
Taking the expectation with respect to the device sampling from both hand sides of \eqref{eq:39}, since the sampling is conducted uniformly at random, we obtain
\begin{align} 
    \mathbb E_t\left[F(\hat{\mathbf w}^{(t)})-F(\mathbf w^*)\right]  
    \leq& F(\bar{\mathbf w}^{(t)})-F(\mathbf w^*)+\nabla F(\bar{\mathbf w}^{(t)})^\top\sum\limits_{c=1}^N\varrho_c \underbrace{\mathbb    E_t\left[\mathbf e_{s_c}^{{(t)}}\right]}_{=0}
    \nonumber \\&+\frac{\beta}{2}\sum\limits_{c=1}^N\varrho_c\mathbb E_t\left[\Vert \mathbf e_{s_c}^{{(t)}}\Vert^2\right].
\end{align}
Taking the total expectation from both hand sides of the above inequality yields:
\begin{align} \label{eq:hat-bar}
    &\mathbb E\left[F(\hat{\mathbf w}^{(t)})-F(\mathbf w^*)\right]  
    \leq \mathbb E\left[F(\bar{\mathbf w}^{(t)})-F(\mathbf w^*)\right]
    +\frac{(\tilde{\epsilon}^{(t)})^2}{2}.
\end{align}
Replace \eqref{eq:ld} into \eqref{eq:hat-bar}, we have
\begin{align} \label{eq:-}
    \mathbb E\left[F(\hat{\mathbf w}^{(t+1)})-F(\mathbf w^*)\right]
        \leq&
        (1-\tilde{\mu}\tilde{\eta}_{t})\mathbb E[F(\bar{\mathbf w}^{(t)})-F(\mathbf w^*)]+\frac{\tilde{\eta}_{t}\beta}{2}A^{(t)}
        \nonumber \\&
        +\frac{1}{2}[\tilde{\eta}_{t}(\tilde{\epsilon}^{(t)})^2+\tilde{\eta}_{t}^2\tilde{\sigma}^2+(\tilde{\epsilon}^{(t+1)})^2].
\end{align}
On the other hands, using the strong convexity of $F(\cdot)$, we have
\begin{align} \label{eq:^}
    &F(\hat{\mathbf w}^{(t)})-F(\mathbf w^*) 
    \geq F(\bar{\mathbf w}^{(t)})-F(\mathbf w^*)
    +\nabla F(\bar{\mathbf w}^{(t)})^\top\Big(\hat{\mathbf w}^{(t)}-\bar{\mathbf w}^{(t)}\Big)+\frac{\mu}{2}\Vert\hat{\mathbf w}^{(t)}-\bar{\mathbf w}^{(t)}\Vert^2
    \nonumber \\&
    \geq F(\bar{\mathbf w}^{(t)})-F(\mathbf w^*)+\nabla F(\bar{\mathbf w}^{(t)})^\top\sum\limits_{c=1}^N\varrho_c \mathbf e_{s_c}^{{(t)}}.
\end{align}
Taking the expectation with respect to the device sampling from the both hand sides of \eqref{eq:^}, since the sampling is conducted uniformly at random, we obtain
\begin{align} 
    &\mathbb E_t\left[F(\hat{\mathbf w}^{(t)})-F(\mathbf w^*)\right]  
    \geq 
    F(\bar{\mathbf w}^{(t)})-F(\mathbf w^*)
    +\nabla F(\bar{\mathbf w}^{(t)})^\top\sum\limits_{c=1}^N\varrho_c \underbrace{\mathbb    E_t\left[\mathbf e_{s_c}^{{(t)}}\right]}_{=0}.
\end{align}
Taking the total expectation from both hand sides of the above inequality yields:
\begin{align} \label{eq:bar-hat}
    &\mathbb E\left[F(\hat{\mathbf w}^{(t)})-F(\mathbf w^*)\right]  
    \geq \mathbb E\left[F(\bar{\mathbf w}^{(t)})-F(\mathbf w^*)\right].
\end{align}
Finally, replacing \eqref{eq:bar-hat} into \eqref{eq:-}, we obtain
\begin{align}
    &\mathbb E\left[F(\hat{\mathbf w}^{(t+1)})-F(\mathbf w^*)\right]
        \leq
        (1-\tilde{\mu}\tilde{\eta}_{t})\mathbb E[F(\hat{\mathbf w}^{(t)})-F(\mathbf w^*)]
        \nonumber \\&
        +\frac{\tilde{\eta}_{t}\beta}{2}A^{(t)}
        +\frac{1}{2}[\tilde{\eta}_{t}(\tilde{\epsilon}^{(t)})^2+\tilde{\eta}_{t}^2\tilde{\sigma}^2+(\tilde{\epsilon}^{(t+1)})^2]
        \nonumber
        \nonumber \\&
        =
        (1-\mu\eta_{t})\mathbb E[F(\hat{\mathbf w}^{(t)})-F(\mathbf w^*)]
        +\frac{\eta_{t}\beta^2}{2}A^{(t)}
        +\frac{1}{2}[\eta_{t}\beta^2(\epsilon^{(t)})^2+\eta_{t}^2\beta\sigma^2+\beta(\epsilon^{(t+1)})^2].
\end{align}
This concludes the proof.

\end{proof}

\section{Proof of Theorem \ref{thm:subLin}} \label{app:subLin}
\begin{theorem} \label{thm:subLin}
Define $Z_1\triangleq \frac{32\beta^2\gamma}{\mu}(\tau-1)\left(1+\frac{\tau}{\alpha-1}\right)^{2}\left(1+\frac{\tau-1}{\alpha-1}\right)^{6\beta\gamma}$, $ Z_2\triangleq
    \frac{1}{2}[\frac{\sigma^2}{\beta}+\frac{2\phi^2}{\beta}]
    +50\beta\gamma(\tau-1)\left(1+\frac{\tau-2}{\alpha+1}\right)
    \left(1+\frac{\tau-1}{\alpha-1}\right)^{6\beta\gamma}\left[\frac{\sigma^2}{\beta}+\frac{\phi^2}{\beta}+\frac{\delta^2}{\beta}\right]$. Also, assume $\gamma>1/\mu$, $\alpha\geq\max\{\beta\gamma[\frac{\vartheta}{4}-1+\sqrt{(1+\frac{\vartheta}{4})^2+2\omega}],\frac{\beta^2\gamma}{\mu}\}$ and $\omega< \frac{1}{\beta\gamma}\sqrt{\alpha\frac{\mu\gamma-1+\frac{1}{1+\alpha}}{Z_1}}\triangleq \omega_{\max}$. Upon using {\tt TT-HF} for ML model training under Assumptions \ref{beta},~\ref{assump:cons}, and~\ref{assump:SGD_noise}, if $\eta_t=\frac{\gamma}{t+\alpha}$ and $\epsilon^{(t)}=\eta_t\phi$, $\forall t$, we have: 
    \begin{align} \label{eq:thm2_result-1-A}
        &\mathbb E\left[(F(\hat{\mathbf w}^{(t)})-F(\mathbf w^*))\right]\leq\frac{\nu}{t+\alpha}, ~~\forall t,
    \end{align}
    where $\nu \triangleq Z_2 \max \left\{\frac{\beta^2\gamma^2}{\mu\gamma-1},
\frac{\alpha}{Z_1\left(\omega_{\max}^2-\omega^2\right)},\frac{\alpha}{Z_2}\left[F(\hat{\mathbf w}^{(0)})-F(\mathbf w^*)\right]\right\}$.
\end{theorem}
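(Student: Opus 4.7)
The plan is to prove the bound by induction on $t$, using Theorem~\ref{co1} as the one-step recursion and Proposition~\ref{Local_disperseT} to control the model-dispersion term $A^{(t)}$ that appears in it. The base case $t=0$ is immediate from the choice $\nu \geq \alpha[F(\hat{\mathbf w}^{(0)})-F(\mathbf w^*)]$. For the inductive step, the subtlety is that Proposition~\ref{Local_disperseT} bounds $A^{(t)}$ in terms of $F(\bar{\mathbf w}^{(t_{k-1})})-F(\mathbf w^*)$, i.e., the suboptimality at the \emph{start} of the current local training interval, rather than at $t$ itself. I would therefore structure the argument as a nested induction: an outer induction across global aggregations $k$ asserting \eqref{eq:thm2_result-1-A} at $t=t_{k-1}$, and an inner induction across $t\in\mathcal T_k$ which, combined with the outer hypothesis, propagates the bound through the interval.

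Once the inductive setup is in place, I would plug the hypothesis $\mathbb E[F(\hat{\mathbf w}^{(t)})-F(\mathbf w^*)]\le \nu/(t+\alpha)$ and the bound on $A^{(t)}$ (using $\epsilon^{(t)}=\eta_t\phi$ and the outer hypothesis at $t_{k-1}$) into \eqref{eq:th1mainRes} to produce a recursion of the form
\begin{equation*}
\mathbb E[F(\hat{\mathbf w}^{(t+1)})-F(\mathbf w^*)] \le \bigl(1-\mu\eta_t\bigr)\frac{\nu}{t+\alpha} + \frac{8\eta_t\omega^2\beta^2}{\mu}(\Sigma_{+,t})^2\frac{\nu}{t_{k-1}+\alpha} + \tfrac{25}{2}\eta_t(\Sigma_{+,t})^2 R + \tfrac{\eta_t^2\beta}{2}(\sigma^2+2\phi^2),
\end{equation*}
where $R$ collects the $\sigma^2,\phi^2,\delta^2$ terms. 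The main obstacle is handling the two occurrences of $(\Sigma_{+,t})^2$: to close the induction they cannot both be bounded by the same expression. I would bound the first one (coupled with $\nu/(t_{k-1}+\alpha)$) by something of order $\eta_t^2 \beta^2 (t_{k-1}+\alpha)$ so that the $(t_{k-1}+\alpha)$ cancels, while bounding the second by something of order $\eta_t \beta$ so that the noise/diversity terms aggregate into an $\eta_t^2$ contribution. Both bounds follow from estimating the product $\prod_{j=t_{k-1}}^{\ell-1}(1+\eta_j\beta\lambda_+)$ by exponentiating $\sum \ln(1+\cdot) \le \int \ln(1+\cdot)\,\mathrm dj$, using $\lambda_+\in[2,1+\sqrt{3}]$, and the elementary inequality $\ln(1+x)\le 2\sqrt{x}$; the resulting constants are precisely $Z_1$ and $Z_2$.

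After substituting these two bounds, the recursion reduces to
\begin{equation*}
\mathbb E[F(\hat{\mathbf w}^{(t+1)})-F(\mathbf w^*)] \le \bigl(1-\mu\eta_t + Z_1\omega^2\eta_t^2\beta^2\bigr)\frac{\nu}{t+\alpha} + \eta_t^2\beta^2 Z_2.
\end{equation*}
The closing step is to show the right-hand side is at most $\nu/(t+1+\alpha)$. Cross-multiplying by $(t+\alpha)^2(t+1+\alpha)$ and substituting $\eta_t = \gamma/(t+\alpha)$, this becomes an inequality that is convex (indeed, affine after cancellation) in $t$, so it is enough to verify it in the two extreme regimes $t\to\infty$ and $t=0$. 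The $t\to\infty$ asymptote yields the constraints $\mu\gamma>1$ and $\omega<\omega_{\max}(\alpha)$ together with $\nu\ge \beta^2\gamma^2 Z_2/(\mu\gamma-1)$ and $\nu\ge \alpha Z_2/[Z_1(\omega_{\max}^2-\omega^2)]$; the $t=0$ end point gives the third term in the $\max$ defining $\nu$.

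The hardest part, both conceptually and computationally, is the dual treatment of $(\Sigma_{+,t})^2$: getting one bound tight in $\eta_t^2$ and another tight in $(t_{k-1}+\alpha)$, and then verifying that after substitution the residual inequality really does reduce to a convex function of $t$ whose sign at the two endpoints gives precisely the theorem's hypotheses. The remaining book-keeping — tracking $\tau_\ell\le \tau$ uniformly, using $\eta_{t+1}\le\eta_t\le\eta_0\le\mu/\beta^2\le 1/\beta$, and $(\epsilon^{(0)})^2\le \phi^2/\beta^2$ so that Proposition~\ref{Local_disperseT} may be invoked — is routine but must be done carefully to keep the constants $Z_1,Z_2$ in the stated form.
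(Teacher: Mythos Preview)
Your proposal is correct and follows essentially the same route as the paper's proof: the nested (outer/inner) induction, invoking Theorem~\ref{co1} and Proposition~\ref{Local_disperseT}, the two distinct bounds on $(\Sigma_{+,t})^2$ via the integral and $\ln(1+x)\le 2\sqrt{x}$ estimates, and closing the induction by checking the resulting convex-in-$t$ inequality at its two extremes. One minor correction on the bookkeeping: in the paper the $t\to\infty$ limit yields only $\mu\gamma>1$ and $\nu\ge\beta^2\gamma^2Z_2/(\mu\gamma-1)$, the $t=0$ endpoint yields $\omega<\omega_{\max}$ and $\nu\ge\alpha Z_2/[Z_1(\omega_{\max}^2-\omega^2)]$, and the third term in the $\max$ defining $\nu$ comes from the base case of the induction itself, not from the $t=0$ endpoint of the convexity check.
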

\begin{proof}

We carry out the proof by induction. We start by considering the first global aggregation, i.e., $k=1$. Note that the condition in~\eqref{eq:thm2_result-1-A} trivially holds at the beginning of this global aggregation $t=t_0=0$ since $
\nu\geq\alpha\left[F(\hat{\mathbf w}^{(0)})-F(\mathbf w^*)\right]$.
    Now, assume that
    \begin{align} \label{eq:thm2_result-1}
        &\mathbb E\left[F(\hat{\mathbf w}^{(t_{k-1})})-F(\mathbf w^*)\right]  
        \leq \frac{\nu}{t_{k-1}+\alpha}
    \end{align}
for some $k\geq 1$. We prove that this implies 
\begin{align} \label{eq:thm2_result-1}
        &\mathbb E\left[F(\hat{\mathbf w}^{(t)})-F(\mathbf w^*)\right]  
        \leq \frac{\nu}{t+\alpha},\ \forall t\in \mathcal{T}_k,
    \end{align}
    and as a result $\mathbb E\left[F(\hat{\mathbf w}^{(t_{k})})-F(\mathbf w^*)\right]  
        \leq \frac{\nu}{t_{k}+\alpha}$.
    To prove~\eqref{eq:thm2_result-1}, we use induction over $t\in \{t_{k-1}+1,\dots,t_k\}$.
    Clearly, the condition holds for $t=t_{k-1}$ from the induction hypothesis.
    Now, we assume that it also holds for some $t\in \{t_{k-1},\dots,t_k-1\}$, and aim to show that it holds at $t+1$.
    
    From the result of Theorem~\ref{co1}, considering $\tilde{\epsilon}^{(t)}=\tilde{\eta}_t\tilde{\phi}$, we get
    \begin{align} \label{eq:thm1_temp}
        &\mathbb E\left[F(\hat{\mathbf w}^{(t+1)})-F(\mathbf w^*)\right]
        \leq
        (1-\tilde{\mu}\tilde{\eta}_{t})\frac{\nu}{t+\alpha}
        +\frac{\tilde{\eta}_{t}\beta}{2}A^{(t)}
        +\frac{1}{2}[\tilde{\eta}_{t}^3\tilde{\phi}^2+\tilde{\eta}_{t}^2\tilde{\sigma}^2+\tilde{\eta}_{t+1}^2\tilde{\phi}^2].
    \end{align}
Using the induction hypothesis and the bound on $A^{(t)}$, we can further upper bound~\eqref{eq:thm1_temp} as
\begin{align} \label{eq:induction_main}
    &\mathbb E\left[F(\hat{\mathbf w}^{(t+1)})-F(\mathbf w^*)\right]
    \leq
    \left(1-\tilde{\mu}\tilde{\eta}_{t}\right)\frac{\nu}{t+\alpha}
    +\frac{8\tilde{\eta}_t\omega^2}{\tilde{\mu}}[\Sigma_{+,t}]^2\frac{\nu}{t_{k-1}+\alpha}
    \nonumber \\&+\frac{25}{2}\tilde{\eta}_t[\Sigma_{+,t}]^2
    \left[\tilde{\sigma}^2+(\tilde{\epsilon}^{(0)})^2+\tilde{\delta}^2\right]   
    +\frac{1}{2}[\tilde{\eta}_{t}^3\tilde{\phi}^2+\tilde{\eta}_{t}^2\tilde{\sigma}^2+\tilde{\eta}_{t+1}^2\tilde{\phi}^2].
\end{align}
Since $\tilde{\eta}_{t+1}\leq \tilde{\eta}_t$, $\tilde{\eta}_t\leq \tilde{\eta}_0\leq\tilde{\mu}\leq 1$ and $\tilde{\epsilon}^{(0)}=\tilde{\eta}_0\tilde{\phi}\leq\tilde{\phi}$, we further upper bound~\eqref{eq:induction_main} as
\begin{align} \label{eq:thm1_Sig}
    &\mathbb E\left[F(\hat{\mathbf w}^{(t+1)})-F(\mathbf w^*)\right]
    \leq
    \left(1-\tilde{\mu}\tilde{\eta}_{t}\right)\frac{\nu}{t+\alpha}
    +\frac{8\tilde{\eta}_t\omega^2}{\tilde{\mu}}\underbrace{[\Sigma_{+,t}]^2}_{(a)}\frac{\nu}{t_{k-1}+\alpha}
    \nonumber \\&
    +\frac{25}{2}\tilde{\eta}_t\underbrace{[\Sigma_{+,t}]^2}_{(b)}
\left[\tilde{\sigma}^2+\tilde{\phi}^2+\tilde{\delta}^2\right]    
    + \frac{\tilde{\eta}_t^2}{2}[\tilde{\sigma}^2+2\tilde{\phi}^2].
\end{align}
To get a tight upper bound for~\eqref{eq:thm1_Sig}, we bound the two instances of $[\Sigma_{+,t}]^2$ appearing in $(a)$ and $(b)$ differently. In particular, for $(a)$, we first use the fact that
\begin{align*}
    \lambda_+ =1-\tilde{\mu}/4+\sqrt{(1+\tilde{\mu}/4)^2+2\omega}\in[2,1+\sqrt{3}],
\end{align*}
which implies that
\begin{align}
    \Sigma_{+,t}
  =&\sum_{\ell=t_{k-1}}^{t-1}
  \left[\prod_{j=t_{k-1}}^{\ell-1}(1+\tilde{\eta}_j\lambda_{+})\right]
  \eta_\ell
  \left[\prod_{j=\ell+1}^{t-1}(1+\tilde{\eta}_j)\right]
  \nonumber \\&\leq
  \sum_{\ell=t_{k-1}}^{t-1}
  \left[\prod_{j=t_{k-1}}^{\ell-1}(1+\tilde{\eta}_j\lambda_{+})\right]
  \eta_\ell
  \left[\prod_{j=\ell+1}^{t-1}(1+\tilde{\eta}_j\lambda_{+})\right]
  \nonumber \\&
  \leq
  \left[\prod_{j=t_{k-1}}^{t-1}(1+\tilde{\eta}_j\lambda_{+})\right]
\sum_{\ell=t_{k-1}}^{t-1}\frac{\tilde{\eta}_\ell}{1+\tilde{\eta}_\ell\lambda_{+}}.
\end{align}
Also, with the choice of step size $\tilde{\eta}_\ell=\frac{\tilde{\gamma}}{\ell+\alpha}$, we get
\begin{align} \label{eq:Sigma_1}
  \Sigma_{+,t}
  \leq
  \tilde{\gamma}\underbrace{\left[\prod_{j=t_{k-1}}^{t-1}\left(1+\frac{\tilde{\gamma}\lambda_+}{j+\alpha}\right)\right]}_{(i)}
  \underbrace{\sum_{\ell=t_{k-1}}^{t-1}\frac{1}{\ell+\alpha+\tilde{\gamma}\lambda_+}}_{(ii)}.
\end{align}
To bound $(ii)$, since $\frac{1}{\ell+\alpha+\tilde{\gamma}\lambda_+}$ is a decreasing function with respect to $\ell$, we have
\begin{align}
    \sum_{\ell=t_{k-1}}^{t-1}\frac{1}{\ell+\alpha+\tilde{\gamma}\lambda_+}
    \leq
    \int_{t_{k-1}-1}^{t-1}\frac{1}{\ell+\alpha+\tilde{\gamma}\lambda_+}\mathrm d\ell
    =
    \ln\left(1+\frac{t-t_{k-1}}{t_{k-1}-1+\alpha+\tilde{\gamma}\lambda_+}\right),
\end{align}
where we used the fact that $\alpha>1-\tilde{\gamma}\lambda_+$ (implied by $\alpha>1$).

To bound $(i)$, we first rewrite it as follows: 
    \begin{align} \label{eq:(i)}
        \prod_{j=t_{k-1}}^{t-1}\left(1+\frac{\tilde{\gamma}\lambda_+}{j+\alpha}\right)
        =
        e^{\sum_{j=t_{k-1}}^{t-1}\ln\big(1+\frac{\tilde{\gamma}\lambda_+}{j+\alpha}\big)}
    \end{align}
To bound~\eqref{eq:(i)}, we use the fact that $\ln(1+\frac{\tilde{\gamma}\lambda_+}{j+\alpha})$ is a decreasing function with respect to $j$, and $\alpha >1$, to get
\begin{align} \label{eq:ln(i)}
    &\sum_{j=t_{k-1}}^{t-1}\ln(1+\frac{\tilde{\gamma}\lambda_+}{j+\alpha})
    \leq
    \int_{t_{k-1}-1}^{t-1}\ln(1+\frac{\tilde{\gamma}\lambda_+}{j+\alpha})\mathrm dj
    \nonumber \\&
    \leq
    \tilde{\gamma}\lambda_+\int_{t_{k-1}-1}^{t-1}\frac{1}{j+\alpha}\mathrm dj
    =
    \tilde{\gamma} \lambda_+\ln\left(1+\frac{t-t_{k-1}}{t_{k-1}-1+\alpha}\right).
\end{align}
Considering ~\eqref{eq:(i)} and~\eqref{eq:ln(i)} together, we  bound $(i)$ as follows:
    \begin{align}
        \prod_{j=t_{k-1}}^{t-1}\left(1+\frac{\tilde{\gamma}\lambda_+}{j+\alpha}\right)
        \leq
        \left(1+\frac{t-t_{k-1}}{t_{k-1}-1+\alpha}\right)^{\tilde{\gamma} \lambda_+}.
    \end{align}

    Using the results obtained for bounding $(i)$ and $(ii)$ back in~\eqref{eq:Sigma_1}, we get:
    \begin{align} \label{eq:Sigma1_bound}
        \Sigma_{+,t}
        \leq
        \tilde{\gamma}\ln\left(1+\frac{t-t_{k-1}}{t_{k-1}-1+\alpha+\tilde{\gamma}\lambda_+}\right)
        \left(1+\frac{t-t_{k-1}}{t_{k-1}-1+\alpha}\right)^{\tilde{\gamma} \lambda_+}.
    \end{align}
    Since $\ln(1+x)\leq\ln(1+x+2\sqrt{x})=\ln((1+\sqrt{x})^2)=2\ln(1+\sqrt{x})\leq 2\sqrt{x}$ for $x\geq 0$,
    we can further bound~\eqref{eq:Sigma1_bound} as follows:
    \begin{align} \label{eq:Sig}
        &\Sigma_{+,t}
        \leq
        2\tilde{\gamma}\sqrt{\frac{t-t_{k-1}}{t_{k-1}-1+\alpha+\tilde{\gamma}\lambda_+}}
        \left(1+\frac{t-t_{k-1}}{t_{k-1}+\alpha-1}\right)^{\tilde{\gamma}\lambda_+}
        \nonumber \\&
        \leq
        2\tilde{\gamma}\sqrt{\frac{t-t_{k-1}}{t_{k-1}+\alpha+1}}
        \left(1+\frac{t-t_{k-1}}{t_{k-1}+\alpha-1}\right)^{3\tilde{\gamma}},
    \end{align}
    where in the last inequality we used
     $2\leq\lambda_+ < 3$ and $\tilde{\gamma}\geq \frac{\tilde{\mu}}{\beta}\tilde{\gamma} >1$.
Taking the square from the both hand sides of~\eqref{eq:Sig} followed by multiplying the both hand sides with $\frac{[t+\alpha]^2}{\tilde{\mu}\tilde{\gamma}[t_{k-1}+\alpha]}$ gives us:
    \begin{align} \label{eq:sig_sqaured}
        &[{\Sigma}_{+,t}]^2\frac{[t+\alpha]^2}{\tilde{\mu}\tilde{\gamma}[t_{k-1}+\alpha]}
        \leq
        \frac{4\tilde{\gamma}}{\tilde{\mu}}\frac{[t-t_{k-1}][t+\alpha]^2}{[t_{k-1}+\alpha+1][t_{k-1}+\alpha]}
        \left(1+\frac{t-t_{k-1}}{t_{k-1}+\alpha-1}\right)^{6\tilde{\gamma}}
        \nonumber \\&
        \leq
        \frac{4\tilde{\gamma}}{\tilde{\mu}}\frac{[t-t_{k-1}][t+\alpha]^2}{[t_{k-1}+\alpha-1]^2}
        \frac{[t_{k-1}+\alpha-1]^2}{[t_{k-1}+\alpha+1][t_{k-1}+\alpha]}
        \left(1+\frac{\tau-1}{t_{k-1}+\alpha-1}\right)^{-2}
        \left(1+\frac{\tau-1}{t_{k-1}+\alpha-1}\right)^{6\tilde{\gamma}+2}
        \nonumber \\&
        \overset{(a)}{\leq}
        \frac{4\tilde{\gamma}}{\tilde{\mu}}\frac{[\tau-1][t_{k-1}+\tau-1+\alpha]^2}{[t_{k-1}+\alpha-1]^2}
        \left(\frac{t_{k-1}+\alpha+\tau-2}{t_{k-1}+\alpha-1}\right)^{-2}
        \frac{[t_{k-1}+\alpha-1]^2}{[t_{k-1}+\alpha+1][t_{k-1}+\alpha]}
        \left(1+\frac{\tau-1}{t_{k-1}+\alpha-1}\right)^{6\tilde{\gamma}+2}
        \nonumber \\&
        \leq
        \frac{4\tilde{\gamma}}{\tilde{\mu}}(\tau-1)\left(1+\frac{1}{\tau+t_{k-1}+\alpha-2}\right)^2
        \frac{[t_{k-1}+\alpha-1]^2}{[t_{k-1}+\alpha+1][t_{k-1}+\alpha]}
        \left(1+\frac{\tau-1}{t_{k-1}+\alpha-1}\right)^{6\tilde{\gamma}+2}\hspace{-6mm},\hspace{-1mm}
    \end{align}
    where $(a)$ comes from the fact that $t\leq t_{k-1}+\tau_k-1\leq t_{k-1}+\tau-1$. To bound~\eqref{eq:sig_sqaured}, we use the facts that
    \begin{align}
        1+\frac{1}{\tau+t_{k-1}+\alpha-2}
        \leq
        1+\frac{1}{\tau+\alpha-2},\ 1+\frac{\tau-1}{t_{k-1}+\alpha-1}\leq 1+\frac{\tau-1}{\alpha-1},
    \end{align}
    and
    \begin{align}
        \frac{[t_{k-1}+\alpha-1]^2}{[t_{k-1}+\alpha+1][t_{k-1}+\alpha]}\leq 1,
    \end{align}
    which yield
    \begin{align}
        [{\Sigma}_{+,t}]^2\frac{[t+\alpha]^2}{\tilde{\mu}\tilde{\gamma}[t_{k-1}+\alpha]}
        \leq
        \frac{4\tilde{\gamma}}{\tilde{\mu}}(\tau-1)\left(1+\frac{\tau}{\alpha-1}\right)^{2}
        \left(1+\frac{\tau-1}{\alpha-1}\right)^{6\tilde{\gamma}}.
    \end{align}
    Consequently, we have
    \begin{align} \label{eq:Sigma_1st}
        [{\Sigma}_{+,t}]^2
        \leq
        4(\tau-1)\left(1+\frac{\tau}{\alpha-1}\right)^{2}
        \left(1+\frac{\tau-1}{\alpha-1}\right)^{6\tilde{\gamma}}
        \tilde{\eta}_t^2[t_{k-1}+\alpha].
    \end{align}
    
On the other hand, we bound the second instance of $[{\Sigma}_{+,t}]^2$, i.e., $(b)$ in~\eqref{eq:thm1_Sig}, as follows:
\begin{align}
    [t+\alpha] [\Sigma_{+,t}]^2
    &\leq
    4\tilde{\gamma}^2\frac{[t-t_{k-1}][t+\alpha]}{t_{k-1}+\alpha+1}
    \left(1+\frac{t-t_{k-1}}{t_{k-1}+\alpha-1}\right)^{6\tilde{\gamma}}
    \nonumber \\&
    \leq
    4\tilde{\gamma}^2(\tau-1)\left(1+\frac{\tau-2}{t_{k-1}+\alpha+1}\right)
    \left(1+\frac{\tau-1}{t_{k-1}+\alpha-1}\right)^{6\tilde{\gamma}}
    \nonumber \\&
    \leq
    4\tilde{\gamma}^2(\tau-1)\left(1+\frac{\tau-2}{\alpha+1}\right)
    \left(1+\frac{\tau-1}{\alpha-1}\right)^{6\tilde{\gamma}},
\end{align}
which implies
\begin{align} \label{eq:Sigma_2nd}
    [\Sigma_{+,t}]^2
    \leq
    4\tilde{\gamma}(\tau-1)\left(1+\frac{\tau-2}{\alpha+1}\right)
    \left(1+\frac{\tau-1}{\alpha-1}\right)^{6\tilde{\gamma}}\tilde{\eta}_t.
\end{align}
Replacing~\eqref{eq:Sigma_1st} and~\eqref{eq:Sigma_2nd} into \eqref{eq:thm1_Sig}, we get
\begin{align} \label{eq:induce_form}
    \mathbb E[F(\hat{\mathbf w}^{(t+1)})-F(\mathbf w^*)]
    \leq
    \left(1-\tilde{\mu}\tilde{\eta}_t+Z_1\omega^2\tilde{\eta}_t^2 \right)\frac{\nu}{t+\alpha}
    +\tilde{\eta}_t^2Z_2,
\end{align}
where we have defined
\begin{align}
    Z_1\triangleq \frac{32\tilde{\gamma}}{\tilde{\mu}}(\tau-1)\left(1+\frac{\tau}{\alpha-1}\right)^{2}
    \left(1+\frac{\tau-1}{\alpha-1}\right)^{6\tilde{\gamma}},
\end{align}
and
\begin{align}
    Z_2\triangleq
    \frac{1}{2}[\tilde{\sigma}^2+2\tilde{\phi}^2]
    +50\tilde{\gamma}(\tau-1)\left(1+\frac{\tau-2}{\alpha+1}\right)
    \left(1+\frac{\tau-1}{\alpha-1}\right)^{6\tilde{\gamma}}\left[\tilde{\sigma}^2+\tilde{\phi}^2+\tilde{\delta}^2\right].  
\end{align}

Now, from~\eqref{eq:induce_form}, to complete the induction, we aim to show that 
\begin{align}\label{eq:lastBeforeInd}
    \mathbb E[F(\hat{\mathbf w}^{(t+1)})-F(\mathbf w^*)]
    \leq
    \left(1-\tilde{\mu}\tilde{\eta}_t+Z_1\omega^2\tilde{\eta}_t^2 \right)\frac{\nu}{t+\alpha}
    +\tilde{\eta}_t^2Z_2
    \leq
    \frac{\nu}{t+1+\alpha}.
\end{align}
We transform the condition in~\eqref{eq:lastBeforeInd} through the set of following algebraic steps to an inequality condition on a convex function:
\begin{align}\label{eq:longtrasform}
    &
     \left(-\frac{\tilde{\mu}}{\tilde{\eta}_t^2}+\frac{Z_1\omega^2}{\tilde{\eta}_t} \right)\frac{\nu}{t+\alpha}
    +\frac{Z_2}{\tilde{\eta_t}}
    +\frac{\nu}{t+\alpha}\frac{1}{\tilde{\eta_t}^3}
    \leq\frac{\nu}{t+1+\alpha}\frac{1}{\tilde{\eta_t}^3}
    \nonumber \\&
    \Rightarrow
     \left(-\frac{\tilde{\mu}}{\tilde{\eta}_t}+Z_1\omega^2 \right)\frac{\nu}{t+\alpha}\frac{1}{\tilde{\eta}_t}
    +\frac{Z_2}{\tilde{\eta_t}}
    +\left(\frac{\nu}{t+\alpha}-\frac{\nu}{t+1+\alpha}\right)\frac{1}{\tilde{\eta_t}^3}
    \leq0
    \nonumber \\&
    \Rightarrow
     \left(-\frac{\tilde{\mu}}{\tilde{\eta}_t}+Z_1\omega^2 \right)\frac{\nu}{\tilde{\gamma}}
    +\frac{Z_2}{\tilde{\eta_t}}
    +\left(\frac{\nu}{t+\alpha}-\frac{\nu}{t+1+\alpha}\right)\frac{(t+\alpha)^3}{\tilde{\gamma}^3}
    \leq0
    \nonumber \\&
    \Rightarrow
     \left(-\frac{\tilde{\mu}}{\tilde{\eta}_t}+Z_1\omega^2 \right)\frac{\nu}{\tilde{\gamma}}
    +\frac{Z_2}{\tilde{\eta_t}}
    +\frac{\nu}{(t+\alpha)(t+\alpha+1)}\frac{(t+\alpha)^3}{\tilde{\gamma}^3}
    \leq0
    \nonumber \\&
    \Rightarrow
     \left(-\frac{\tilde{\mu}}{\tilde{\eta}_t}+Z_1\omega^2 \right)\frac{\nu}{\tilde{\gamma}}
    +\frac{Z_2}{\tilde{\eta_t}}
    +\frac{\nu}{t+\alpha+1}\frac{(t+\alpha)^2}{\tilde{\gamma}^3}
    \leq0
    \nonumber \\&
    \Rightarrow
      \tilde{\gamma}^2\left(-\frac{\tilde{\mu}}{\tilde{\eta}_t}+Z_1\omega^2 \right)\nu
    +\frac{Z_2}{\tilde{\eta_t}} \tilde{\gamma}^3
    +\frac{(t+\alpha)^2}{t+\alpha+1}\nu
    \leq0
    \nonumber \\&
    \Rightarrow
      \tilde{\gamma}^2\left(-\frac{\tilde{\mu}}{\tilde{\eta}_t}+Z_1\omega^2 \right)\nu
    +\frac{Z_2}{\tilde{\eta_t}} \tilde{\gamma}^3
    +\left(\frac{(t+\alpha+1)(t+\alpha-1)}{t+\alpha+1}\nu+\frac{\nu}{t+\alpha+1}\right)
    \leq 0,
\end{align}
where the last condition in~\eqref{eq:longtrasform} can be written as:
\begin{align}\label{eq:finTransform}
    \tilde{\gamma}^2\left(-\frac{\tilde{\mu}}{\tilde{\eta}_t}+Z_1\omega^2\right)\nu
    +\frac{Z_2}{\tilde{\eta}_t}\tilde{\gamma}^3
    +\nu[t+\alpha-1]
    +\frac{\nu}{t+1+\alpha}
    \leq 0.
\end{align}
Since the above condition needs to be satisfied $\forall t\geq 0$ and the expression on the left hand  side of the inequality is a convex function with respect to $t$  ($1/\eta_t$ is linear in $t$ and $\frac{1}{t+1+\alpha}$ is convex),
 it is sufficient to satisfy this condition for $t\to\infty $ and $t=0$. To obtain these limits, we first express~\eqref{eq:finTransform} as follows:
 \begin{align}\label{eq:fin2}
    \tilde{\gamma}^2\left(-\frac{\tilde{\mu}}{\tilde{\gamma}}(t+\alpha)+Z_1\omega^2 \right)\nu
    +Z_2 \tilde{\gamma}^2(t+\alpha)
    +\nu[t+\alpha-1]
    +\frac{\nu}{t+1+\alpha}
    \leq 0.
 \end{align}
 Upon $t\to\infty$ considering the dominant terms yields
 \begin{align}\label{eq:condinf}
     &-\tilde{\gamma}\tilde{\mu}\nu t
    +Z_2 \tilde{\gamma}^2t
    +\nu t
    \leq0
    \nonumber \\&
    \Rightarrow
    \left[1-\tilde{\gamma}\tilde{\mu}\right]\nu t
    +Z_2 \tilde{\gamma}^2 t
    \leq 0.
 \end{align}
To satisfy~\eqref{eq:condinf}, the necessary condition is given by:
\begin{equation}
    \tilde{\mu}\tilde{\gamma}-1>0,
\end{equation}
 \begin{align} \label{eq:nu1}
     \nu \geq \frac{\tilde{\gamma}^2Z_2}{\tilde{\mu}\tilde{\gamma}-1}.
 \end{align}
 Also, upon $t\rightarrow 0$, from~\eqref{eq:fin2} we have
 \begin{align}
    &\left(-\tilde{\mu}\tilde{\gamma}\alpha+Z_1\omega^2\tilde{\gamma}^2 \right)\nu
    +Z_2 \tilde{\gamma}^2\alpha
    +\nu[\alpha-1]
    +\frac{\nu}{1+\alpha}\leq0
    \nonumber \\&
    \Rightarrow
    \nu\left(\alpha(\tilde{\mu}\tilde{\gamma}-1)+\frac{\alpha}{1+\alpha}-Z_1\omega^2\tilde{\gamma}^2\right)
    \geq
  \tilde{\gamma}^2 Z_2\alpha,
 \end{align}
which implies the following conditions
\begin{align} \label{eq:omega_cond}
    \omega
    <
    \frac{1}{\tilde{\gamma}}\sqrt{\alpha\frac{\tilde{\mu}\tilde{\gamma}-1+\frac{1}{1+\alpha}}{Z_1}},
\end{align}
and
\begin{align} \label{eq:nu2}
    \nu\geq\frac{Z_2\alpha}{Z_1\left(\omega_{\max}^2-\omega^2\right)}.
\end{align}
Combining~\eqref{eq:nu1} and~\eqref{eq:nu2}, when $\omega$ satisfies~\eqref{eq:omega_cond}
and
\begin{align}
    \nu \geq Z_2\max\{\frac{\beta^2\gamma^2}{\mu\gamma-1},
    \frac{\alpha}{Z_1\left(\omega_{\max}^2-\omega^2\right)}\},
\end{align}
completes the induction and thus the proof.

\end{proof}

\section{Proof of Lemma \ref{lem:consensus-error}} \label{app:consensus-error}

\begin{lemma} \label{lem:consensus-error}
After performing $\Gamma^{(t)}_{{c}}$ rounds of consensus in cluster $\mathcal{S}_c$ with the consensus matrix $\mathbf{V}_{{c}}$, the consensus error $\mathbf e_i^{(t)}$ satisfies
\begin{equation} 
   \Vert \mathbf e_i^{(t)} \Vert  \hspace{-.5mm}  \leq (\lambda_{{c}})^{\Gamma^{(t)}_{{c}}}
\sqrt{s_c}\underbrace{\max_{j,j'\in\mathcal S_c}\Vert\tilde{\mathbf w}_j^{(t)}-\tilde{\mathbf w}_{j'}^{(t)}\Vert}_{\triangleq \Upsilon^{(t)}_{{c}}},
~ \forall i\in \mathcal{S}_c,
\end{equation}
where {$\lambda_{{c}}=\rho \big(\mathbf{V}_{{c}}-\frac{\textbf{1} \textbf{1}^\top}{s_c} \big)$.}
\end{lemma}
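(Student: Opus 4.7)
The plan is to cast the consensus iterations in matrix form and exploit the spectral properties of $\mathbf{V}_c$ guaranteed by Assumption~\ref{assump:cons}. First I would collect the $s_c$ intermediate local models into the stacked matrix $\widetilde{\mathbf{W}}^{(t)}_c\in\mathbb{R}^{s_c\times M}$, so that after $\Gamma^{(t)}_c$ rounds the post-consensus matrix is $\mathbf{W}^{(t)}_c=(\mathbf{V}_c)^{\Gamma^{(t)}_c}\widetilde{\mathbf{W}}^{(t)}_c$, and I would introduce the ``average-replicated'' matrix $\overline{\mathbf{W}}^{(t)}_c=\tfrac{1}{s_c}\mathbf{1}\mathbf{1}^\top\widetilde{\mathbf{W}}^{(t)}_c$. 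The $i$-th row of $\mathbf{E}^{(t)}_c\triangleq \mathbf{W}^{(t)}_c-\overline{\mathbf{W}}^{(t)}_c$ is then exactly $\mathbf{e}_i^{(t)}$.

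Next I would show the key algebraic identity $\mathbf{E}^{(t)}_c=\bigl[(\mathbf{V}_c)^{\Gamma^{(t)}_c}-\tfrac{1}{s_c}\mathbf{1}\mathbf{1}^\top\bigr]\bigl[\widetilde{\mathbf{W}}^{(t)}_c-\overline{\mathbf{W}}^{(t)}_c\bigr]$. This uses that $\mathbf{V}_c$ is doubly stochastic (rows from Assumption~\ref{assump:cons}(ii), columns from symmetry in (iii)), so $\tfrac{1}{s_c}\mathbf{1}\mathbf{1}^\top$ is an idempotent projector commuting with every power of $\mathbf{V}_c$ and satisfying $\mathbf{1}^\top\mathbf{W}^{(t)}_c=\mathbf{1}^\top\widetilde{\mathbf{W}}^{(t)}_c$. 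With this identity in hand, $\|\mathbf{e}_i^{(t)}\|^2\leq \mathrm{trace}\bigl((\mathbf{E}^{(t)}_c)^\top\mathbf{E}^{(t)}_c\bigr)$ reduces, via the spectral bound on the symmetric matrix $\mathbf{V}_c-\tfrac{1}{s_c}\mathbf{1}\mathbf{1}^\top$, to
\begin{equation*}
\|\mathbf{e}_i^{(t)}\|^2\leq (\lambda_c)^{2\Gamma^{(t)}_c}\sum_{j=1}^{s_c}\bigl\|\tilde{\mathbf{w}}_j^{(t)}-\bar{\mathbf{w}}_c^{(t)}\bigr\|^2.
\end{equation*}

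Finally, to convert the deviation-from-mean into a pairwise maximum I would use the standard identity $\sum_j\|\tilde{\mathbf{w}}_j^{(t)}-\bar{\mathbf{w}}_c^{(t)}\|^2=\tfrac{1}{2s_c}\sum_{j,j'}\|\tilde{\mathbf{w}}_j^{(t)}-\tilde{\mathbf{w}}_{j'}^{(t)}\|^2$, followed by the crude bound $\sum_{j,j'}\|\cdot\|^2\leq s_c^2\max_{j,j'}\|\cdot\|^2$. Combining these yields $\|\mathbf{e}_i^{(t)}\|^2\leq (\lambda_c)^{2\Gamma^{(t)}_c}\,s_c\,(\Upsilon^{(t)}_c)^2$, and taking square roots gives the claimed bound.

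The main obstacle I anticipate is the matrix identity step: one must be careful that $\mathbf{1}^\top\mathbf{E}^{(t)}_c=\mathbf{0}$, so that $\mathbf{E}^{(t)}_c=[\mathbf{I}-\tfrac{1}{s_c}\mathbf{1}\mathbf{1}^\top]\mathbf{E}^{(t)}_c$, and also that $[(\mathbf{V}_c)^{\Gamma^{(t)}_c}-\tfrac{1}{s_c}\mathbf{1}\mathbf{1}^\top]^2$ has spectral radius exactly $(\lambda_c)^{2\Gamma^{(t)}_c}$; this relies crucially on the symmetry of $\mathbf{V}_c$ (so that the averaging projector and $\mathbf{V}_c$ share an orthonormal eigenbasis) together with condition~(iv) of Assumption~\ref{assump:cons}. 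Once these ingredients are in place, the remaining manipulations are routine linear-algebraic inequalities.
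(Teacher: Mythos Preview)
Your proposal is correct and follows essentially the same route as the paper's proof: the same stacked-matrix formulation, the same key identity $\mathbf{E}^{(t)}_c=\bigl[(\mathbf{V}_c)^{\Gamma^{(t)}_c}-\tfrac{1}{s_c}\mathbf{1}\mathbf{1}^\top\bigr]\bigl[\widetilde{\mathbf{W}}^{(t)}_c-\overline{\mathbf{W}}^{(t)}_c\bigr]$, the same trace/spectral-radius bound, and the same variance-to-pairwise-max conversion. The only cosmetic difference is that you invoke the exact identity $\sum_j\|\tilde{\mathbf w}_j-\bar{\mathbf w}_c\|^2=\tfrac{1}{2s_c}\sum_{j,j'}\|\tilde{\mathbf w}_j-\tilde{\mathbf w}_{j'}\|^2$ whereas the paper uses the looser inequality with $\tfrac{1}{s_c}$; both lead to the stated bound.
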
 
\begin{proof}
    The evolution of the devices' parameters can be described by~\eqref{eq:consensus} as:
  \begin{equation}
      \mathbf{W}^{(t)}_{{c}}= \left(\mathbf{V}_{{c}}\right)^{\Gamma^{(t)}_{{c}}} \widetilde{\mathbf{W}}^{(t)}_{{c}}, ~t\in\mathcal T_k,
  \end{equation}
  where 
  \begin{align}
      \mathbf{W}^{(t)}_{{c}}=\left[\mathbf{w}^{(t)}_{{c_1}},\mathbf{w}^{(t)}_{{c_2}},\dots,\mathbf{w}^{(t)}_{{s_c}}\right]^\top
  \end{align}
  and 
  \begin{align}
      \widetilde{\mathbf{W}}^{(t)}_{{c}}=\left[\tilde{\mathbf{w}}^{(t)}_{{c_1}},\tilde{\mathbf{w}}^{(t)}_{{c_2}},\dots,\tilde{\mathbf{w}}^{(t)}_{{s_c}}\right]^\top.
  \end{align}
  
  Let matrix $\overline{\mathbf{W}}^{(t)}_{{c}}$ denote be the matrix with rows given by the average model parameters across the cluster, it can be represented as:
  \begin{align}
      \overline{\mathbf{W}}^{(t)}_{{c}}=\frac{\mathbf 1_{s_c} \mathbf 1_{s_c}^\top\widetilde{\mathbf{W}}^{(t)}_{{c}}}{s_c}.
  \end{align}
  We then define $\mathbf E^{(t)}_{{c}}$ as 
  \begin{align}
      \mathbf E^{(t)}_{{c}}= {\mathbf{W}}^{(t)}_{{c}}-\overline{\mathbf{W}}^{(t)}_{{c}}
      =[\left(\mathbf{V}_{{c}}\right)^{\Gamma^{(t)}_{{c}}}-\mathbf 1\mathbf 1^\top/s_c] [\widetilde{\mathbf{W}}^{(t)}_{{c}}-\overline{\mathbf{W}}^{(t)}_{{c}}]
      ,
  \end{align}
  so that $[\mathbf E^{(t)}_{{c}}]_{i,:}=\mathbf e_{i}^{(t)}$, where $[\mathbf E^{(t)}_{{c}}]_{i,:}$ is the $i$th row of $\mathbf E^{(t)}_{{c}}$.
  
   Therefore, using Assumption \ref{assump:cons}, we can bound the consensus error as
  \begin{align} \label{eq:consensus-bound-1}
      &\Vert\mathbf e_{i}^{(t)}\Vert^2\leq
      \mathrm{trace}((\mathbf E^{(t)}_{{c}})^{\top}\mathbf E^{(t)}_{{c}})
      \\&\nonumber
      =
      \mathrm{trace}\Big(
      [\widetilde{\mathbf{W}}^{(t)}_{{c}}{-}\overline{\mathbf{W}}^{(t)}_{{c}}]^\top
      [\left(\mathbf{V}_{{c}}\right)^{\Gamma^{(t)}_{{c}}}{-}\mathbf 1\mathbf 1^\top/s_c]^2 [\widetilde{\mathbf{W}}^{(t)}_{{c}}{-}\overline{\mathbf{W}}^{(t)}_{{c}}]
        \Big)
        \\&
        \leq (\lambda_{{c}})^{2\Gamma^{(t)}_{{c}}}
        \sum_{j=1}^{s_c}\Vert\tilde{\mathbf w}_j^{(t)}-\bar{\mathbf w}_c^{(t)}\Vert^2
        \nonumber
        \\&
        \leq (\lambda_{{c}})^{2\Gamma^{(t)}_{{c}}}
        \frac{1}{s_c}\sum_{j,j'=1}^{s_c}\Vert\tilde{\mathbf w}_j^{(t)}-\tilde{\mathbf w}_{j'}^{(t)}\Vert^2
        \nonumber
        \\&
        \nonumber
        \leq (\lambda_{{c}})^{2\Gamma^{(t)}_{{c}}}
        s_c\max_{j,j'\in\mathcal S_c}\Vert\tilde{\mathbf w}_j^{(t)}-\tilde{\mathbf w}_{j'}^{(t)}\Vert^2.
    \end{align}
  The result of the Lemma directly follows.
\end{proof}
\section{Proof of Lemma \ref{lem1}} \label{app:PL-bound}
 
\begin{lemma} \label{lem1}
Under Assumption \ref{beta}, we have
    \begin{align*}
        &-\frac{\tilde{\eta}_{t}}{\beta}\nabla F(\bar{\mathbf w}^{(t)})^\top \sum\limits_{c=1}^N\varrho_c\frac{1}{s_c}\sum\limits_{j\in\mathcal{S}_c} \nabla F_j(\mathbf w_j^{(t)})
        \leq
        -\tilde{\mu}\tilde{\eta}_{t}(F(\bar{\mathbf w}^{(t)})-F(\mathbf w^*))
        \nonumber \\&
        -\frac{\tilde{\eta}_{t}}{2\beta}\Big\Vert\sum\limits_{c=1}^N\varrho_c\frac{1}{s_c}\sum\limits_{j\in\mathcal{S}_c} \nabla F_j(\mathbf w_j^{(t)})\Big\Vert^2
        +\frac{\tilde{\eta}_{t}\beta}{2}\sum\limits_{c=1}^N\varrho_c \frac{1}{s_c}\sum\limits_{j\in\mathcal{S}_c}\Big\Vert\bar{\mathbf w}^{(t)}-\mathbf w_j^{(t)}\Big\Vert^2.
    \end{align*}
\end{lemma}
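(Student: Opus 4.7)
The plan is to apply the standard polarization identity $-2 a^\top b = \lVert a-b\rVert^2 - \lVert a\rVert^2 - \lVert b\rVert^2$ to the inner product, and then bound each resulting term using strong convexity, the definition of the norm, and $\beta$-smoothness respectively. Let me denote $\mathbf g = \sum_c \varrho_c \frac{1}{s_c}\sum_{j\in\mathcal S_c}\nabla F_j(\mathbf w_j^{(t)})$ so that the left-hand side becomes $-\frac{\tilde\eta_t}{\beta}\nabla F(\bar{\mathbf w}^{(t)})^\top \mathbf g$. The identity gives
\[
-\frac{\tilde\eta_t}{\beta}\nabla F(\bar{\mathbf w}^{(t)})^\top \mathbf g = -\frac{\tilde\eta_t}{2\beta}\lVert\nabla F(\bar{\mathbf w}^{(t)})\rVert^2 -\frac{\tilde\eta_t}{2\beta}\lVert\mathbf g\rVert^2 + \frac{\tilde\eta_t}{2\beta}\lVert\nabla F(\bar{\mathbf w}^{(t)})-\mathbf g\rVert^2.
\]
This already reproduces the $-\frac{\tilde\eta_t}{2\beta}\lVert\mathbf g\rVert^2$ term that appears on the right-hand side, so the remaining work is to bound the other two terms.

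For the first term I would invoke $\mu$-strong convexity of $F$ in its standard Polyak--\L ojasiewicz form $\lVert \nabla F(\mathbf w)\rVert^2 \geq 2\mu\bigl(F(\mathbf w)-F(\mathbf w^*)\bigr)$, which follows from Assumption~\ref{Assump:SmoothStrong}. This yields $-\frac{\tilde\eta_t}{2\beta}\lVert\nabla F(\bar{\mathbf w}^{(t)})\rVert^2 \leq -\frac{\mu}{\beta}\tilde\eta_t\bigl(F(\bar{\mathbf w}^{(t)})-F(\mathbf w^*)\bigr) = -\tilde\mu\tilde\eta_t\bigl(F(\bar{\mathbf w}^{(t)})-F(\mathbf w^*)\bigr)$, matching the first term on the right-hand side.

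For the third term, the key observation is that $\nabla F(\bar{\mathbf w}^{(t)}) = \sum_c \varrho_c \frac{1}{s_c}\sum_{j\in\mathcal S_c}\nabla F_j(\bar{\mathbf w}^{(t)})$ by linearity of $\nabla$ applied to the definition $F = \sum_c\varrho_c\hat F_c$, so
\[
\nabla F(\bar{\mathbf w}^{(t)}) - \mathbf g = \sum_c \varrho_c \frac{1}{s_c}\sum_{j\in\mathcal S_c}\bigl(\nabla F_j(\bar{\mathbf w}^{(t)}) - \nabla F_j(\mathbf w_j^{(t)})\bigr).
\]
I would then apply Jensen's inequality (convexity of $\lVert \cdot\rVert^2$ together with the fact that $\{\varrho_c\cdot \tfrac{1}{s_c}\}$ is a probability distribution over devices, since $\sum_c\varrho_c=1$ and $\sum_{j\in\mathcal S_c}\frac{1}{s_c}=1$) to pull the norm inside both sums, followed by the $\beta$-smoothness of each $F_j$, to obtain $\lVert \nabla F(\bar{\mathbf w}^{(t)}) - \mathbf g\rVert^2 \leq \beta^2 \sum_c \varrho_c \frac{1}{s_c}\sum_{j\in\mathcal S_c}\lVert\bar{\mathbf w}^{(t)}-\mathbf w_j^{(t)}\rVert^2$. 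Multiplying by $\frac{\tilde\eta_t}{2\beta}$ yields exactly the last term on the right-hand side. Collecting all three bounds completes the proof. No step poses a real obstacle, since this is essentially a rearrangement exercise; the only care needed is in justifying the two-sum Jensen bound, which is immediate once the convex-combination structure of the weights is recognized.
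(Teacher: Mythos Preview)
Your proposal is correct and follows essentially the same approach as the paper's proof: the polarization identity, then the PL inequality from strong convexity on $\lVert\nabla F(\bar{\mathbf w}^{(t)})\rVert^2$, and Jensen plus $\beta$-smoothness on the difference term. The paper presents the steps in the same order with the same justifications.
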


\begin{proof}  Since $-2 \mathbf a^\top \mathbf b = -\Vert \mathbf a\Vert^2-\Vert \mathbf b \Vert^2 + \Vert \mathbf a- \mathbf b\Vert^2$ holds for any two vectors $\mathbf a$ and $\mathbf b$ with real elements, we have
    \begin{align} \label{14}
        &-\frac{\tilde{\eta}_{t}}{\beta}\nabla F(\bar{\mathbf w}^{(t)})^\top
        \sum\limits_{c=1}^N\varrho_c\frac{1}{s_c}\sum\limits_{j\in\mathcal{S}_c} \nabla F_j(\mathbf w_j^{(t)})
        \nonumber \\&
        =\frac{\tilde{\eta}_{t}}{2\beta}\bigg[-\Big\Vert\nabla F(\bar{\mathbf w}^{(t)})\Big\Vert^2
        -\Big\Vert\sum\limits_{c=1}^N\varrho_c\frac{1}{s_c}\sum\limits_{j\in\mathcal{S}_c} \nabla F_j(\mathbf w_j^{(t)})\Big\Vert^2
        \nonumber \\&
        +\Big\Vert\nabla F(\bar{\mathbf w}^{(t)})-\sum\limits_{c=1}^N\varrho_c\frac{1}{s_c}\sum\limits_{j\in\mathcal{S}_c} \nabla F_j(\mathbf w_j^{(t)})\Big\Vert^2\bigg].
    \end{align}
    Since $\Vert\cdot\Vert^2$ is a convex function, using Jenson's inequality, we get: $\Vert \sum_{i=1}^j {c_j} \mathbf a_j  \Vert^2 \leq \sum_{i=1}^j {c_j} \Vert  \mathbf a_j  \Vert^2 $, where $\sum_{i=1}^j {c_j} =1$. Using this fact in~\eqref{14} yields
    \begin{align} \label{20}
        &-\frac{\tilde{\eta}_{t}}{\beta}\nabla F(\bar{\mathbf w}^{(t)})^\top \sum\limits_{c=1}^N\varrho_c\frac{1}{s_c}\sum\limits_{j\in\mathcal{S}_c} \nabla F_j(\mathbf w_j^{(t)})
        \nonumber \\&
        \leq\frac{\tilde{\eta}_{t}}{2\beta}\bigg[-\Big\Vert\nabla F(\bar{\mathbf w}^{(t)})\Big\Vert^2
        -\Big\Vert\sum\limits_{c=1}^N\varrho_c\frac{1}{s_c}\sum\limits_{j\in\mathcal{S}_c} \nabla F_j(\mathbf w_j^{(t)})\Big\Vert^2
        \nonumber \\&
        +\sum\limits_{c=1}^N\varrho_c\frac{1}{s_c}\sum\limits_{j\in\mathcal{S}_c}\Big\Vert\nabla F_j(\bar{\mathbf w}^{(t)})-\nabla F_j(\mathbf w_j^{(t)})\Big\Vert^2\bigg].
    \end{align}
    
   Using $\mu$-strong convexity of $F(.)$, we get: $\Big\Vert\nabla F(\bar{\mathbf w}^{(t)})\Big\Vert^2 \geq 2\tilde{\mu}\beta (F(\bar{\mathbf w}^{(t)})-F(\mathbf w^*))$. Also, using $\beta$-smoothness of $F_j(\cdot)$ we get $\Big\Vert\nabla F_j(\bar{\mathbf w}^{(t)})-\nabla F_j(\mathbf w_j^{(t)})\Big\Vert^2 \leq \beta^2 \Vert\bar{\mathbf w}^{(t)}-\mathbf w_j^{(t)}\Big\Vert^2$, $\forall j$. Using these facts in~\eqref{20} yields:
    \begin{align}
        &-\frac{\tilde{\eta}_{t}}{\beta}\nabla F(\bar{\mathbf w}^{(t)})^\top \sum\limits_{c=1}^N\varrho_c\frac{1}{s_c}\sum\limits_{j\in\mathcal{S}_c} \nabla F_j(\mathbf w_j^{(t)})
        \leq
        -\tilde{\mu}\tilde{\eta}_{t}(F(\bar{\mathbf w}^{(t)})-F(\mathbf w^*))
        \nonumber \\&
        -\frac{\tilde{\eta}_{t}}{2\beta}\Big\Vert\sum\limits_{c=1}^N\varrho_c\frac{1}{s_c}\sum\limits_{j\in\mathcal{S}_c} \nabla F_j(\mathbf w_j^{(t)})\Big\Vert^2
        +\frac{\tilde{\eta}_{t}\beta}{2}\sum\limits_{c=1}^N\varrho_c \frac{1}{s_c}\sum\limits_{j\in\mathcal{S}_c}\Big\Vert\bar{\mathbf w}^{(t)}-\mathbf w_j^{(t)}\Big\Vert^2,
    \end{align}
    which concludes the proof.
\end{proof}

\section{Proof of Fact \ref{fact:1}} \label{app:fact1}
\begin{fact}\label{fact:1} For an arbitrary set of $n$ random variables $x_1,\cdots,x_n$, we have: 
 \begin{equation}
     \sqrt{\mathbb E\left[\left(\sum\limits_{i=1}^{n} x_i\right)^2\right]}\leq \sum\limits_{i=1}^{n} \sqrt{\mathbb E[x_i^2]}.
 \end{equation}
\end{fact}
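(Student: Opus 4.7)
\textbf{Proof plan for Fact \ref{fact:1}.} The plan is to recognize the stated inequality as the Minkowski (triangle) inequality for the $L^2(\Omega)$ norm $\|x\|_{L^2} \triangleq \sqrt{\mathbb{E}[x^2]}$, and to prove it by induction on $n$, reducing everything to a single Cauchy--Schwarz application in the base case.

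For the base case $n=2$, I would expand the square inside the expectation:
\begin{equation*}
\mathbb{E}\bigl[(x_1+x_2)^2\bigr] = \mathbb{E}[x_1^2] + 2\,\mathbb{E}[x_1 x_2] + \mathbb{E}[x_2^2].
\end{equation*}
Applying the Cauchy--Schwarz inequality to the cross term, $\mathbb{E}[x_1 x_2] \le \sqrt{\mathbb{E}[x_1^2]}\sqrt{\mathbb{E}[x_2^2]}$, the right-hand side is upper bounded by $\bigl(\sqrt{\mathbb{E}[x_1^2]} + \sqrt{\mathbb{E}[x_2^2]}\bigr)^2$. Taking square roots of both sides (legitimate since both sides are nonnegative) yields the claim for $n=2$.

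For the inductive step, assuming the claim holds for $n-1$, I would define $y = \sum_{i=1}^{n-1} x_i$ and apply the $n=2$ case to $y$ and $x_n$:
\begin{equation*}
\sqrt{\mathbb{E}\Bigl[\Bigl(\sum_{i=1}^{n} x_i\Bigr)^{2}\Bigr]} = \sqrt{\mathbb{E}[(y+x_n)^2]} \le \sqrt{\mathbb{E}[y^2]} + \sqrt{\mathbb{E}[x_n^2]},
\end{equation*}
and then invoke the induction hypothesis on $\sqrt{\mathbb{E}[y^2]}$ to conclude $\sqrt{\mathbb{E}[y^2]} \le \sum_{i=1}^{n-1}\sqrt{\mathbb{E}[x_i^2]}$, which combined gives the desired bound.

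There is essentially no obstacle here: the only nontrivial ingredient is Cauchy--Schwarz in the base case, and the induction is routine. If one preferred to avoid induction, the same result follows in a single line by viewing the $x_i$ as elements of the Hilbert space $L^2(\Omega)$ and applying the triangle inequality for the induced norm; I would mention this interpretation briefly but present the inductive proof for self-containedness.
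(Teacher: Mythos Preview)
Your proposal is correct and uses the same key ingredient as the paper, namely the Cauchy--Schwarz bound $\mathbb{E}[x_i x_j]\le\sqrt{\mathbb{E}[x_i^2]\,\mathbb{E}[x_j^2]}$ on the cross terms. The only structural difference is that the paper expands $\bigl(\sum_i x_i\bigr)^2$ for general $n$ in one shot, applies Cauchy--Schwarz to every cross term simultaneously, and recognizes the resulting expression as the perfect square $\bigl(\sum_i\sqrt{\mathbb{E}[x_i^2]}\bigr)^2$, whereas you reach the same conclusion by induction; both arguments are equally valid and of the same length.
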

\begin{proof} The proof can be carried out through the following set of algebraic manipulations:
    \begin{align}
        &\sqrt{\mathbb E\left[\left(\sum\limits_{i=1}^{n} x_i\right)^2\right]}
        =
        \sqrt{\sum\limits_{i=1}^{n} \mathbb E[x_i^2] +\sum\limits_{i=1}^{n}\sum\limits_{j=1,j\neq i}^{n}\mathbb E [x_i x_j]}
        \nonumber \\&
        \overset{(a)}{\leq}
        \sqrt{\sum\limits_{i=1}^{n} \mathbb E[x_i^2] +\sum\limits_{i=1}^{n}\sum\limits_{j=1,j\neq i}^{n}\sqrt{\mathbb E [x_i^2] \mathbb E[x_j^2]]}}
        =\sqrt{\Big(\sum\limits_{i=1}^{n} \sqrt{\mathbb E[x_i^2]}\Big)^2}
        =
        \sum\limits_{i=1}^{n} \sqrt{\mathbb E[x_i^2]},
    \end{align}
    where $(a)$ is due to the fact that $\mathbb E[XY] \leq \sqrt{\mathbb E[X^2]\mathbb E[ Y^2]}$ resulted from Cauchy-Schwarz inequality.
\end{proof}

\newpage

\section{Additional Experimental Results}
\label{app:experiments}
\subsection{Complimentary Experiments of the Main Text}
This section presents the plots from complimentary experiments mentioned in Sec.~\ref{sec:experiments}. We use an additional dataset, Fashion-MNIST (FMNIST), and fully connected neural networks (FCN) for these additional experiments. FMNIST (\url{https://github.com/zalandoresearch/fashion-mnist}) is a dataset of clothing articles consisting of a training set of 60,000 samples and a test set of 10,000 samples. Each sample is a 28x28 grayscale image, associated with a label from 10 classes.

In the following, we explain the relationship between the figures presented in this appendix and the results presented in the main text. Overall, we find that the results are qualitatively similar to what was observed for the SVM and MNIST cases:
\begin{itemize}
    \item Fig.~\ref{fig:mnist_poc_1_all} from main text is repeated in Fig.~\ref{fig:fmnist_poc_1_all} for FMNIST dataset using SVM, Fig.~\ref{fig:mnist_nn_poc_1_all} for MNIST dataset using FCN, and Fig.~\ref{fig:fmnist_nn_poc_1_all} for FMNIST dataset using FCN.
    
    \item Fig.~\ref{fig:mnist_poc_2_all} from main text is repeated in Fig.~\ref{fig:fmnist_poc_2_all} for FMNIST dataset using SVM, Fig.~\ref{fig:mnist_nn_poc_2_all} for MNIST dataset using FCN, and Fig.~\ref{fig:fmnist_nn_poc_2_all} for FMNIST dataset using FCN.
    
    \item Fig.~\ref{fig:poc_3_iid_1_gamma_1_lut_50} from main text is repeated in Fig.~\ref{fig:fmnist_poc_3_iid_1_gamma_1_lut_50} for FMNIST dataset using SVM, Fig.~\ref{fig:mnist_nn_poc_3_iid_1_gamma_1_lut_50} for MNIST dataset using FCN, and Fig.~\ref{fig:fmnist_nn_poc_3_iid_1_gamma_1_lut_50} for FMNIST dataset using FCN.
    
    \item Fig.~\ref{fig:resource_bar_0} from main text is repeated in Fig.~\ref{fig:fmnist_resource_bar_0} for FMNIST dataset using SVM, Fig.~\ref{fig:mnist_nn_resource_bar_0} for MNIST dataset using FCN, and Fig.~\ref{fig:fmnist_nn_resource_bar_0} for FMNIST dataset using FCN.
    
    \item Fig.~\ref{fig:behaviour_of_tau} from main text is repeated in Fig.~\ref{fig:fmnist_behaviour_of_tau} for FMNIST dataset using SVM.
\end{itemize}
Since FCN has a non-convex loss function, Algorithm~\ref{GT} is not directly applicable for the experiments in Fig.~\ref{fig:mnist_nn_resource_bar_0}\&\ref{fig:fmnist_nn_resource_bar_0}. As a result, in these cases, we skip the control steps in line 24-25. We instead use a fixed step size, using a constant $\phi$ value to calculate the $\Gamma$'s using \eqref{eq:consensusNum}. We are still able to obtain comparable reductions in total cost compared with the federated learning baselines.

\subsection{Extension to Other Federated Learning Methods}\label{fedProx_app}
Although we develop our algorithm based on federated learning with vanilla SGD local optimizer, our method can benefit other counterparts in literature. In particular, we perform some numerical experiments on FedProx~\cite{sahu2018convergence} to demonstrate the superiority of our semi-decentralized architecture. The performance improvement is due to the fact that, intuitively, conducting D2D communications via the method proposed by us reduces the local bias of the nodes' models to their local datasets. This benefits the convergence of federated learning methods via counteracting the effect of data heterogeneity across the nodes. The simulation results are provided in Fig.~\ref{fig:prox1_app} and~\ref{fig:prox2_app}.

\begin{figure}[t]
\includegraphics[width=1.0\columnwidth]{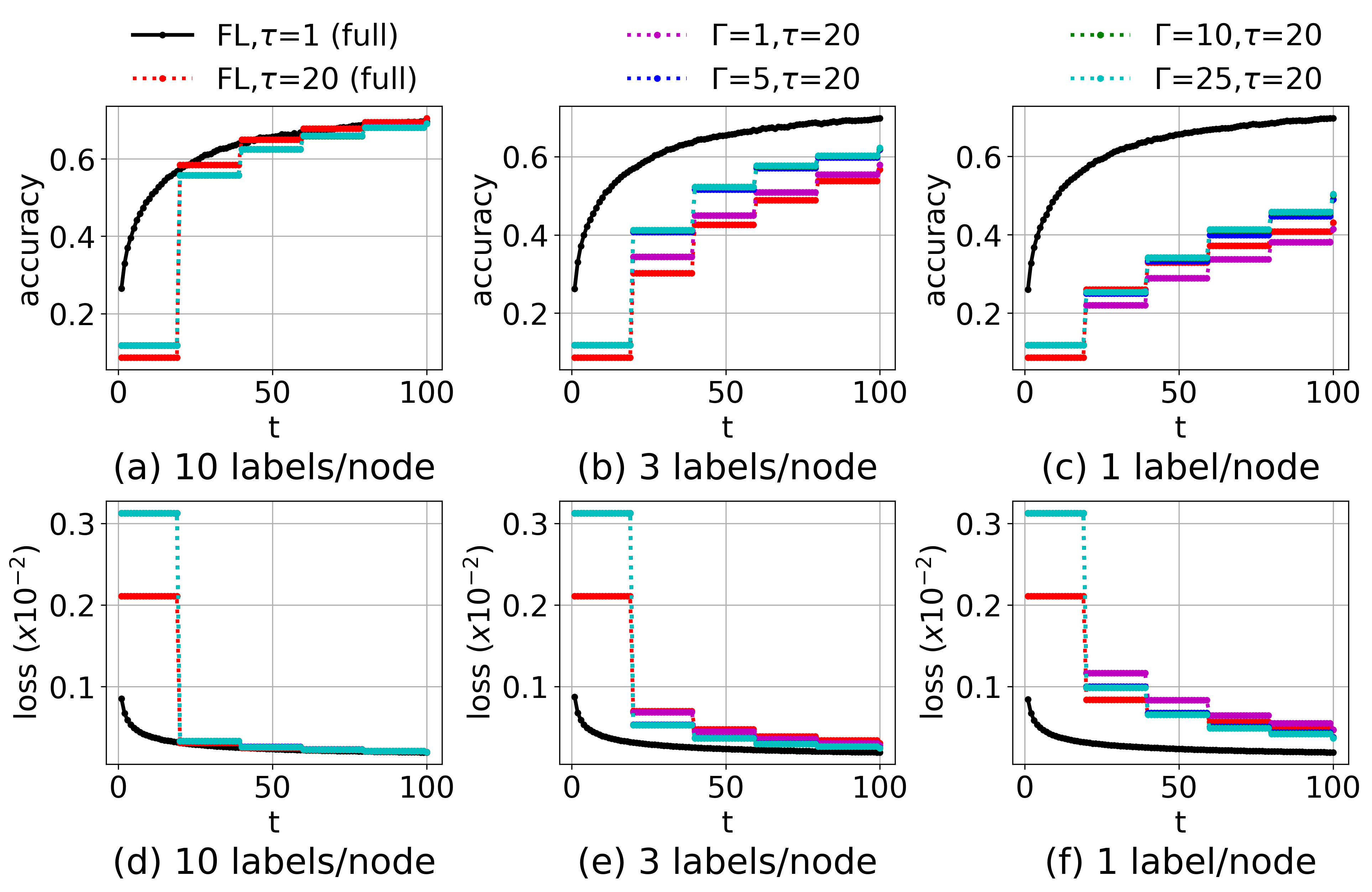}
\centering
\caption{Performance comparison between {\tt TT-HF} and baseline methods when varying the number of D2D consensus rounds ($\Gamma$). Under the same period of local model training ($\tau$), increasing $\Gamma$ results in a considerable improvement in the model accuracy/loss over time as compared to the current art~\cite{wang2019adaptive,Li} when data is non-i.i.d. (FMNIST, SVM)}
\label{fig:fmnist_poc_1_all}
\vspace{-5mm}
\end{figure} 

\begin{figure}[t]
\includegraphics[width=1.0\columnwidth]{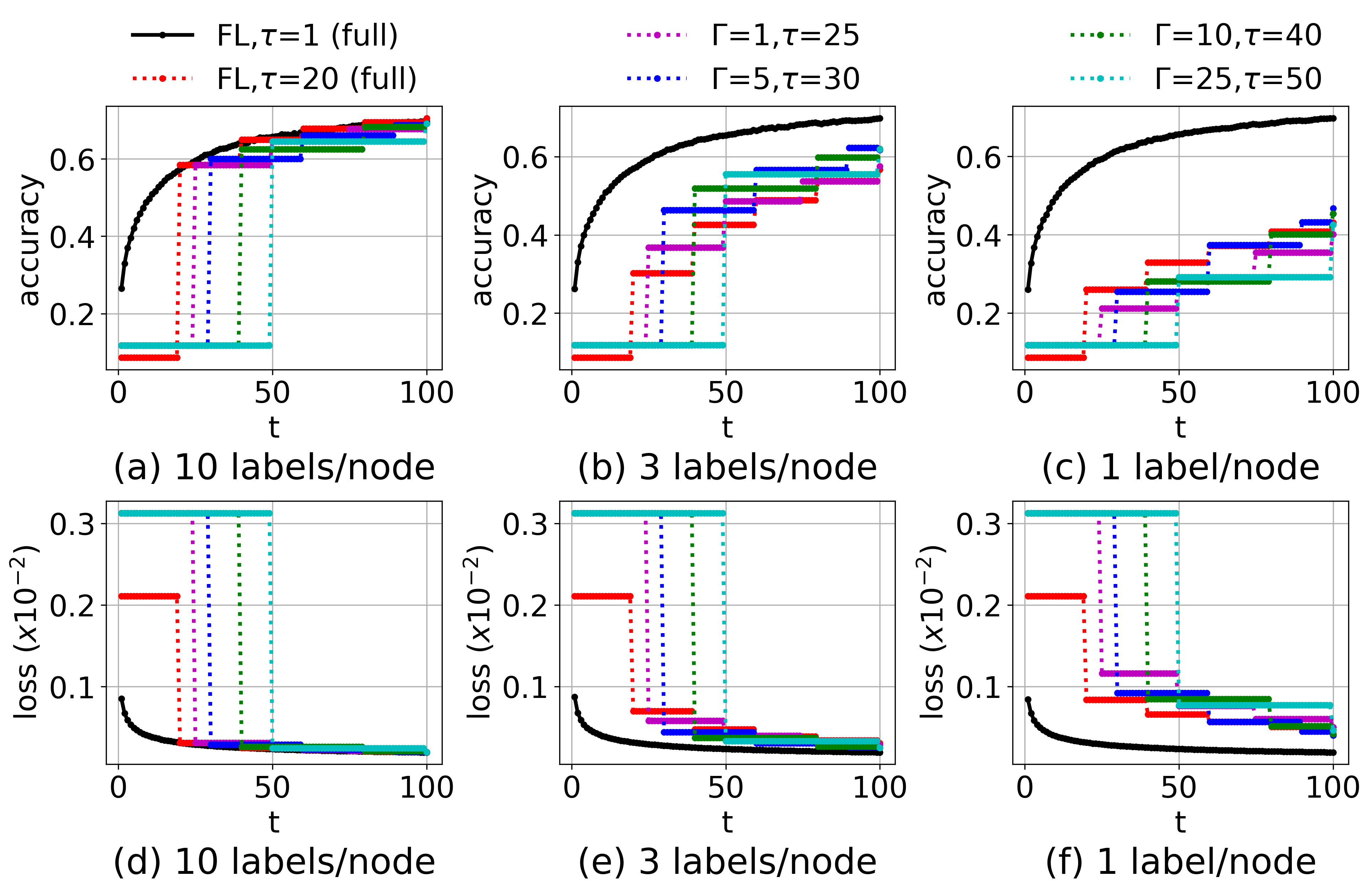}
\centering
\caption{Performance comparison between {\tt TT-HF} and baseline methods when varying the local model training interval ($\tau$) and the number of D2D consensus rounds ($\Gamma$). With a larger $\tau$, {\tt TT-HF} can still outperform the baseline federated learning~\cite{wang2019adaptive,Li} if $\Gamma$ is increased, i.e., local D2D communications can be used to offset the frequency of global aggregations. (FMNIST, SVM)}
\label{fig:fmnist_poc_2_all}
\vspace{-5mm}
\end{figure}

\begin{figure}[t]
\includegraphics[width=1.0\columnwidth]{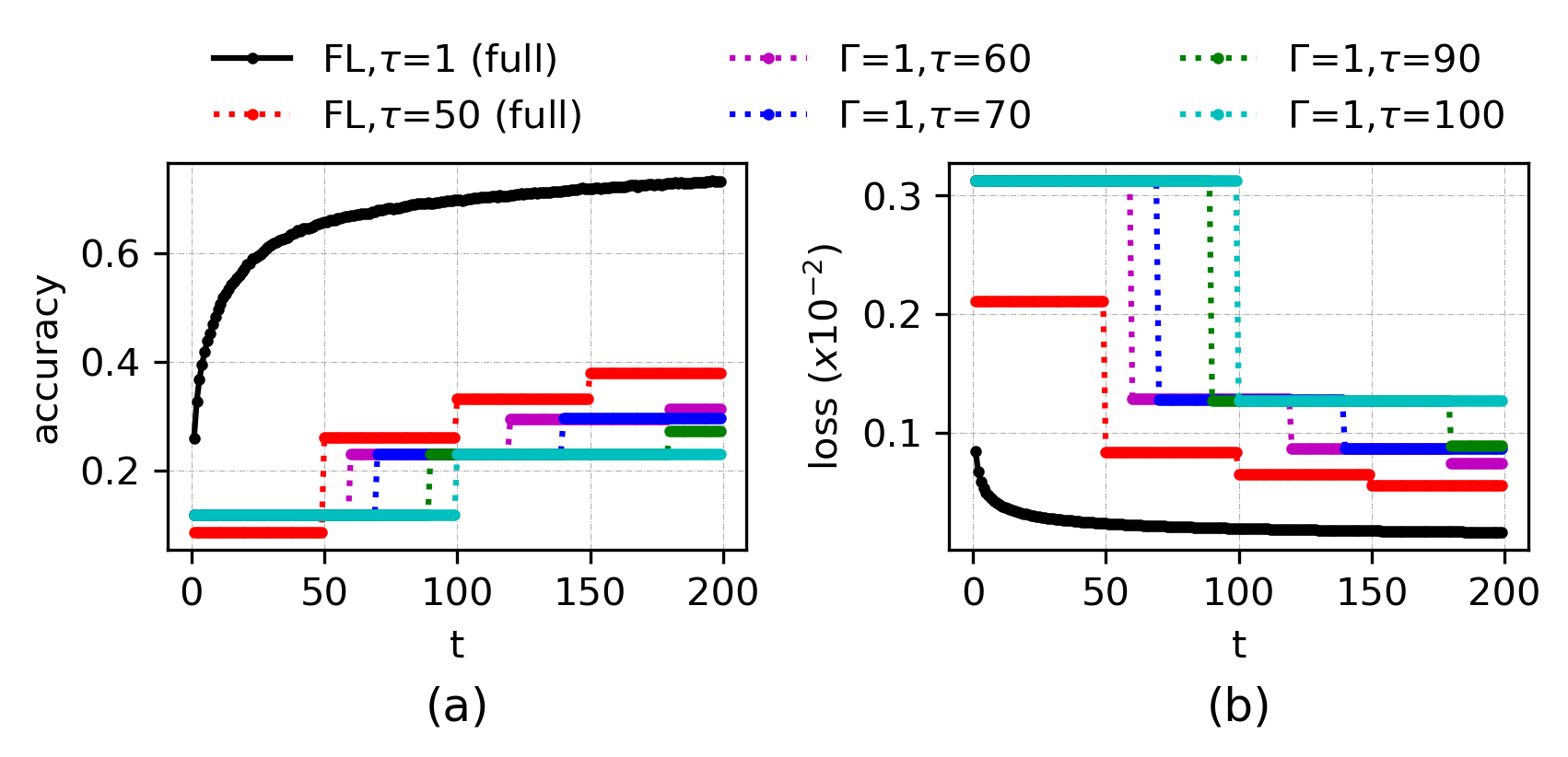}
\centering
\caption{Performance of {\tt TT-HF} in the extreme non-i.i.d. case for the setting in Fig.~\ref{fig:mnist_poc_2_all} when $\Gamma$ is small and the local model training interval length is increased substantially. {\tt TT-HF} exhibits poor convergence behavior when $\tau$ exceeds a certain value, due to model dispersion. (FMNIST, SVM)}
\label{fig:fmnist_poc_3_iid_1_gamma_1_lut_50}
\vspace{-5mm}
\end{figure}

\begin{figure}[t]
\hspace{-3mm}
\includegraphics[width=0.99\columnwidth]{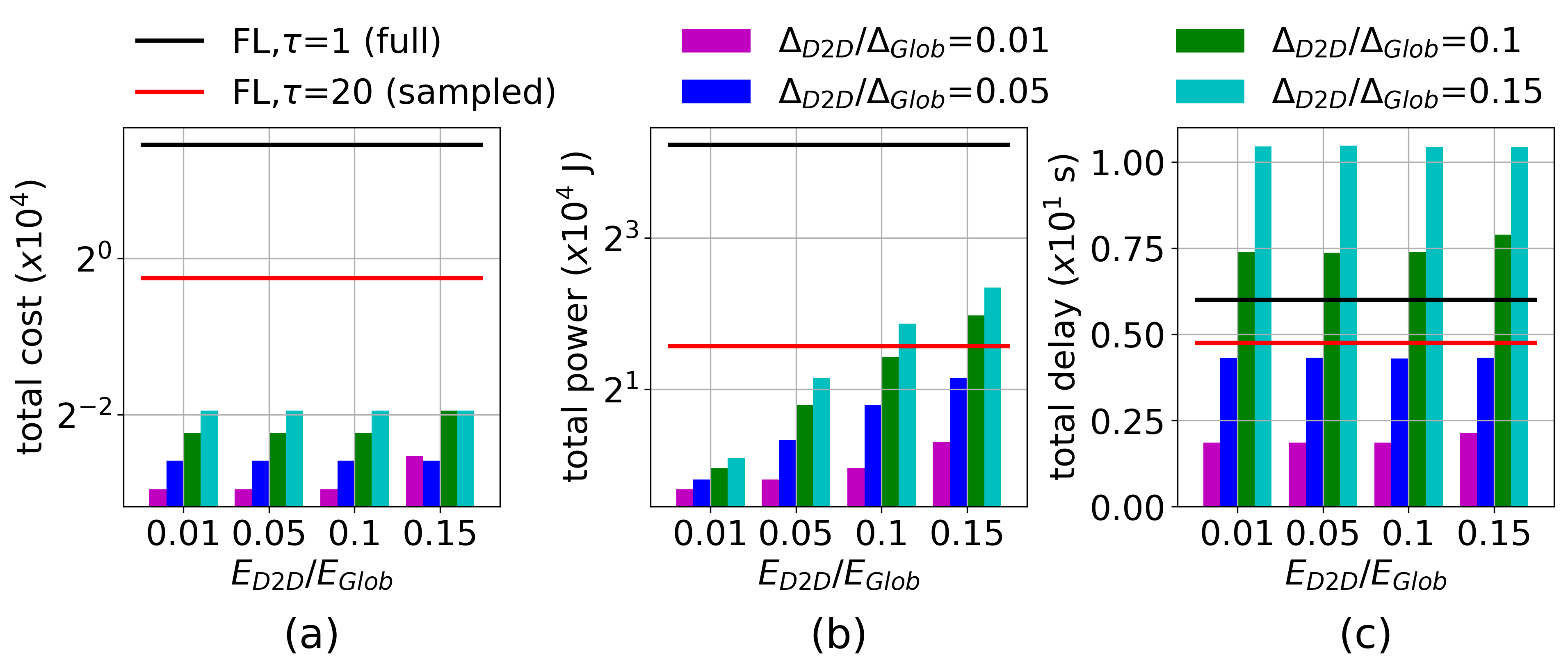}
\centering
\caption{Comparing total (a) cost, (b) power, and (c) delay metrics from the optimization objective in $(\bm{\mathcal{P}})$ achieved by {\tt TT-HF} versus baselines upon reaching $75\%$ of peak accuracy, for different configurations of delay and energy consumption. {\tt TT-HF} obtains a significantly lower total cost in (a). (b) and (c) demonstrate the region under which {\tt TT-HF} attains energy savings and delay gains. (FMNIST, SVM)}
\label{fig:fmnist_resource_bar_0}
\vspace{-5mm}
\end{figure}

\begin{figure}[t]
\vspace{4mm}
\hspace{-2.5mm}
\includegraphics[width=1\columnwidth]{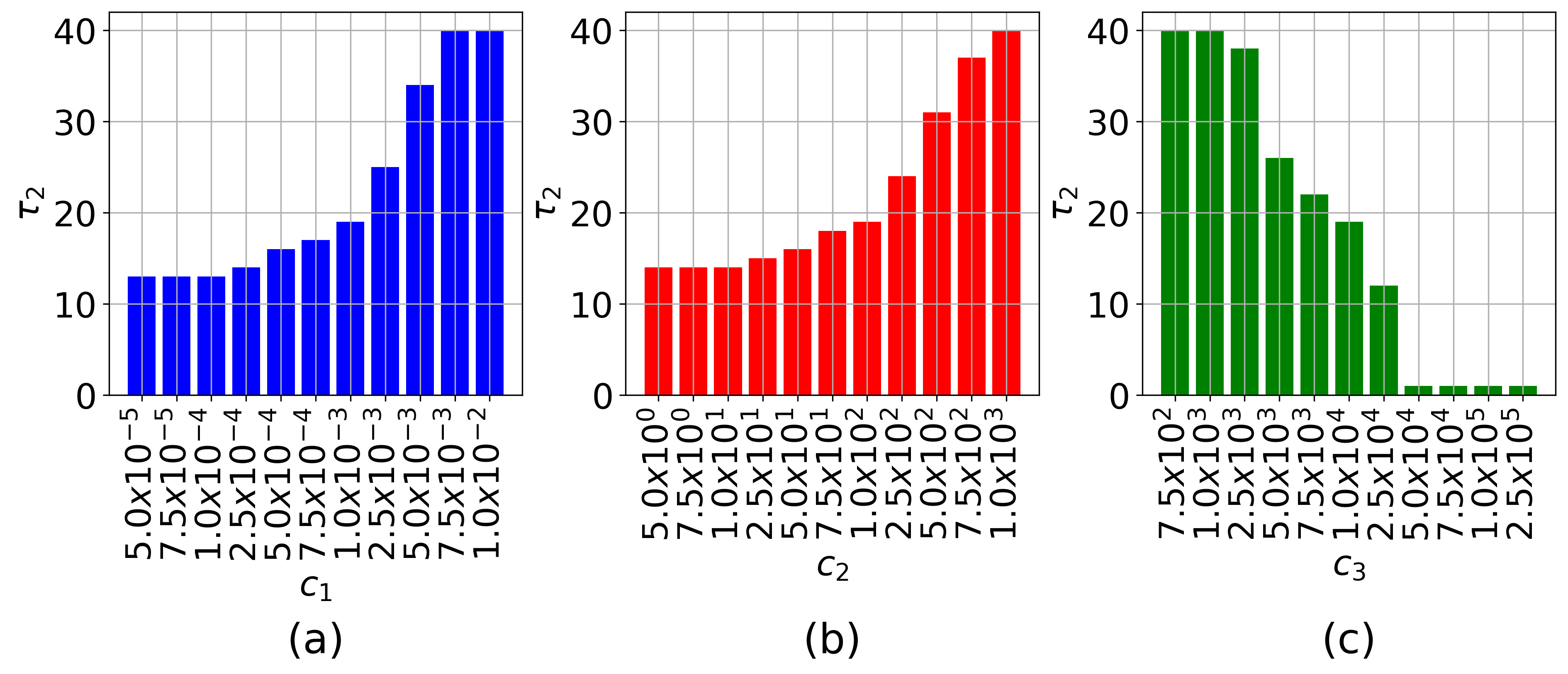}
\centering
\caption{Value of the second local model training interval obtained through $(\bm{\mathcal{P}})$ for different configurations of weighing coefficients $c_1, c_2, c_3$ (default $c_1 = 10^{-3}, c_2=10^2, c_3=10^4$). Higher weight on energy and delay (larger $c_1$ and $c_2$) prolongs the local training period, while higher weight on the global model loss (larger $c_3$) decreases the length, resulting in more rapid global aggregations. (FMNIST, SVM)
}
\label{fig:fmnist_behaviour_of_tau}
\vspace{-5mm}
\end{figure}

\begin{figure}[t]
\includegraphics[width=1.0\columnwidth]{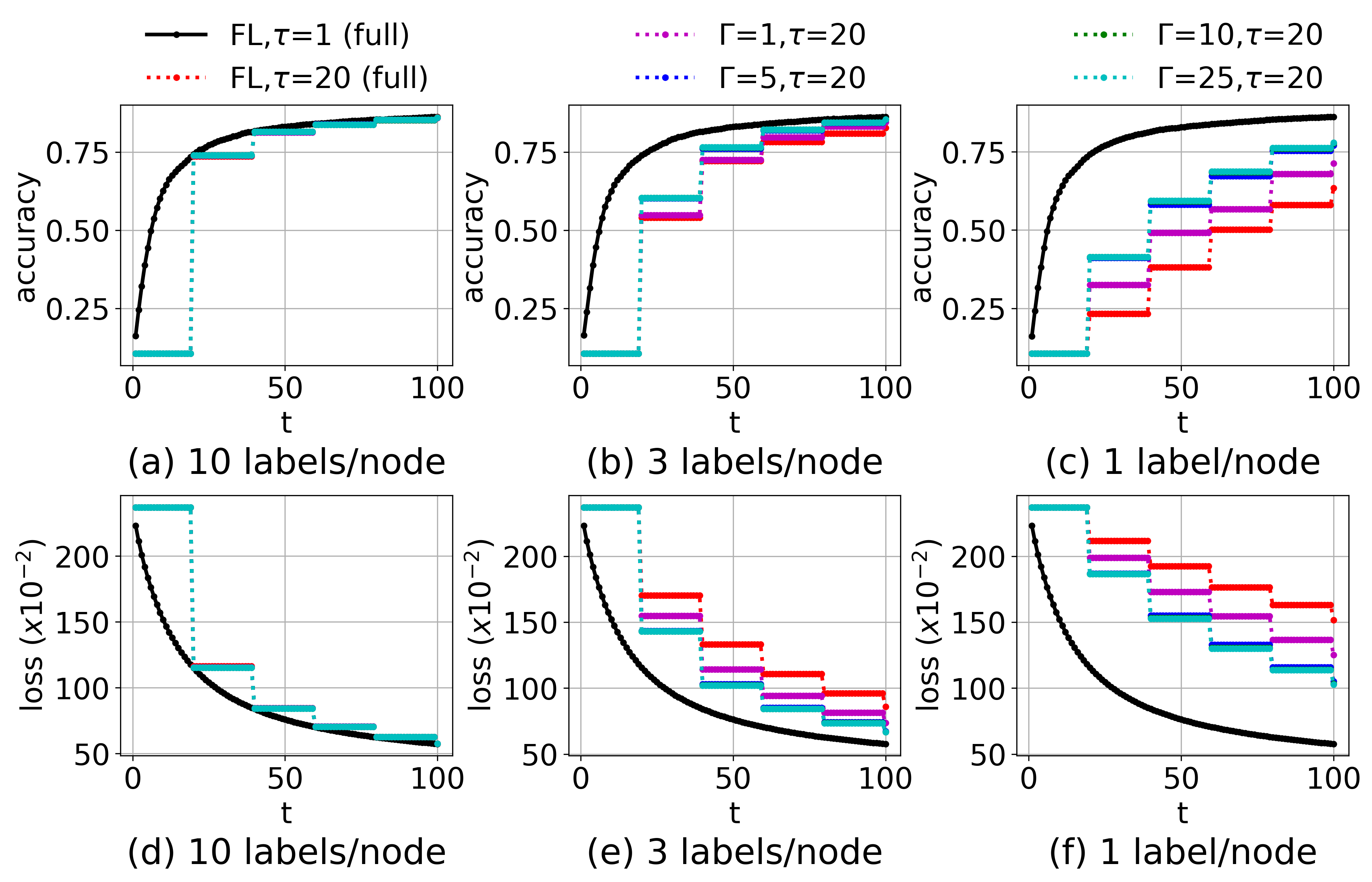}
\centering
\caption{Performance comparison between {\tt TT-HF} and baseline methods when varying the number of D2D consensus rounds ($\Gamma$). Under the same period of local model training ($\tau$), increasing $\Gamma$ results in a considerable improvement in the model accuracy/loss over time as compared to the current art~\cite{wang2019adaptive,Li} when data is non-i.i.d. (MNIST, Neural Network)}
\label{fig:mnist_nn_poc_1_all}
\vspace{-5mm}
\end{figure} 

\begin{figure}[t]
\includegraphics[width=1.0\columnwidth]{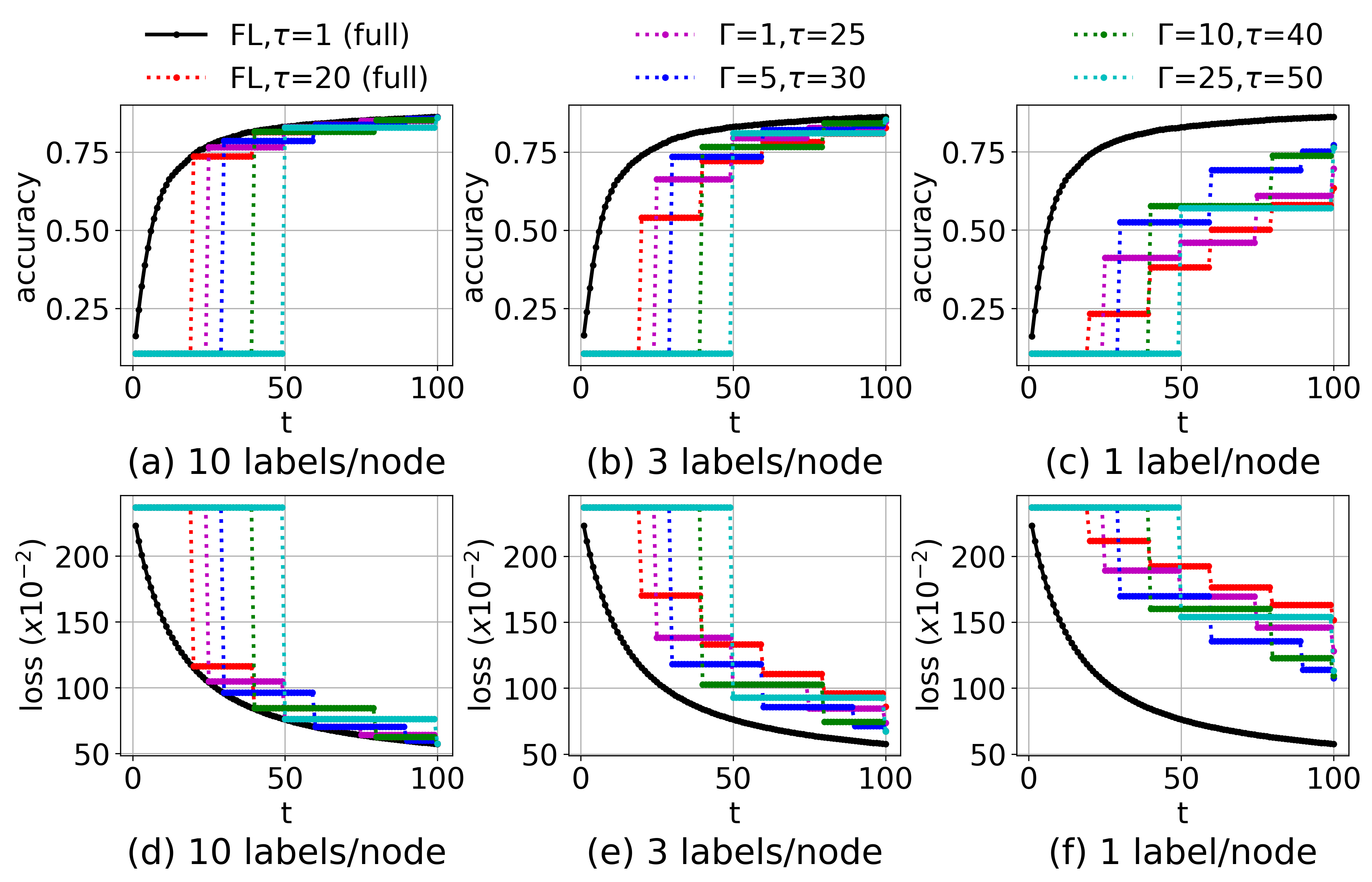}
\centering
\caption{Performance comparison between {\tt TT-HF} and baseline methods when varying the local model training interval ($\tau$) and the number of D2D consensus rounds ($\Gamma$). With a larger $\tau$, {\tt TT-HF} can still outperform the baseline federated learning~\cite{wang2019adaptive,Li} if $\Gamma$ is increased, i.e., local D2D communications can be used to offset the frequency of global aggregations. (MNIST, Neural Network)}
\label{fig:mnist_nn_poc_2_all}
\vspace{-5mm}
\end{figure}

\begin{figure}[t]
\includegraphics[width=1.0\columnwidth]{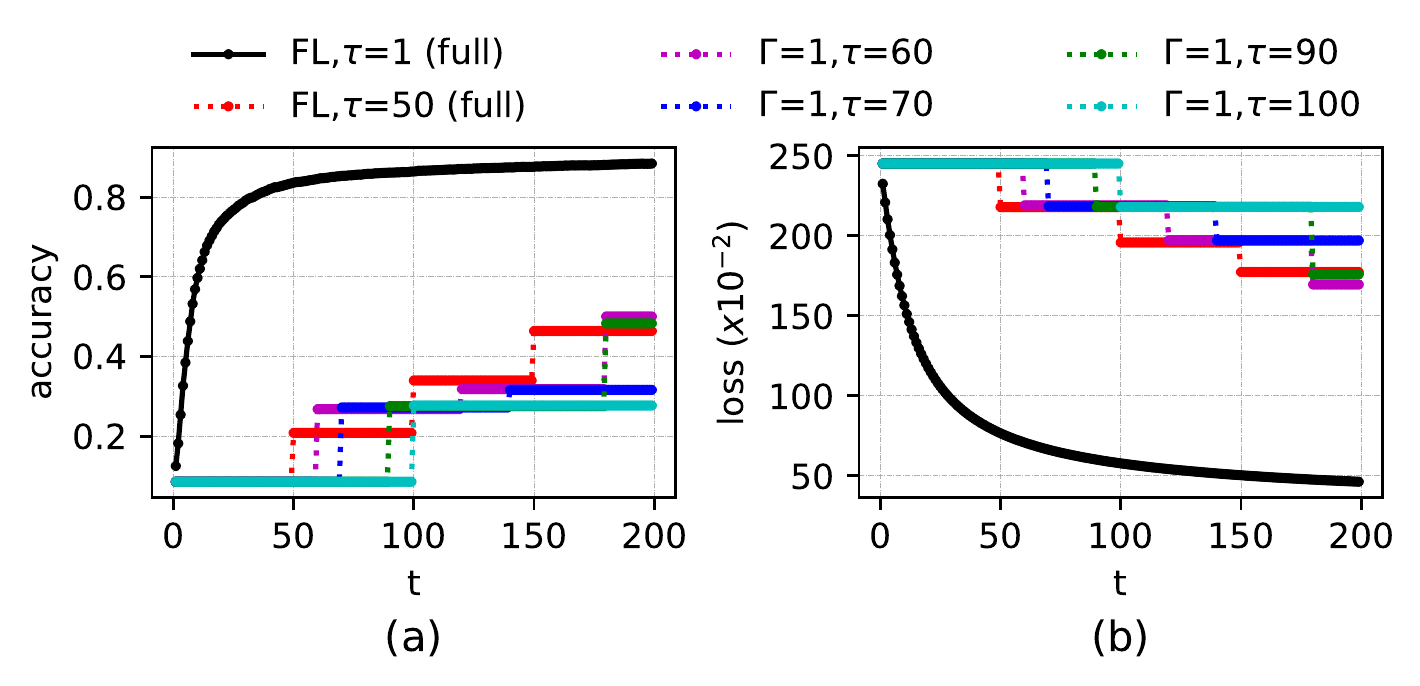}
\centering
\caption{Performance of {\tt TT-HF} in the extreme non-i.i.d. case for the setting in Fig.~\ref{fig:mnist_poc_2_all} when $\Gamma$ is small and the local model training interval length is increased substantially. {\tt TT-HF} exhibits poor convergence behavior when $\tau$ exceeds a certain value, due to model dispersion. (MNIST, Neural Network)}
\label{fig:mnist_nn_poc_3_iid_1_gamma_1_lut_50}
\vspace{-5mm}
\end{figure}

\begin{figure}[t]
\hspace{-3mm}
\includegraphics[width=0.99\columnwidth]{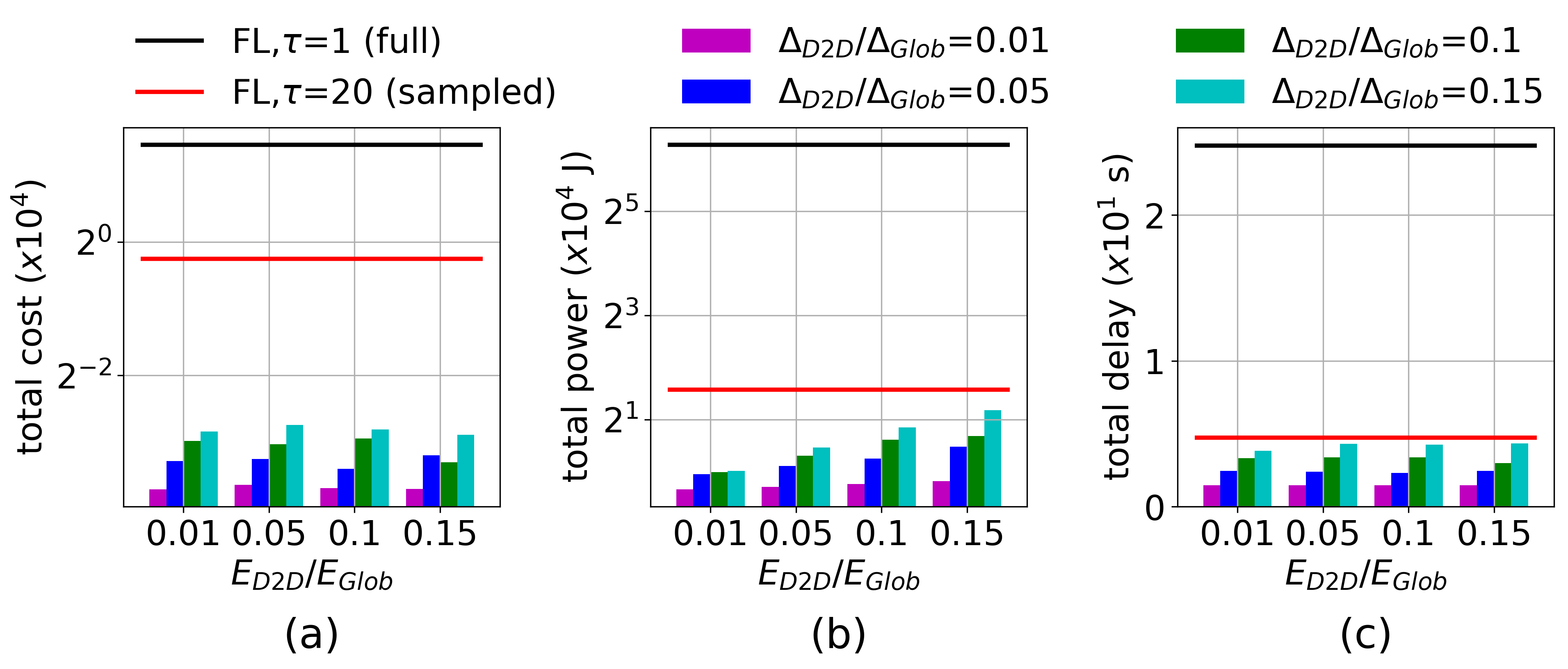}
\centering
\caption{Comparing total (a) cost, (b) power, and (c) delay metrics from the optimization objective in $(\bm{\mathcal{P}})$ achieved by {\tt TT-HF} versus baselines upon reaching $75\%$ of peak accuracy, for different configurations of delay and energy consumption. {\tt TT-HF} obtains a significantly lower total cost in (a). (b) and (c) demonstrate the region under which {\tt TT-HF} attains energy savings and delay gains. (FMNIST, SVM)}
\label{fig:mnist_nn_resource_bar_0}
\vspace{-5mm}
\end{figure}

\begin{figure}[t]
\includegraphics[width=1.0\columnwidth]{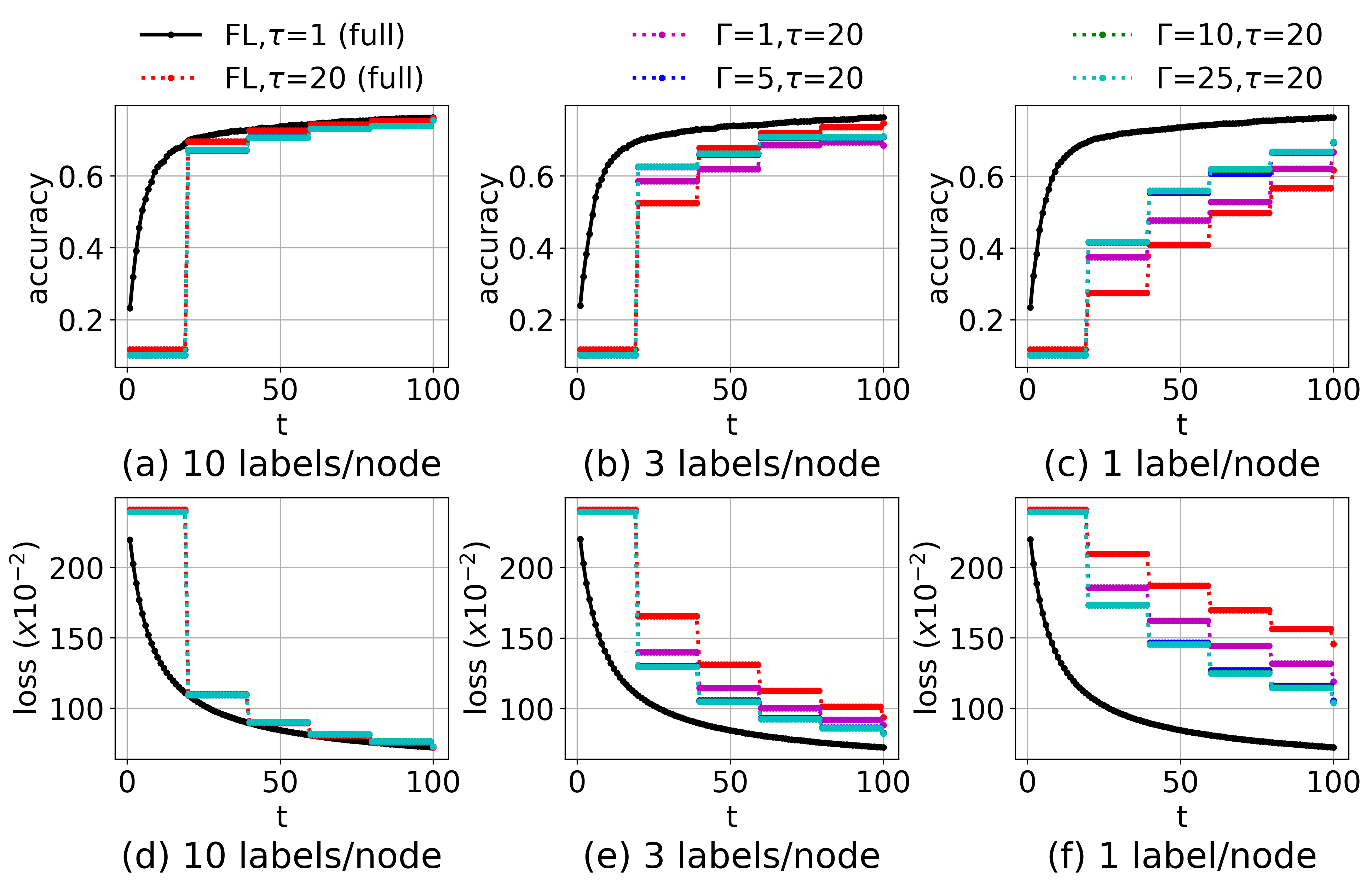}
\centering
\caption{Performance comparison between {\tt TT-HF} and baseline methods when varying the number of D2D consensus rounds ($\Gamma$). Under the same period of local model training ($\tau$), increasing $\Gamma$ results in a considerable improvement in the model accuracy/loss over time as compared to the current art~\cite{wang2019adaptive,Li} when data is non-i.i.d. (FMNIST, Neural Network)}
\label{fig:fmnist_nn_poc_1_all}
\vspace{-5mm}
\end{figure} 

\begin{figure}[t]
\includegraphics[width=1.0\columnwidth]{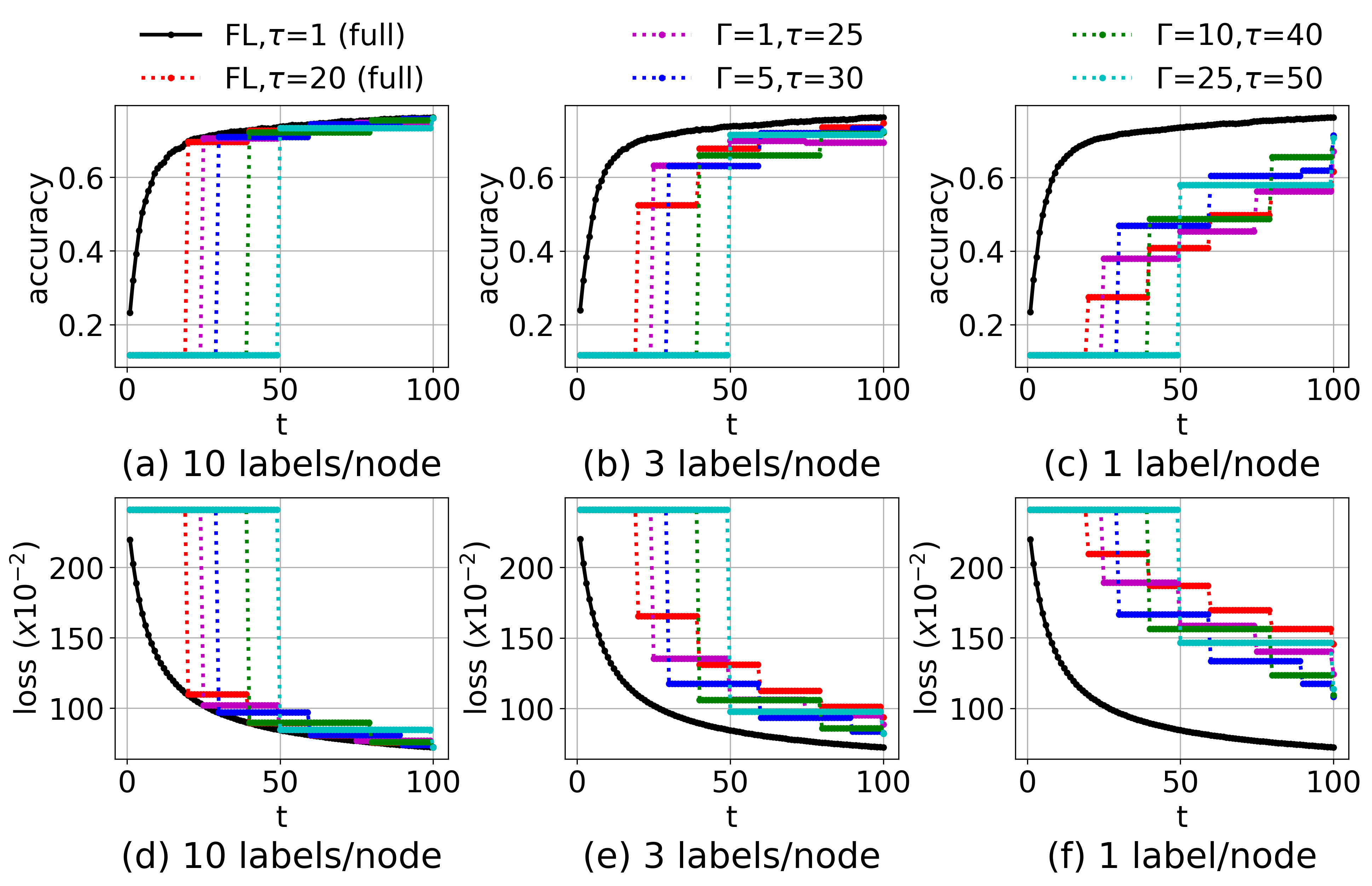}
\centering
\caption{Performance comparison between {\tt TT-HF} and baseline methods when varying the local model training interval ($\tau$) and the number of D2D consensus rounds ($\Gamma$). With a larger $\tau$, {\tt TT-HF} can still outperform the baseline federated learning~\cite{wang2019adaptive,Li} if $\Gamma$ is increased, i.e., local D2D communications can be used to offset the frequency of global aggregations. (FMNIST, Neural Network)}
\label{fig:fmnist_nn_poc_2_all}
\vspace{-5mm}
\end{figure}

\begin{figure}[t]
\includegraphics[width=1.0\columnwidth]{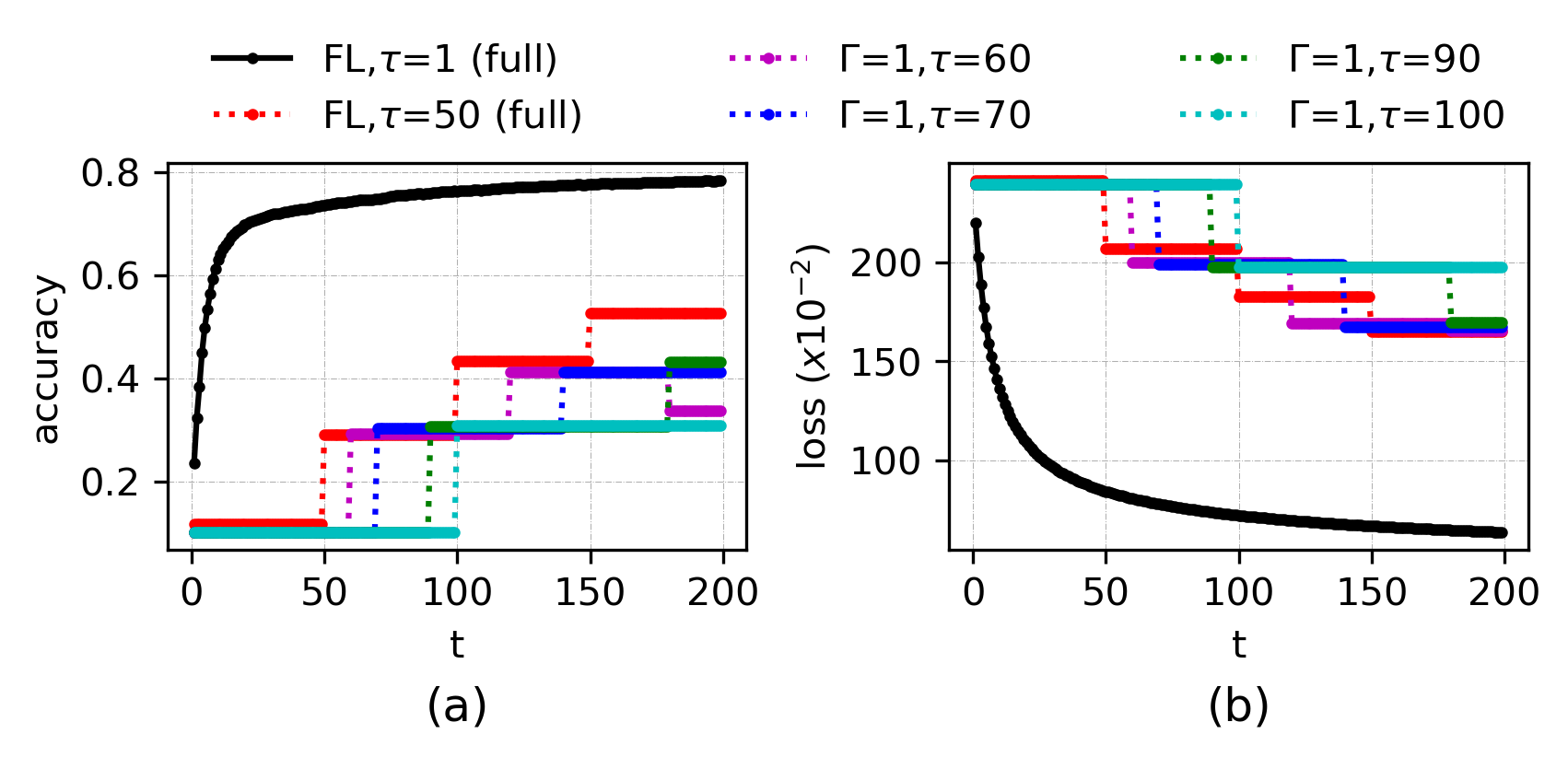}
\centering
\caption{Performance of {\tt TT-HF} in the extreme non-i.i.d. case for the setting in Fig.~\ref{fig:mnist_poc_2_all} when $\Gamma$ is small and the local model training interval length is increased substantially. {\tt TT-HF} exhibits poor convergence behavior when $\tau$ exceeds a certain value, due to model dispersion. (FMNIST, Neural Network)}
\label{fig:fmnist_nn_poc_3_iid_1_gamma_1_lut_50}
\vspace{-5mm}
\end{figure}

\begin{figure}[t]
\hspace{-3mm}
\includegraphics[width=0.99\columnwidth]{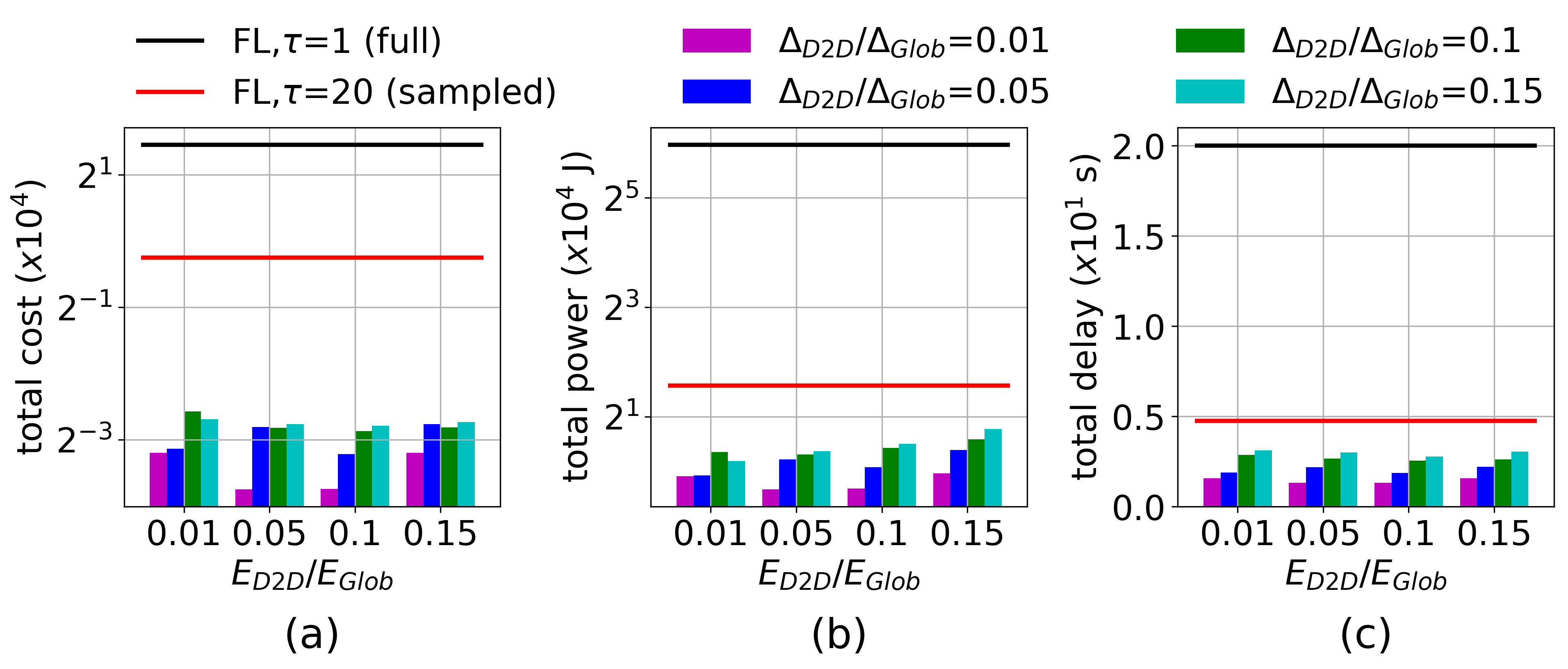}
\centering
\caption{Comparing total (a) cost, (b) power, and (c) delay metrics from the optimization objective in $(\bm{\mathcal{P}})$ achieved by {\tt TT-HF} versus baselines upon reaching $75\%$ of peak accuracy, for different configurations of delay and energy consumption. {\tt TT-HF} obtains a significantly lower total cost in (a). (b) and (c) demonstrate the region under which {\tt TT-HF} attains energy savings and delay gains. (FMNIST, Neural Network)}
\label{fig:fmnist_nn_resource_bar_0}
\vspace{-5mm}
\end{figure}

\begin{figure}[t]
\includegraphics[width=0.99\columnwidth]{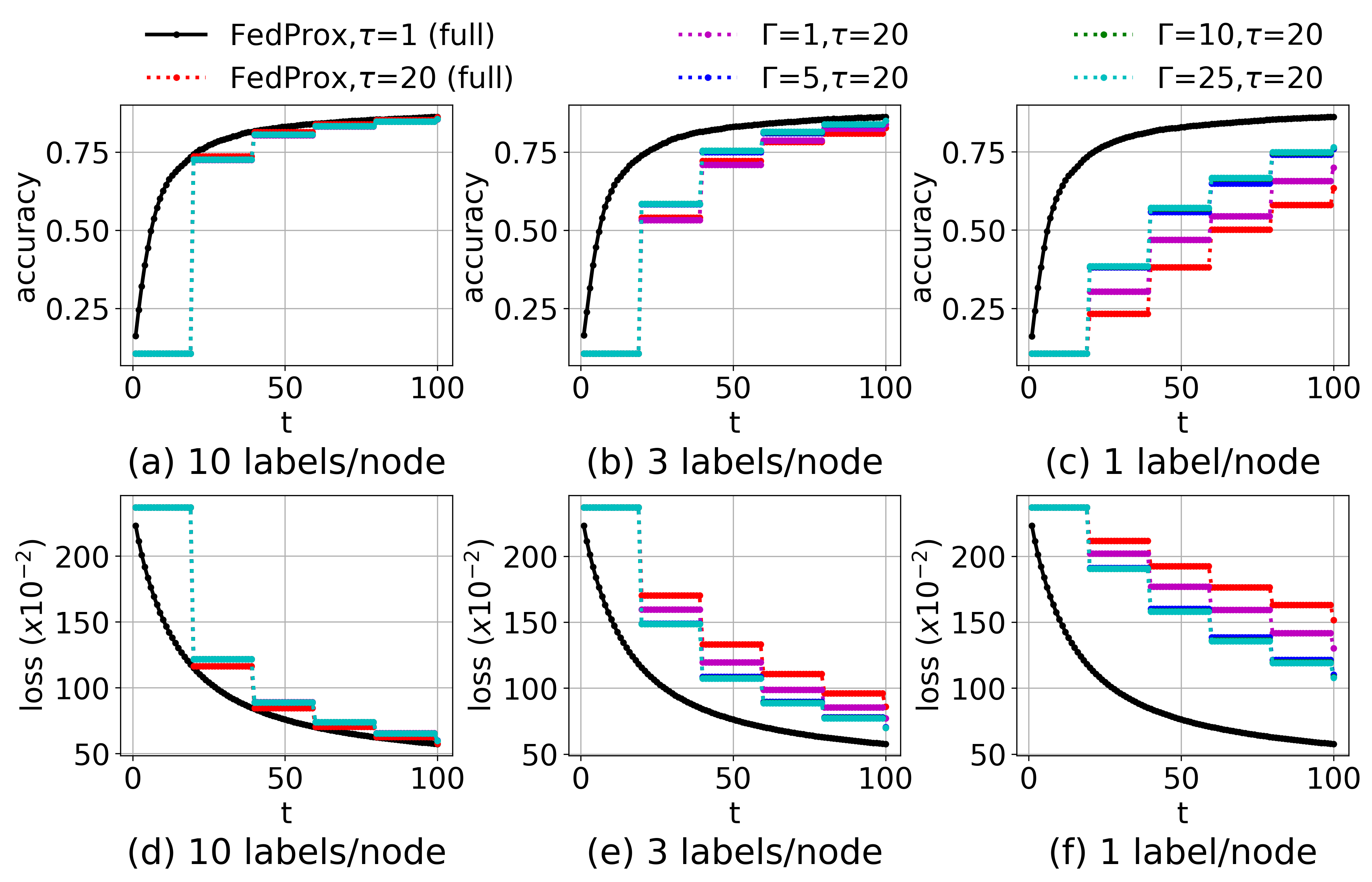}
\centering
\caption{Performance comparison between {\tt TT-HF} and FedProx~\cite{sahu2018convergence} when varying the number of D2D consensus rounds ($\Gamma$). Under the same period of local model training ($\tau$), increasing $\Gamma$ results in a considerable improvement in the model accuracy/loss over time as compared to the baseline when data is non-i.i.d. (MNIST, Neural Network)}
\label{fig:prox1_app}
\vspace{-5mm}
\end{figure}

\begin{figure}[t]
\includegraphics[width=0.99\columnwidth]{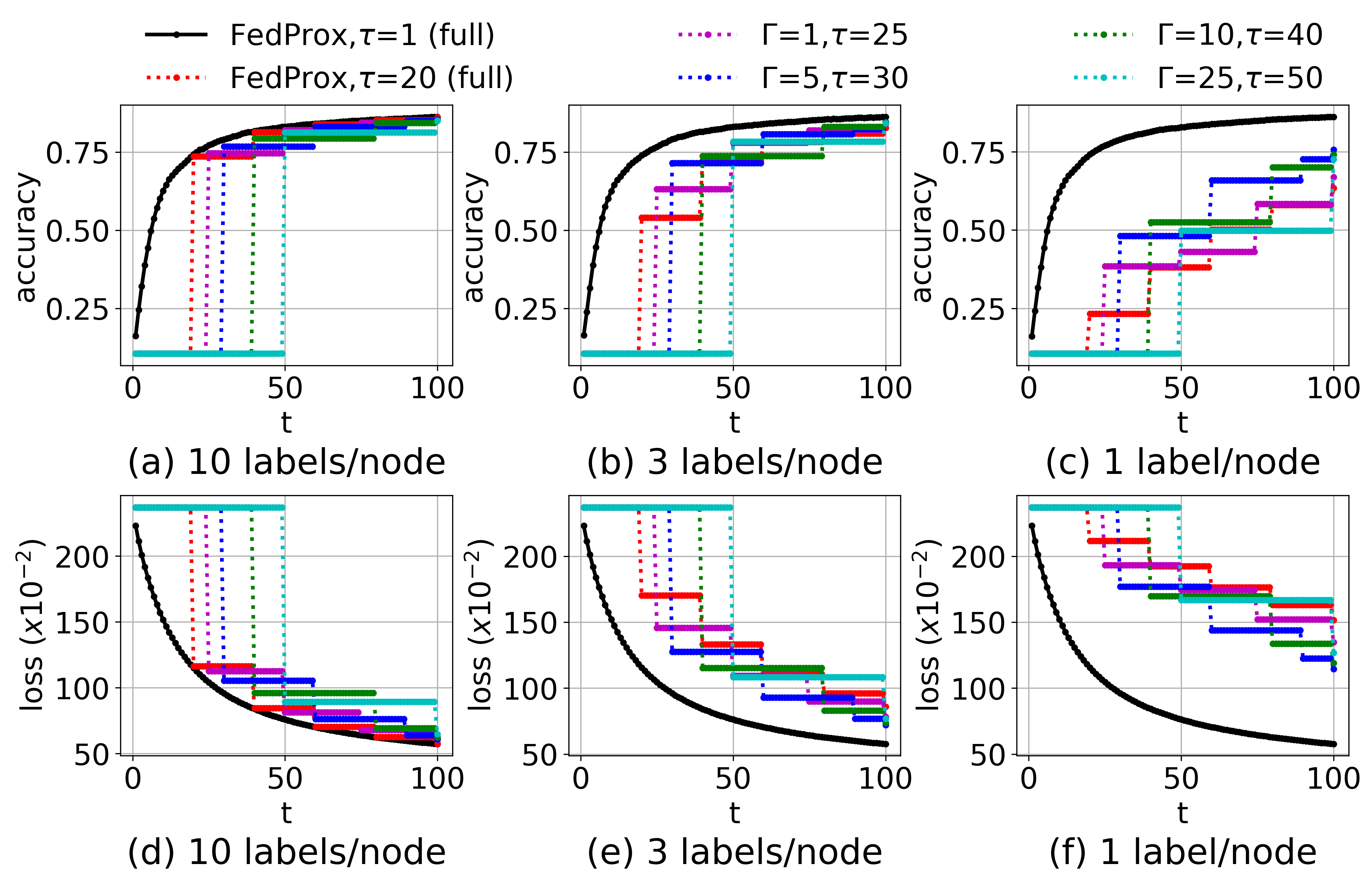}
\centering
\caption{Performance comparison between {\tt TT-HF} and FedProx~\cite{sahu2018convergence} when varying the local model training interval ($\tau$) and the number of D2D consensus rounds ($\Gamma$). With a larger $\tau$, {\tt TT-HF} can still outperform the baseline method if $\Gamma$ is increased, i.e., local D2D communications can be used to offset the frequency of global aggregations. (MNIST, Neural Network)}
\label{fig:prox2_app}
\vspace{-5mm}
\end{figure}

\endgroup

\end{document}